\documentclass[twoside]{article}
\usepackage{amsmath}
\usepackage[preprint]{jmlr2e}
\usepackage{epsfig}
\usepackage{natbib}
\usepackage{subcaption}         
\usepackage{amsfonts}      
\usepackage{xcolor}      
\usepackage{hyperref}
\usepackage{amsmath}  %
\usepackage{amssymb}  %
\usepackage{bm}       %
\usepackage{graphicx} %
\usepackage{hyperref} %
\usepackage{siunitx}  %
\usepackage{float}    %
\usepackage{caption}  %
\usepackage{subcaption} %
\usepackage{enumitem} %
\usepackage{booktabs} %
\usepackage{verbatim} %
\usepackage{multimedia}
\usepackage{xcolor}
\usepackage{algorithm}
\usepackage{algorithmic}
\usepackage{multirow}
\usepackage{stmaryrd}
\usepackage{tikz-cd}
\usepackage{bbm}
\usepackage{xstring}

\usepackage{pifont} %

\renewcommand{\hat}{\widehat}
\renewcommand{\bar}{\overline}

\newcommand{\normal}{\mathsf{N}}

\newcommand{\sA}{\mathsf{A}}
\newcommand{\E}{\mathbb{E}}

\newcommand{\sT}{\mathsf{T}}

\newcommand{\sFST}{\mathsf{F}^{\textrm {ST}}}

\newcommand{\pp}{\mathfrak{p}}

\newcommand{\fA}{\mathfrak{A}}
\newcommand{\fT}{\mathfrak{T}}
\newcommand{\fG}{\mathfrak{G}}

\newcommand{\sB}{\mathsf{B}}
\newcommand{\sG}{\mathsf{G}}

\renewcommand{\epsilon}{\varepsilon}

\newcommand{\cP}{\mathcal{P}}

\newcommand{\bbR}{\mathbb{R}}
\newcommand{\bbP}{\mathbb{P}}

\newcommand{\bbN}{\mathbb{N}}
\newcommand{\bbE}{\mathbb{E}}

\definecolor{light}{rgb}{0.5, 0.5, 0.5}

\definecolor{mypink1}{rgb}{0.858, 0.188, 0.478}

\definecolor{darkred}{rgb}{0.5,0,0}
\definecolor{darkgreen}{rgb}{0,0.75,0}

\newtheorem{assumption}{Assumption}

\newcommand{\lemusage}[1]{\par\noindent{\textbf{Usage note.} #1}}

\usepackage{lastpage}

\ShortHeadings{Measure Neural Mappings}{}
\firstpageno{1}

\title{Measure Neural Mappings}
\author{\name Ricardo Baptista \email r.baptista@utoronto.ca \\
       \addr 
    University of Toronto
       \AND
\name Edoardo Calvello \email ecalvello@lbl.gov \\
       \addr 
       Lawrence Berkeley National Laboratory
       \AND
       \name Bohan Chen \email bhchen@caltech.edu \\
       \addr 
       California Institute of Technology
       \AND
       \name Andrew Stuart \email astuart@caltech.edu \\
       \addr 
       California Institute of Technology
       \AND
       \name Urbain Vaes \email urbain.vaes@inria.fr \\
       \addr
       Inria Paris, \'Ecole des Ponts
       }

\editor{}

\begin{document}
\maketitle

\begin{abstract}%
We introduce and define measure neural mappings (MNMs), which combine measure-dependent neural transport with {ensemble consistency} for learning operators on probability measures. {Ensemble consistency requires outputs from growing ensembles sampled from an input law to converge to the specified population output in probability.} We establish quantitative continuum limits for attention on noncompact laws under strong sub-Gaussian and tail-integrability assumptions. These laws accommodate unbounded state spaces in scientific machine learning, including Bayesian inference and uncertainty quantification. For fixed self-attention transformers, the expected output Wasserstein error is bounded by the empirical input Wasserstein error plus an additional $N^{-1/2}$ attention term. {These self-attention transformers are MNMs under the stated assumptions: pre-norm architectures follow from compact stability after normalization, while post-norm architectures are covered under the stated tail conditions.} We prove corresponding cross-attention estimates and extend the results to multihead attention and grouped transformer evaluation, with convergence as the group size grows. For Set Transformers viewed as measure-to-vector mappings, we obtain {an $N^{-1/2}$ bound on the expected Euclidean output error under the stated assumptions}. Experiments on seven operators in dimensions $d=1,2,3,5,10$ support the predicted particle-count dependence, using one-dimensional Wasserstein computations and Sinkhorn approximations in higher dimensions. The observed Wasserstein and sliced Wasserstein decay rates are close to their respective theoretical sampling benchmarks.
\end{abstract}

\section{Introduction}
\label{sec:intro}

Ensemble data assimilation and Bayesian inverse problems require operators that transform probability laws. In computation, these laws are often represented by finite particle ensembles. Learning an operator from such data therefore requires consistency across different samples and ensemble sizes: a change in the particle representation should not produce a persistent change in the predicted law. Neural operators address a related requirement for discretizations of functions. Here we formulate and analyze its counterpart for probability measures.

We introduce measure neural mappings (MNMs) for learning nonlinear operators on probability measures on Euclidean spaces. Their neural transport maps depend on the input law, allowing nonlinear transformations of measures. {MNMs require ensemble consistency: output measures from growing ensembles sampled from an input law converge to the specified population output in probability.} These requirements permit a single parameterization to act consistently on probability laws and their particle representations.

{Our central result is that fixed finite, permutation-equivariant transformer stacks that transport the input particles are MNMs under the stated architecture and input-law assumptions.} We establish quantitative convergence as the ensemble size increases. Our main extension beyond compact-support theory treats noncompact input laws under explicit regularity and tail assumptions. Such laws arise naturally in scientific machine learning: Gaussian priors, uncertain initial conditions, and filtering distributions on unbounded state spaces cannot be enclosed in a fixed compact set without truncation. {For pre-norm stacks with normalized query, key, and value branches and Lipschitz feed-forward maps, finite $p$th moments suffice, by compact stability after normalization (Remark~\ref{rem:post_pre_norm}). Post-norm stacks are covered under the stated strong sub-Gaussian and tail-integrability conditions (Theorem~\ref{thm:transformer_stack_limit}).} We also study grouped evaluation and measure-to-vector Set Transformers, and compare the predicted sampling rates with numerical experiments.

\subsection{Literature Review}
\label{subsec:litreview}

Attention enters the present problem at two levels. For a finite ensemble, it defines interactions among samples. At the population level, identifying the ensemble with its empirical measure turns the same computation into a measure-dependent operator. We first review attention and its main applications, then discuss formulations on spaces of probability measures, and finally summarize the theoretical results most closely related to the continuum limit studied here.

\subsubsection{Attention and Transformer Architectures}
\label{subsubsec:attention}

Neural attention was introduced in sequence-to-sequence models as a data-dependent aggregation of encoder states \citep{bahdanau2015neural}. The Transformer made self-attention the main interaction mechanism and removed recurrence from the architecture \citep{vaswani2017attention}. Applications include language representation, generation, and instruction following \citep{devlin2019bert,brown2020language,ouyang2022training,openai2022chatgpt,openai2023gpt4,touvron2023llama}, as well as image recognition, object detection, vision--language learning, and image segmentation \citep{dosovitskiy2021image,liu2021swin,carion2020end,radford2021learning,kirillov2023segment}.

For unordered inputs, the relevant structural property is permutation symmetry. Deep Sets and Set Transformers provide architectures for permutation-invariant and permutation-equivariant learning on finite collections \citep{zaheer2017deep,lee2019set}. In particular, self-attention without positional information is permutation equivariant and can be evaluated on different numbers of tokens. This makes it suitable for processing particle ensembles, although permutation equivariance alone does not guarantee convergence as the ensemble grows.

\subsubsection{Attention and Operators on Probability Measures}
\label{subsubsec:measure_operators}

Once a token collection is represented by its empirical measure, both the attention weights and the token updates depend on the full empirical law. Two early formulations made this observation explicit: \citet{vuckovic2020mathematical} treated attention as a nonlinear operator on probability measures, while \citet{sander2022sinkformers} defined softmax and Sinkhorn attention for arbitrary input measures and recovered the usual finite-token formulas on empirical measures.

Related constructions learn solution, transport, or evolution operators indexed by probability laws in distribution-on-distribution regression, meta optimal transport, mean-field games, optimal transport, and Wasserstein gradient flows \citep{ghodrati2022distribution,amos2023meta,huang2025unsupervised,cole2026incontext,feng2026learn,geshkovski2026measure,vandergrift2026measure}. Ensemble Bayesian filtering provides a direct application because both the forecast and analysis distributions are represented by finite ensembles; recent transformer-based filters therefore use a single permutation-equivariant parameterization across different ensemble sizes \citep{bach2026learning,bach2026proper}.

\subsubsection{Related Theoretical Results}
\label{subsubsec:theory}

One group of results concerns stability in the input law and convergence from empirical to population attention. Under compactness and regularity assumptions, the Wasserstein stability estimate of \citet{vuckovic2020mathematical}, together with standard convergence of empirical measures, gives the compact-support case of the continuum limit considered here. Quantitative Lipschitz and propagation-of-chaos estimates have since been developed for individual attention layers and deep contextual flows \citep{castin2024smooth,chen2026propagation}. Closely related sampling-consistency results appear in attention-based solution operator learning and in the mean-field analysis of learned filtering objectives \citep{huang2025unsupervised,bach2026proper}.

{
\citet{bohbot2026token} study how attention parameters and token geometry determine token sample complexity, with convergence estimates for attention maps and output statistics under sub-Gaussian laws. Our results concern convergence of the full empirical output measures in expected $W_p$, with separate input-sampling and $N^{-1/2}$ attention terms under explicit tail and regularity assumptions on noncompact inputs. The estimates follow the same sampled ensemble through a transformer stack, accounting for the dependence introduced by successive attention layers.\par
}

{In concurrent work, \citet{cole2026stability} establish $W_1$ bounds for measure-to-measure transformers that are closely related to our results; we additionally analyze full transformer architectures incorporating layer normalization and provide numerical experiments using Wasserstein and sliced Wasserstein distances.}

A second line of theory treats network depth as a time variable and interprets self-attention as an interacting-particle system. The corresponding mean-field equations have been used to study clustering, metastability, normalization, long-time behavior, and different attention scalings \citep{geshkovski2023emergence,bruno2025metastable,chen2025quantitative,burger2025analysis,castin2025unified,rigollet2025meanfield,geshkovski2025mathematical}. These works use a population-level description similar to the one considered here, but their main asymptotic variable is depth or time rather than the number of samples representing the input law.

A third line of work concerns expressivity. Transformers can approximate continuous in-context maps and broad classes of mean-field vector fields \citep{furuya2025universal,biswal2025universal}, and can approximately interpolate finitely many prescribed pairs of probability measures \citep{geshkovski2026measure}. In concurrent work, \citet{furuya2026pushforwards} establish universal approximation by measure-dependent pushforwards under suitable input assumptions. More generally, \citet{lavenant2026continuous} give conditions under which a continuous transformation of probability measures admits a measure-dependent transport representation, which is the structural form used by the pushforward operators studied in this paper.

The broader operator-learning literature provides a useful comparison. Random-feature operators and neural operators approximate nonlinear maps between function spaces while sharing parameters across discretizations \citep{nelsen2021random,kovachki2023neural,calvello2025continuum}. Complementary approximation and statistical results for transport between probability distributions have been developed using neural differential equations \citep{marzouk2024distribution,marzouk2025minimal}. In the present setting, spatial discretization is replaced by sampling: the learned operator should remain consistent when the same input law is represented by a different empirical ensemble. 

\subsection{Contributions and Outline}
\label{subsec:contributions}

We develop a framework for operator learning on probability measures and establish the {ensemble consistency} of attention-based architectures. Our contributions are as follows.

\begin{enumerate}
    \item {(Measure Neural Mappings) We define an MNM as a measure-dependent neural pushforward satisfying ensemble consistency: for each fixed network parameter and admissible input law, the empirical output converges in probability to the specified population output.}
    \item {(Transformers as Measure Neural Mappings) We prove that fixed finite transformer stacks are MNMs when they are permutation equivariant, transport the input particles, and satisfy the stated assumptions. The results include residual connections, Lipschitz feed-forward maps, layer normalization, and any fixed finite number of attention heads. For pre-norm stacks with normalized query, key, and value branches, finite $p$th moments suffice. Post-norm stacks are covered under the stated strong sub-Gaussian and tail-integrability conditions.}
    \item {(Quantitative Limits on Noncompact Laws) We extend compact-support attention theory to noncompact laws under explicit strong sub-Gaussian and tail-integrability conditions. For self-attention, the expected output $W_p$ error is bounded by $C\bbE W_p(\mu^N,\mu)+CN^{-1/2}$. We prove corresponding cross-attention estimates, including an $N^{-1/2}$ bound when prescribed queries attend to sampled keys. The stack analysis propagates these bounds without assuming independence of the intermediate particles.}
    \item {(Grouped Evaluation and Measure-to-Vector Outputs) For grouped self-attention stacks with $G$ groups of $N/G$ particles, we bound the attention error by $C\sqrt{G/N}$ while retaining the input sampling error from all $N$ particles; convergence holds when $G=o(N)$. For Set Transformers with a self-attention encoder, fixed-seed pooling, and a Lipschitz readout, we prove an $N^{-1/2}$ expected feature-error bound under the stated hypotheses.}
    \item {(Numerical Convergence Rates) Experiments on seven operators in dimensions $d=1,2,3,5,10$ support the predicted Wasserstein sampling rates, using $W_2$ computations in one dimension and Sinkhorn approximations in higher dimensions. Sliced Wasserstein errors decay close to $N^{-1/2}$ across all tested dimensions; this rate is an empirical finding beyond the $W_p$ bounds proved here.}
\end{enumerate}

Section~\ref{sec:MNM} defines MNMs. Section~\ref{sec:transformer_mnm} develops the attention and transformer convergence results, including grouped evaluation, the multihead extension, and Set Transformers. Section~\ref{sec:experiments} tests the rates numerically, and Section~\ref{sec:conclusion} discusses limitations and future directions. The appendices provide supporting estimates and full experimental details.

\subsection{Notation}
\label{subsec:Notation}

We write $\bbR$ for the real numbers and $\bbN=\{1,2,\ldots\}$ for
the positive integers. For $N\in\bbN$, let $[N]=\{1,2,\ldots,N\}$.
We use $\langle\cdot,\cdot\rangle$ for the Euclidean inner product and
$\|\cdot\|_2$ for the Euclidean vector norm or the induced spectral norm
of a matrix. The closed ball of radius $R$ centered at the origin is
$B_R(0)$. Constants denoted by $C$ may change between estimates; their
permitted dependencies are stated where needed.

For a metric space $E$, let $\cP(E)$ denote the Borel probability measures
on $E$, equipped with the topology of weak convergence. We write
$X\sim\mu$ when $X$ has law $\mu$, and use $\bbP$ and $\bbE$ for
probability and expectation. For $p\geq1$, the space
$\cP_p(\bbR^d)$ consists of the laws $\mu$ with finite moment
$\int_{\bbR^d}\|x\|_2^p\mu(dx)$. On this space we use the
Wasserstein distance
\begin{equation}
    W_p(\mu,\nu)
    =\left(\inf_{\pi\in\Pi(\mu,\nu)}
    \int_{\bbR^d\times\bbR^d}\|x-y\|_2^p\pi(dx,dy)\right)^{1/p},
\end{equation}
where $\Pi(\mu,\nu)$ is the set of probability measures with marginals
$\mu$ and $\nu$.

For a measurable map $T$, the pushforward $T_\sharp\mu$ is the law of
$T(X)$ when $X\sim\mu$. We denote the Dirac mass at $x$ by $\delta_x$
and the Gaussian law with mean $m$ and covariance $\Sigma$ by
$\normal(m,\Sigma)$. Sans-serif symbols such as $\sG$, $\sA$, and
$\sB$ denote operators acting on probability measures; their outputs
may be measures or feature vectors. Fraktur symbols such as $\fG$
and $\fA$ denote the associated pointwise transport maps when a
pushforward representation is used.

{In a map written as $F:X\to Y$, $X$ denotes the ambient input space; the actual domain may be restricted to a subset of $X$ on which the defining expressions are well defined.}

The noncompact attention estimates use the following exponential-tail
and sub-Gaussian classes.

\begin{definition}[$\alpha$-Exponential Tail Decay]
\label{def:alpha_exp_tail_decay}
For $\alpha,\tau>0$, let $\cP_{\alpha,\tau}(\bbR^d)$ consist of the
laws $\mu\in\cP(\bbR^d)$ satisfying
\begin{equation}
    \mu\bigl(\{x:\|x\|_2\geq t\}\bigr)
    \leq 2\exp(-t^\alpha/\tau),\qquad t\geq0.
\end{equation}
\end{definition}

\begin{definition}[Moment and Cumulant Generating Functions]
\label{def:MGF_CGF}
For $\mu\in\cP(\bbR^d)$, its moment generating function and cumulant
generating function are
\begin{equation}
    \mathcal M_\mu(\lambda)
    =\int_{\bbR^d}e^{\langle\lambda,x\rangle}\mu(dx),\qquad
    \psi_\mu(\lambda)=\log\mathcal M_\mu(\lambda),
\end{equation}
wherever the integral is finite. For $X\sim\mu$, we also write
$\psi_X=\psi_\mu$.
\end{definition}

\begin{definition}[Sub-Gaussian]
\label{def:subgaussian}
For $a>0$, a law $\mu\in\cP_1(\bbR^d)$ belongs to
$\mathcal{SG}_a(\bbR^d)$ if, for $X\sim\mu$,
\begin{equation}
    \bbE\exp\bigl(\langle\lambda,X-\bbE X\rangle\bigr)
    \leq\exp\left(\frac{a^2\|\lambda\|_2^2}{2}\right),
    \qquad \lambda\in\bbR^d.
\end{equation}
We then call $X$ and its law sub-Gaussian with parameter $a$.
\end{definition}

\begin{definition}[Strong Sub-Gaussian]
\label{def:strong_sub_gaussian}
For $a>0$, a law $\mu\in\cP(\bbR^d)$ belongs to
$\mathcal{SSG}_a(\bbR^d)$ if its cumulant generating function is finite
and twice continuously differentiable on $\bbR^d$, with
\begin{equation}
    \|\nabla^2\psi_\mu(\lambda)\|_2\leq a^2,
    \qquad \lambda\in\bbR^d.
\end{equation}
We then call $\mu$ strong sub-Gaussian with parameter $a$.
\end{definition}

A random probability measure is a measurable random variable taking values
in a space of probability measures, with the Borel sigma-algebra of the
topology specified above. This includes the random input and output measures of
Section~\ref{sec:MNM} and the empirical measures
$\mu^N=N^{-1}\sum_{i=1}^N\delta_{X_i}$ used throughout the convergence
analysis. When $X_1,\ldots,X_N$ are i.i.d. with law
$\mu\in\cP_p(\bbR^d)$, the measure $\mu^N$ is random even though $\mu$
is fixed. An expression such as $\bbE W_p(\mu^N,\mu)$ averages over
these samples and measures the expected sampling error.

\section{Measure Neural Mappings}
\label{sec:MNM}

In this paper we consider the task of approximating operators of the form $\sG^\dagger:\cP(\bbR^{d_1})\to\cP(\bbR^{d_2})$. Emphasis is given to the general setting of measures supported on a possibly unbounded domain; this is of particular interest for practical settings like Bayesian inference. We consider the standard data scenario where we assume access to a set $\{\mu^{(j)}, \pi^{(j)} \}_{j=1}^J$ of input-output observation pairs such that $\bigl(\mu^{(j)},\pi^{(j)}\bigr)\sim\nu$ is an i.i.d. sequence distributed according to the measure $\nu$, supported on $\cP(\bbR^{d_1})\times \cP(\bbR^{d_2}) $. We let $\nu_\mu$ and $\nu_\pi$ denote the marginals of $\nu$ on $\cP(\bbR^{d_1})$ and $\cP(\bbR^{d_2})$ respectively, so that $\mu^{(j)}\sim \nu_\mu$ and $\pi^{(j)}\sim \nu_\pi$. In what follows, we consider
\begin{equation}
\label{eq:data_model}
\pi^{(j)} = \sG^\dagger\bigl(\mu^{(j)}\bigr).
\end{equation}
We
assume for simplicity that the data generation mechanism is compatible with \eqref{eq:data_model} so that $\nu_\pi = \sG^\dagger_\sharp \nu_\mu$, i.e., the data generation mechanism is not corrupted by noise. Given this setting, we may consider using supervised learning to construct an approximation of $\sG^\dagger$ picked from the parametric class
$$\sG:\cP(\bbR^{d_1})\times {\mathcal H} \to \cP(\bbR^{d_2}),$$ or
equivalently, $\sG_\theta:\cP(\bbR^{d_1})\to\cP(\bbR^{d_2}),$ for $\theta\in{\mathcal H}$. The set ${\mathcal H}$ is assumed to be a finite dimensional parameter space from which a $\theta^*\in{\mathcal H}$ is selected so that $\sG(\cdot,\theta^*) = \sG_{\theta^*}\approx \sG^\dagger$. In order to do so, we define a cost functional $c:\cP(\bbR^{d_2})\times\cP(\bbR^{d_2})\to \bbR$ and define $\theta^*$ by
\begin{equation}
\label{eq:data_assimilation_model}
\theta^*={\arg \min}_{\theta\in{\mathcal H}}\E_{\mu\sim\nu_\mu}\Bigl[ c\bigl(\sG(\mu,\theta), \sG^\dagger(\mu) \bigr)\Bigr].
\end{equation}
{In practice, we observe only samples $\{u_i^{(j)}\}_{i=1}^N$ and $\{z_i^{(j)}\}_{i=1}^N$ from each input-output pair. We therefore seek a measure-dependent pointwise network $\fG(\cdot;\mu,\theta)$ whose pushforward approximates the target operator:
\[
    \sG_\theta(\mu)=\fG(\cdot;\mu,\theta)_\sharp\mu
    \approx\sG^\dagger(\mu).
\]
This representation allows training and evaluation directly on particle ensembles.}
The learned parameters $\theta^*$ should be shared across particle representations of the same input law. {As an ensemble grows, its output measure should converge to the specified population output. We call this property \emph{ensemble consistency};} it is the sampling analogue of discretization invariance for neural operators.

{
\begin{definition}[Ensemble Consistency]
\label{def:ensemble_consistency}
Given a metric $\mathrm{d}$, for each fixed parameter $\theta$, an operator $\sG_\theta:\cP(\bbR^{d_1})\to\cP(\bbR^{d_2})$ is \emph{ensemble consistent} on an admissible class of input laws if, for each law $\mu$ in that class and each empirical measure $\mu^N = \frac1N\sum_{i=1}^N\delta_{u_i}$ formed from $u_i\sim\mu$ i.i.d., the output evaluated on the empirical input converges to the specified population output:
\begin{equation}\label{eq:ensemble_consistency}
    \mathrm d\bigl(\sG_\theta(\mu^N),\sG_\theta(\mu)\bigr)
    \xrightarrow{\mathbb P}0,
    \qquad N\to\infty.
\end{equation}
The operator is defined on both $\mu$ and its empirical representations.
\end{definition}
}
{For Wasserstein-$p$ convergence we work in $\cP_p$. {The estimates below establish convergence of the expected distance, which implies the convergence in probability required in Definition~\ref{def:ensemble_consistency}: by Markov's inequality, for every $\varepsilon>0$,}
{
\[
    \mathbb P\!\left(W_p\bigl(\sG_\theta(\mu^N),\sG_\theta(\mu)\bigr)>\varepsilon\right)
    \leq \frac{\bbE W_p\bigl(\sG_\theta(\mu^N),\sG_\theta(\mu)\bigr)}{\varepsilon}
    \longrightarrow 0.
\]
}}

{
\noindent\textbf{Remark (Pairwise ensemble invariance).}
Ensemble consistency implies that for any two empirical measures $\mu^N$ and $\widetilde\mu^M$ formed from i.i.d. samples of the same law $\mu$,
\begin{equation}
    \label{eq:ensemble_inv}
    \mathrm{d}\Bigl(\sG_\theta\bigl(\mu^N\bigr), \sG_\theta\bigl({\widetilde\mu^M}\bigr)\Bigr){\xrightarrow{\mathbb P}} 0,\quad \text{as }N,M\to\infty.
\end{equation}
This follows by the triangle inequality through $\sG_\theta(\mu)$. We call this consequence \emph{pairwise ensemble invariance}; alone, it does not identify the limit with $\sG_\theta(\mu)$.
}

A fixed Lipschitz neural network already gives an {ensemble-consistent} pushforward operator in Wasserstein-$p$ distance.

{\begin{proposition}[Lipschitz Neural Network]
\label{prop:lipschitz_nn}
Fix $p\geq1$ and $\theta\in\mathcal H$. If $\fG(\cdot;\theta):\bbR^{d_1}\to\bbR^{d_2}$ is globally Lipschitz, then
\[
    \sG_\theta(\mu)=\fG(\cdot;\theta)_\sharp\mu,
    \qquad \mu\in\cP_p(\bbR^{d_1}),
\]
is {ensemble consistent} under $W_p$.
\end{proposition}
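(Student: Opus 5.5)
The plan is to reduce ensemble consistency to the convergence of empirical measures in $W_p$, via a coupling argument showing that pushforward by an $L$-Lipschitz map is $L$-Lipschitz in $W_p$. Write $L$ for the Lipschitz constant of $\fG(\cdot;\theta)$. Since $\theta$ is fixed throughout, abbreviate $\fG=\fG(\cdot;\theta)$.

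\emph{Well-definedness.} First check that $\sG_\theta$ maps $\cP_p(\bbR^{d_1})$ into $\cP_p(\bbR^{d_2})$. From $\|\fG(x)\|_2\le \|\fG(0)\|_2+L\|x\|_2$ we get
\[
\int\|y\|_2^p\,\fG_\sharp\mu(dy)\le 2^{p-1}\Bigl(\|\fG(0)\|_2^p+L^p\int\|x\|_2^p\mu(dx)\Bigr)<\infty .
\]
The same bound applies trivially to empirical measures, so every $W_p$ distance appearing below is finite.

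\emph{Lipschitz stability of the pushforward.} For $\mu,\nu\in\cP_p(\bbR^{d_1})$, take any coupling $\pi\in\Pi(\mu,\nu)$. Then $(\fG\times\fG)_\sharp\pi\in\Pi(\fG_\sharp\mu,\fG_\sharp\nu)$, and
\[
W_p^p(\fG_\sharp\mu,\fG_\sharp\nu)\le\int\|\fG(x)-\fG(y)\|_2^p\,\pi(dx,dy)\le L^p\int\|x-y\|_2^p\,\pi(dx,dy).
\]
Taking the infimum over $\pi$ gives $W_p(\sG_\theta(\mu),\sG_\theta(\nu))\le L\,W_p(\mu,\nu)$.

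\emph{Empirical convergence.} Apply this with $\nu=\mu^N$. It then suffices to show $W_p(\mu^N,\mu)\to0$ in probability for $\mu\in\cP_p$ and i.i.d.\ samples. I expect this to be the only nontrivial step, and the plan is to handle it with a standard argument rather than quantitative rates, which would require extra moments. By Varadarajan's theorem, $\mu^N\to\mu$ weakly almost surely. By the strong law of large numbers, $\int\|x\|_2^p\,\mu^N(dx)\to\int\|x\|_2^p\,\mu(dx)$ almost surely. Weak convergence together with convergence of $p$th moments is equivalent to $W_p$ convergence (Villani, \emph{Optimal Transport}, Thm.~6.9). Hence $W_p(\mu^N,\mu)\to0$ almost surely, and therefore in probability. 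Combining this with the stability estimate yields \eqref{eq:ensemble_consistency} with $\mathrm d=W_p$.

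\emph{Expected-distance version.} If one also wants convergence in expectation, matching the Markov-inequality remark above, note that $W_p^p(\mu^N,\mu)\le 2^{p-1}\bigl(\int\|x\|_2^p\,\mu^N(dx)+\int\|x\|_2^p\,\mu(dx)\bigr)$. The right-hand side is uniformly integrable, since it is an average of i.i.d.\ integrable variables plus a constant. Vitali's theorem then gives $\bbE\,W_p(\mu^N,\mu)\to0$, but this is not required for the proposition as stated.
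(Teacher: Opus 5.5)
Your proposal is correct and follows the paper's proof: the pushforward coupling gives $W_p(\sG_\theta(\mu^N),\sG_\theta(\mu))\le L\,W_p(\mu^N,\mu)$, and almost-sure (hence in-probability) convergence of empirical measures in $W_p$ for $\mu\in\cP_p$ finishes the argument. The paper cites that convergence as standard, whereas you supply the details: Varadarajan's theorem with the strong law for $p$th moments, and uniform integrability for the expected-distance version.
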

\begin{proof}
Writing $C=\mathrm{Lip}(\fG(\cdot;\theta))$, the pushforward coupling gives
\[
    W_p(\sG_\theta(\mu^N),\sG_\theta(\mu))\leq C W_p(\mu^N,\mu).
\]
For $\mu\in\cP_p$, empirical measures converge to $\mu$ in $W_p$ almost surely and in expected distance. {The displayed bound proves the claim.}
\end{proof}}

Pushforward by a fixed map preserves convex mixtures of measures. To represent nonlinear operators, such as the normalized update in Bayesian inference, the transport map must therefore depend on the input law. This motivates the following definition.
\begin{definition}[Measure-dependent Neural Network]
\label{def:measure_dependence}
    A neural network $\fG$ is measure-dependent if it may be written as a map $\fG:\bbR^{d_1}\times \cP(\bbR^{d_1})\to \bbR^{d_2}$.
\end{definition}
This leads to the definition of a measure neural mapping.

{\begin{definition}[Measure Neural Mapping]
An operator of the form
\[
    \sG_\theta(\mu)=\fG(\cdot;\mu,\theta)_\sharp\mu,
\]
where $\fG$ is a measure-dependent neural network, is a \emph{measure neural mapping} on an admissible class of input laws {if it satisfies ensemble consistency in Definition~\ref{def:ensemble_consistency} on that class.}
\end{definition}
{Lipschitz estimates provide one route to ensemble consistency; they are not required by the definition. For unbounded input laws, the main difficulty is to obtain pointwise estimates that remain integrable in the query variable.}}

{In the following section we establish {ensemble consistency} for transformer maps on probability measures. Stacks that transport the input particles give measure neural mappings in the sense above; fixed-query blocks and feature pooling are also covered by convergence results, with their output spaces specified separately.}

\section{Transformer Measure Neural Mappings}
\label{sec:transformer_mnm}

This section establishes continuum limits and {ensemble consistency} for fixed
transformers with self-attention and cross-attention. Compact-support
stability of attention is already known \citep{vuckovic2020mathematical};
we extend the analysis to noncompact laws under explicit tail-decay and
strong sub-Gaussian assumptions.

The results cover both pre-norm and post-norm architectures. In pre-norm
attention, layer normalization bounds the query, key, and value inputs, so
the compact estimates already yield the continuum limit. Post-norm attention
can encounter a noncompact law before its first normalization; this is the
additional case addressed by our noncompact estimates and stack theorem.
Remark~\ref{rem:post_pre_norm} gives the corresponding pre-norm estimate,
including residual states with unbounded support.

\paragraph{Informal theorem (Transformer continuum limit).}
Let $\sT$ denote the measure map associated with a fixed finite
transformer, allowing both self-attention and cross-attention with prescribed
auxiliary measures. All parameters and auxiliary measures are independent of
the ensemble size. Fix $p\geq1$ and $\mu\in\cP_p$, and let $\mu^N$ be the empirical measure of
$N$ i.i.d. samples from $\mu$. For a pre-norm stack, assume that all query,
key, and value branches are normalized, the feed-forward maps are Lipschitz,
and the auxiliary measures belong to $\cP_p$, as in
Remark~\ref{rem:post_pre_norm}. For a post-norm stack, impose the architecture
and input-law hypotheses of Theorem~\ref{thm:transformer_stack_limit}. Then
\begin{equation}\label{eq:informal_transformer_limit}
    \bbE W_p\bigl(\sT(\mu^N),\sT(\mu)\bigr)
    \leq C\left(\bbE W_p(\mu^N,\mu)+N^{-1/2}\right)
    \longrightarrow0,
\end{equation}
where $C$ is independent of $N$. Thus evaluating the same network on growing
ensembles approximates a single deterministic output law $\sT(\mu)$.
{By Markov's inequality, this proves ensemble consistency in $W_p$. Stacks that transport the input particles also have the measure-dependent pushforward form $\sT(\mu)=\fT(\cdot;\mu)_\sharp\mu$, and are therefore MNMs.}
For pre-norm stacks the $N^{-1/2}$ term can be omitted. For post-norm stacks,
Theorem~\ref{thm:transformer_stack_limit} gives sharper bounds for the
different block types.

\paragraph{Two single-layer estimates.}
{Theorem~\ref{thm:self_convergence_iid} controls the attention error for
i.i.d. samples from a law with unbounded support under the tail-decay and strong
sub-Gaussian assumptions stated below. Theorem~\ref{thm:self_convergence}
bounds the output Wasserstein distance by the input distance on a fixed
compact set, without requiring independent particles. For a post-norm self-attention stack,
these estimates apply to the first block and the later blocks with compactly supported inputs,
respectively.}

\paragraph{From one block to a stack.}
Fix $p\geq1$. Let $\mu_0^N=N^{-1}\sum_{i=1}^N\delta_{X_i}$, where the $X_i$ are i.i.d.
with law $\mu_0$, and let $\sB_1,\ldots,\sB_L$ be fixed post-norm
self-attention blocks. Each block includes the residual connections,
feed-forward map, and layer normalizations defined in
Subsection~\ref{subsec:transformer_blocks}. Write
$\mu_\ell^N=\sB_\ell(\mu_{\ell-1}^N)$ and
$\mu_\ell=\sB_\ell(\mu_{\ell-1})$.
Figure~\ref{fig:postnorm_proof_strategy} summarizes the two steps.

\begin{figure}[!htbp]
\centering
\normalsize
\begin{tikzpicture}
    \node (emp0) at (0,0) {$\mu_0^N$};
    \node (emp1) at (5.5,0) {$\mu_1^N$};
    \node (empL) at (11,0) {$\mu_L^N$};
    \node (pop0) at (0,-1.8) {$\mu_0$};
    \node (pop1) at (5.5,-1.8) {$\mu_1$};
    \node (popL) at (11,-1.8) {$\mu_L$};
    \node[anchor=north, align=center, font=\footnotesize] at (0,1.6)
      {\textbf{Input:}\\i.i.d. particles};
    \node[anchor=north, align=center, font=\footnotesize] at (5.5,1.6)
      {\textbf{After first block:}\\attention: generally dependent\\layer norm: compact support};
    \node[anchor=north, align=center, font=\footnotesize] at (11,1.6)
      {\textbf{From second block:}\\compact support\\no i.i.d. assumption needed};
    \draw[->] (emp0) -- node[above] {$\sB_1^{\mathsf S}$}
      node[below, align=center, font=\footnotesize]
      {noncompact\\Props.~\ref{prop:query_lipschitz}, \ref{prop:second_term}} (emp1);
    \draw[->] (emp1) -- node[above] {$\sB_L^{\mathsf S}\circ\cdots\circ\sB_2^{\mathsf S}$}
      node[below, align=center, font=\footnotesize]
      {compact\\Prop.~\ref{prop:block_compact_lipschitz}} (empL);
    \draw[->] (pop0) -- node[below] {$\sB_1^{\mathsf S}$} (pop1);
    \draw[->] (pop1) -- node[below] {$\sB_L^{\mathsf S}\circ\cdots\circ\sB_2^{\mathsf S}$} (popL);
    \draw[->, dashed] (emp0) -- (pop0);
    \draw[->, dashed] (emp1) -- (pop1);
    \draw[->, dashed] (empL) -- (popL);
\end{tikzpicture}
\caption{Post-norm self-attention stack ($L\geq2$).
Both rows use the same fixed blocks. The first step uses
Theorem~\ref{thm:self_convergence_iid} and its particle-coupling proof.
Layer normalization gives a common deterministic compact support
(Lemma~\ref{lem:layer_norm_bounded}), so later steps use compact stability
(Proposition~\ref{prop:block_compact_lipschitz}) without independence.
Dashed arrows denote convergence in expected $W_p$ as $N\to\infty$.}
\label{fig:postnorm_proof_strategy}
\end{figure}

Under the architecture and first-block assumptions of
Theorem~\ref{thm:transformer_stack_limit},
the noncompact attention coupling estimates extend through the residual and
Lipschitz post-attention maps to give
\begin{equation}\label{eq:proof_strategy_first_block}
    \bbE W_p(\mu_1^N,\mu_1)
    \leq C\left(\bbE W_p(\mu_0^N,\mu_0)
                  +N^{-1/2}\right).
\end{equation}

{Although the particles are generally dependent after attention, layer
normalization gives both $\mu_1^N$ and $\mu_1$ support in a common deterministic
ball, whose radius is independent of $N$ and the sample realization
(Lemma~\ref{lem:layer_norm_bounded}). Each subsequent block also ends in
layer normalization, so Proposition~\ref{prop:block_compact_lipschitz}
gives, for each realization,}
\begin{equation}\label{eq:proof_strategy_later_blocks}
    W_p(\mu_\ell^N,\mu_\ell)
    \leq C_\ell W_p(\mu_{\ell-1}^N,\mu_{\ell-1}),
    \qquad \ell=2,\ldots,L,
\end{equation}
without any independence assumption. For fixed $L$, taking expectations and
iterating multiplies the first-block error by a constant independent of $N$.
Since $\bbE W_p(\mu_0^N,\mu_0)\to0$, the first-block bound tends to zero,
and hence so does $\bbE W_p(\mu_L^N,\mu_L)$.

The following subsections
develop the self- and cross-attention estimates and combine them in ordinary
and grouped stacks. These proofs use single-head attention;
Subsection~\ref{subsec:multihead_extension} extends the results to multiple
heads before the application to Set Transformers in
Subsection~\ref{subsec:set_transformer_limit}.

\subsection{Attention on Measures}\label{subsec:attention_tail}

\begin{definition}[Attention on Measures]\label{def:attention_on_measures}
Let $Q\in\bbR^{d_k\times d_u}$, $K\in\bbR^{d_k\times d_v}$, and $V\in\bbR^{d_V\times d_v}$ be fixed. For $\mu\in\cP(\bbR^{d_u})$ and $\nu\in\cP(\bbR^{d_v})$ such that the integrals below are finite for every query $s$, define the attention law $\sA(\mu,\nu)\in\cP(\bbR^{d_V})$ by
\begin{subequations}
\begin{align}
    \sA(\mu,\nu) &= \fA(\cdot;\nu)_{\sharp}\mu,\\
    \fA(s;\nu) &= \bbE^{Y\sim \pp(\cdot;s,\nu)}[VY]
    = \frac{\int_{\bbR^{d_v}}\exp\left(\langle Qs,Ky\rangle\right)Vy\nu(dy)}{\int_{\bbR^{d_v}}\exp\left(\langle Qs,Ky\rangle\right)\nu(dy)},\label{eq:attention_pointwise_map}\\
    \pp(dy;s,\nu) &= \frac{\exp\left(\langle Qs,Ky\rangle\right)\nu(dy)}{\int_{\bbR^{d_v}}\exp\left(\langle Qs,Ky\rangle\right)\nu(dy)}.
\end{align}
\end{subequations}
\end{definition}

In Definition~\ref{def:attention_on_measures}, the first input $\mu$ is the query measure and the second input $\nu$ is the key measure (also called the value measure). Random variables and empirical samples drawn from $\mu$ are denoted by $X$ and $X_i$, while those drawn from $\nu$ are denoted by $Y$ and $Y_j$. Lowercase $x,y$ denote deterministic points or integration variables, and $s$ is reserved for a fixed pointwise query in $\fA(s;\nu)$.
When $d_u=d_v=d$ and the two inputs are identified, namely $\nu=\mu$, we call this \textbf{self-attention} and write
\begin{equation}
    \sA(\mu)=\sA(\mu,\mu).
\end{equation}
When the two inputs are kept distinct, with no relation between $\mu$ and $\nu$ required, we call $\sA(\mu,\nu)$ \textbf{cross-attention}.

For the single-attention estimates in Subsections~\ref{subsec:self_attention_limits} and~\ref{subsec:cross_attention_limits}, we set $V=I$ to simplify the notation. The general value map is recovered from the identity
\begin{equation}\label{eq:value_map_reduction}
    \fA_{Q,K,V}(s;\nu)
    =
    V\fA_{Q,K,I}(s;\nu).
\end{equation}
Consequently, every estimate involving a difference of attention outputs acquires at most the multiplicative factor $\|V\|_2$. For residual maps, the corresponding Lipschitz constants depend on $\|V\|_2$ through terms of the form $1+\|V\|_2C$. The integrability assumptions and all convergence rates remain unchanged. In Subsection~\ref{subsec:transformer_blocks}, we retain a possibly nonidentity value matrix $V_{\ell}$ in every block. We also write
\begin{equation}\label{eq:Theta_def}
    \Theta=Q^{\top}K.
\end{equation}
Since $\langle Qs,Ky\rangle=\langle s,\Theta y\rangle$, $\Theta\in\bbR^{d_u\times d_v}$ determines the attention weights. A fixed softmax scaling, such as $d_k^{-1/2}$, can be absorbed into $Q$ or $K$.

The domain restriction in Definition~\ref{def:attention_on_measures} is essential for unbounded key measures. If $\nu\in\cP_{1+\delta,\tau}(\bbR^{d_v})$ for some $\delta,\tau>0$, then its linear exponential moments, including those weighted by $\|y\|_2$, are finite. The denominator is therefore positive and finite and the numerator is absolutely integrable for every $s$. The pointwise map is continuous, so its pushforward is defined for any query law $\mu$. This alone does not guarantee that the output has a finite $p$th moment.

The noncompact convergence results below impose separate conditions on the
keys and queries: strong sub-Gaussianity of the key law controls query
stability, while query-tail bounds allow the empirical-key error to be
integrated. We state these conditions alongside the corresponding results.

\subsection{Self-Attention Limits}\label{subsec:self_attention_limits}

We first recall deterministic stability on a fixed compact set, the setting
of the existing compact-support theory \citep{vuckovic2020mathematical}.
We include a proof in our notation before presenting the noncompact extension.

\begin{theorem}[Compact Self-Attention]\label{thm:self_convergence}
Fix $p\geq 1$. Let $\mu,\nu\in\cP(B_R(0))$ on $\bbR^d$. Then there exists a constant $C=C(\Theta,R)>0$ such that
\begin{equation}
    W_p\left(\sA(\mu),\sA(\nu)\right)
    \leq
    C W_p(\mu,\nu).
\end{equation}
\end{theorem}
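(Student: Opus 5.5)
The plan is to use an optimal coupling of $\mu$ and $\nu$ and push it forward through the pair of maps $(\fA(\cdot;\mu),\fA(\cdot;\nu))$. For any $\gamma\in\Pi(\mu,\nu)$, the law of $(\fA(X;\mu),\fA(Y;\nu))$ with $(X,Y)\sim\gamma$ couples $\sA(\mu)$ and $\sA(\nu)$. Hence
\[
W_p(\sA(\mu),\sA(\nu))\leq \Bigl(\int \|\fA(x;\mu)-\fA(y;\nu)\|_2^p\,\gamma(dx,dy)\Bigr)^{1/p},
\]
and I would split the integrand as
\[
\|\fA(x;\mu)-\fA(y;\mu)\|_2+\|\fA(y;\mu)-\fA(y;\nu)\|_2 .
\]
By Minkowski's inequality it suffices to show two things on $B_R(0)$: (i) $s\mapsto\fA(s;\mu)$ is Lipschitz with a constant $C_1(\Theta,R)$ uniform over $\mu\in\cP(B_R(0))$, and (ii) $\sup_{s\in B_R(0)}\|\fA(s;\mu)-\fA(s;\nu)\|_2\leq C_2(\Theta,R)\,W_1(\mu,\nu)$. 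Choosing $\gamma$ optimal for $W_p$ and using $W_1\leq W_p$ then gives the theorem with $C=C_1+C_2$.

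For (i), I differentiate in $s$. The Jacobian of $\fA(s;\mu)=\bbE^{\pp(\cdot;s,\mu)}[Y]$ is the covariance of $Y$ under $\pp(\cdot;s,\mu)$ times $\Theta$, that is, $\mathrm{Cov}_{\pp}(Y)\,\Theta^\top$. Since $\pp$ is supported in $B_R(0)$, this covariance has norm at most $R^2$, so $C_1=R^2\|\Theta\|_2$. The segment between $x$ and $y$ stays in the convex ball, so the mean value inequality applies.

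Step (ii) is the main step. Writing $w_s(y)=e^{\langle s,\Theta y\rangle}$, we have
\[
\fA(s;\mu)=\frac{\int w_s\, y\,\mu(dy)}{\int w_s\,\mu(dy)},
\]
and the same with $\nu$. On $B_R(0)$ the weights satisfy $e^{-\|\Theta\|_2R^2}\leq w_s\leq e^{\|\Theta\|_2R^2}$. Both $y\mapsto w_s(y)$ and $y\mapsto w_s(y)y$ are Lipschitz on $B_R(0)$, with constants depending only on $\Theta$ and $R$: for the first, $\|\Theta\|_2 R\, e^{\|\Theta\|_2R^2}$, and for the second, $(1+\|\Theta\|_2R^2)e^{\|\Theta\|_2R^2}$. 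Kantorovich–Rubinstein duality then bounds the differences of the numerators and of the denominators by these constants times $W_1(\mu,\nu)$. Applying the elementary quotient estimate
\[
\Bigl\|\tfrac{a}{b}-\tfrac{a'}{b'}\Bigr\|_2\leq \frac{\|a-a'\|_2}{b}+\frac{\|a'\|_2\,|b-b'|}{bb'},
\]
with the lower bound on the denominators and $\|a'\|_2\leq R\,e^{\|\Theta\|_2R^2}$, yields $C_2$. One technical point is that Kantorovich–Rubinstein duality needs a Lipschitz function on all of $\bbR^d$, or at least on the supports. This is fine because both measures live on $B_R(0)$ and a Lipschitz function on the ball extends (McShane, componentwise). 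The vector-valued numerator is handled componentwise, at the cost of a dimensional factor that can be absorbed, or directly by testing against unit vectors.

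I expect the only real care to be needed in keeping the constants uniform over all of $\cP(B_R(0))$, which the two-sided weight bounds ensure. No independence or empirical structure is used, so the bound holds deterministically for all pairs of measures on the ball.
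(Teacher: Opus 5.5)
Your proposal is correct and matches the paper's proof. The paper splits into the same query-Lipschitz bound with constant $R^2\|\Theta\|_2$, obtained from the tilted-covariance bound via Lemma~\ref{lem:bounded_support_ssg} and Proposition~\ref{prop:query_lipschitz}, and the same numerator/denominator Kantorovich--Rubinstein estimate with identical constants (Lemma~\ref{lem:nonIID_W1_bound}). The only cosmetic difference is that the paper applies the triangle inequality at the level of $W_p$ through the intermediate law $\fA(\cdot;\mu)_\sharp\nu$, whereas you split pointwise inside a single coupling; the two are equivalent.
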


\begin{proof}
Using the pushforward definition, the triangle inequality gives
\begin{equation}\label{eq:triangle_thm2}
    W_p\left(\sA(\mu),\sA(\nu)\right)
    \leq
    W_p\left(\fA(\cdot;\mu)_{\sharp}\mu,\fA(\cdot;\mu)_{\sharp}\nu\right)
    +
    W_p\left(\fA(\cdot;\mu)_{\sharp}\nu,\fA(\cdot;\nu)_{\sharp}\nu\right).
\end{equation}
Since $\mu\in\cP(B_R(0))$, Lemma~\ref{lem:bounded_support_ssg} implies $\mu\in\mathcal{SSG}_R(\bbR^d)$. Proposition~\ref{prop:query_lipschitz} gives
\begin{equation}
    W_p\left(\fA(\cdot;\mu)_{\sharp}\mu,\fA(\cdot;\mu)_{\sharp}\nu\right)
    \leq
    R^2\|\Theta\|_2W_p(\mu,\nu).
\end{equation}
For the second term, Proposition~\ref{prop:second_term_noniid} gives a constant $C_1=C_1(\Theta,R)>0$ such that
\begin{equation}
    W_p\left(\fA(\cdot;\mu)_{\sharp}\nu,\fA(\cdot;\nu)_{\sharp}\nu\right)
    \leq
    C_1W_p(\mu,\nu).
\end{equation}
The claim follows with $C=R^2\|\Theta\|_2+C_1$.
\end{proof}

The noncompact case requires the following tail-decay conditions. For self-attention, the same law supplies both the queries and the keys. Thus the noncompact self-attention hypothesis imposes the key strong sub-Gaussian condition and the query-tail conditions on the same probability measure.

\begin{assumption}[Noncompact Self-Attention Input]\label{ass:self_noncompact_input}
Fix $p\geq 1$. For the self-attention $\sA(\mu)$, we assume:
\begin{enumerate}
    \item \textbf{Key/query regularity:} $\mu\in\mathcal{SSG}_{a_{\mu}}(\bbR^d)$ for some $a_{\mu}>0$.
    \item \textbf{Query tail:} At least one of the following query-tail conditions holds:
    \begin{enumerate}
        \item There exist $\delta,\tau>0$ such that $\mu\in\cP_{2+\delta,\tau}(\bbR^d)$.
        \item There exists $\tau>0$ such that $\mu\in\cP_{2,\tau}(\bbR^d)$ and $\Theta$ satisfies both inequalities in \eqref{eq:self_small_theta_sufficient}.
    \end{enumerate}
\end{enumerate}
\end{assumption}
The empirical-key estimate in Lemma~\ref{lem:urbain_lemma_revised} uses
the exponents
\begin{equation}\label{eq:empirical_key_exponent}
    r_p=\max\{p,2\},\qquad
    c(r)=\frac{r}{2(r(r+3)-1)}+\frac{r(r+1)(r+2)}{2(r+3)},\quad r\geq2.
\end{equation}
With the parameter $a_\mu$ from Assumption~\ref{ass:self_noncompact_input},
define the two quantities required by the self-attention proof:
\begin{subequations}\label{eq:self_integrability_quantities}
\begin{align}
    \mathcal I_{p,\Theta}(\mu,\mu)
    &=\int_{\bbR^d}\exp\left(\frac{p}{r_p}c(r_p)a_\mu^2
      \|\Theta^\top x\|_2^2\right)\mu(dx)<\infty,\label{eq:self_leave_one_out_integrability}\\
    \mathcal D_{p,\Theta}(\mu)&=\int_{\bbR^d}\exp\left(2p\langle x,\Theta x\rangle\right)\mathcal{M}_{\mu}\left(-2p\Theta^{\top}x\right)\mu(dx)<\infty.\label{eq:self_diagonal_integrability}
\end{align}
\end{subequations}
Their finiteness follows from the query-tail conditions, as detailed below.
The quantity $\mathcal I_{p,\Theta}(\mu,\mu)$ is used in Proposition~\ref{prop:second_term} to integrate the leave-one-out empirical key estimate obtained from Lemma~\ref{lem:urbain_lemma_revised}. The quantity $\mathcal D_{p,\Theta}(\mu)$ is used in Lemma~\ref{lem:remove_XX_term} to control the diagonal self-interaction. The strong sub-Gaussian condition is used both in Proposition~\ref{prop:query_lipschitz} and in the empirical key estimate.

Under Assumption~\ref{ass:self_noncompact_input} 2(a), both quantities in \eqref{eq:self_integrability_quantities} are finite. In the $2$-exponential case of Assumption~\ref{ass:self_noncompact_input} 2(b), concrete sufficient smallness conditions are

\begin{equation}\label{eq:self_small_theta_sufficient}
    \frac{p}{r_p}c(r_p)a_{\mu}^2\|\Theta\|_2^2<\frac{1}{\tau},\qquad
    2p\|\Theta\|_2+2p^2a_{\mu}^2\|\Theta\|_2^2<\frac{1}{\tau}.
\end{equation}
The strictness of the second inequality also absorbs the linear term arising
from the mean in the strong sub-Gaussian bound for
$\mathcal{M}_{\mu}(-2p\Theta^{\top}x)$.
This inequality ensures finiteness of $\mathcal D_{p,\Theta}(\mu)$, which
controls the diagonal self-interaction and is not needed for cross-attention.

\begin{remark}[Gaussian and faster-tailed examples]\label{rem:self_attention_examples}
A Gaussian measure $\mu=\normal(m,\Sigma)$ is strong sub-Gaussian with any $a_\mu>0$ such that $a_\mu^2\geq\|\Sigma\|_2$, and belongs to $\cP_{2,\tau}$ for a sufficiently large $\tau$. It therefore satisfies Assumption~\ref{ass:self_noncompact_input} when both inequalities in \eqref{eq:self_small_theta_sufficient} hold. These sufficient conditions can be enforced by bounding the spectral norms of $Q$ and $K$. For Gaussian query and key laws in cross-attention, the corresponding sufficient condition is \eqref{eq:cross_small_theta_sufficient}. A law in $\cP_{2+\delta,\tau}$, $\delta>0$, has the required quadratic exponential integrability for every fixed $\Theta$, but faster-than-Gaussian tail decay alone does not imply strong sub-Gaussianity.
\end{remark}

The next theorem separates the empirical-query error from the $N^{-1/2}$ attention error.

\begin{theorem}[Noncompact {i.i.d.} Self-Attention]\label{thm:self_convergence_iid}
Fix $p\geq 1$. Let $\mu$ satisfy Assumption~\ref{ass:self_noncompact_input}. Consider $\mu^N=\frac{1}{N}\sum_{j=1}^{N}\delta_{X_j}$, where $X_1,X_2,\ldots,X_N$ are i.i.d. samples from $\mu$. Then there exists a constant $C=C(p,\Theta,\mu)>0$ such that, for every $N\geq 2$,
\begin{equation}\label{eq:self_iid_main_rate}
    \bbE W_p\left(\sA(\mu),\sA(\mu^N)\right)
    \leq
    C\left(\bbE W_p(\mu,\mu^N)+N^{-1/2}\right).
\end{equation}
\end{theorem}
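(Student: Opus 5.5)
We split the error with the same triangle inequality used in the proof of Theorem~\ref{thm:self_convergence}:
\[
    W_p\bigl(\sA(\mu),\sA(\mu^N)\bigr)
    \leq
    W_p\bigl(\fA(\cdot;\mu)_\sharp\mu,\fA(\cdot;\mu)_\sharp\mu^N\bigr)
    +
    W_p\bigl(\fA(\cdot;\mu)_\sharp\mu^N,\fA(\cdot;\mu^N)_\sharp\mu^N\bigr).
\]
The first term is a query-stability term with the key law fixed at the population measure $\mu$. Because $\mu\in\mathcal{SSG}_{a_\mu}$, the map $s\mapsto\fA(s;\mu)=\nabla_s\psi_\mu(\Theta^\top s)$ has Jacobian $\nabla^2\psi_\mu(\Theta^\top s)\Theta^\top$. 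It is therefore globally Lipschitz with constant $a_\mu^2\|\Theta\|_2$. This is the content of Proposition~\ref{prop:query_lipschitz}, and the pushforward coupling bounds the first term by $a_\mu^2\|\Theta\|_2W_p(\mu,\mu^N)$. Taking expectations produces the $C\,\bbE W_p(\mu,\mu^N)$ contribution.

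The second term compares two key laws evaluated at the same empirical queries. With the diagonal coupling,
\[
    W_p^p\le \frac1N\sum_{i=1}^N\|\fA(X_i;\mu)-\fA(X_i;\mu^N)\|_2^p .
\]
By Jensen's inequality it then suffices to bound $\bbE\|\fA(X_1;\mu)-\fA(X_1;\mu^N)\|_2^p$ by $CN^{-p/2}$. The difficulty is that $X_1$ appears among the keys of $\mu^N$. We therefore introduce the leave-one-out measure $\mu^{N,-1}=\frac{1}{N-1}\sum_{j\ne1}\delta_{X_j}$ and split once more:
\[
    \fA(X_1;\mu)-\fA(X_1;\mu^{N,-1})
    \quad\text{and}\quad
    \fA(X_1;\mu^{N,-1})-\fA(X_1;\mu^N).
\]
\begin{itemize}
\item \textbf{Leave-one-out term.} Here $X_1$ is independent of the keys. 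We condition on $X_1=s$ and apply the empirical-key estimate of Lemma~\ref{lem:urbain_lemma_revised}. This gives a moment bound of the form $\bbE\|\fA(s;\mu)-\fA(s;\mu^{N,-1})\|_2^{r_p}\le C N^{-r_p/2}\exp\bigl(c(r_p)a_\mu^2\|\Theta^\top s\|_2^2\bigr)$. Raising to the power $p/r_p$ and integrating over $s\sim\mu$ yields a factor $\mathcal I_{p,\Theta}(\mu,\mu)$, which is finite under the query-tail assumption. This is Proposition~\ref{prop:second_term}.
\item \textbf{Diagonal term.} This is the removal of one atom. Writing $\fA(s;\mu^N)$ as a convex combination of $\fA(s;\mu^{N,-1})$ and $X_1$, with weight
\[
    w=\frac{e^{\langle s,\Theta s\rangle}}{e^{\langle s,\Theta s\rangle}+\sum_{j\ne1}e^{\langle s,\Theta X_j\rangle}},
\]
the difference is $w\bigl(X_1-\fA(X_1;\mu^{N,-1})\bigr)$. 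We bound $w$ by $N^{-1}$ times a ratio of the self-interaction $e^{\langle X_1,\Theta X_1\rangle}$ to an empirical denominator. We then control the $p$th moment by Cauchy--Schwarz, using $\mathcal D_{p,\Theta}(\mu)$ for the numerator. For the inverse of the empirical denominator we use Jensen's inequality, $\bigl(\frac{1}{N-1}\sum_j e^{\langle s,\Theta X_j\rangle}\bigr)^{-1}\le \frac{1}{N-1}\sum_j e^{-\langle s,\Theta X_j\rangle}$, whose expectation produces the factor $\mathcal M_\mu(-2p\Theta^\top x)$ appearing in $\mathcal D$. This is Lemma~\ref{lem:remove_XX_term} and gives an $O(N^{-1})\le O(N^{-1/2})$ contribution.
\end{itemize}

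Summing the two contributions and using $N\ge2$ to replace $N-1$ by $N$ gives \eqref{eq:self_iid_main_rate}.

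The main obstacle is the leave-one-out empirical-key estimate, specifically integrating it against unbounded queries. The pointwise constant grows like $\exp(c\|\Theta^\top s\|^2)$, since the empirical denominator can be small when $s$ is large. The exponent must be tracked carefully so that it matches $\mathcal I_{p,\Theta}$. This is also exactly where the query-tail condition is needed: either $\alpha>2$ tails, or the smallness condition \eqref{eq:self_small_theta_sufficient} when $\alpha=2$. We take this estimate from Lemma~\ref{lem:urbain_lemma_revised} and focus on verifying the integrability.
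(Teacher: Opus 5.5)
Your proposal is correct and follows the paper's proof essentially step for step. It uses the same triangle split, Proposition~\ref{prop:query_lipschitz} for the query term, and, for the key term, Proposition~\ref{prop:second_term}: the leave-one-out estimate from Lemma~\ref{lem:urbain_lemma_revised} integrated against $\mathcal I_{p,\Theta}(\mu,\mu)$, plus the diagonal-weight bound of Lemma~\ref{lem:remove_XX_term} via $\mathcal D_{p,\Theta}(\mu)$.

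The one cosmetic difference is where you center the diagonal term: at $\fA(X_1;\mu^{N,-1})$ rather than the paper's $\fA(X_N;\mu)$. You therefore need a $2p$-moment bound on $X_1-\fA(X_1;\mu^{N,-1})$, which you do not state explicitly. It follows by writing it as $\bigl(X_1-\fA(X_1;\mu)\bigr)+\bigl(\fA(X_1;\mu)-\fA(X_1;\mu^{N,-1})\bigr)$ and using $w\le1$ together with your leave-one-out estimate. Alternatively, the crude bound $\|\fA(X_1;\mu^{N,-1})\|_2\le\max_{j\neq1}\|X_j\|_2$ already gives a contribution of order $N^{-1+1/(2p)}\le N^{-1/2}$.
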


\begin{proof}
Using the pushforward definition, the triangle inequality gives
\begin{equation}\label{eq:triangle_thm1}
    \bbE W_p\left(\sA(\mu),\sA(\mu^N)\right)
    \leq
    \bbE W_p\left(\fA(\cdot;\mu)_{\sharp}\mu,\fA(\cdot;\mu)_{\sharp}\mu^N\right)
    +
    \bbE W_p\left(\fA(\cdot;\mu)_{\sharp}\mu^N,\fA(\cdot;\mu^N)_{\sharp}\mu^N\right).
\end{equation}
For the first term, Proposition~\ref{prop:query_lipschitz} gives
\begin{equation}
    \bbE W_p\left(\fA(\cdot;\mu)_{\sharp}\mu,\fA(\cdot;\mu)_{\sharp}\mu^N\right)
    \leq
    a_{\mu}^2\|\Theta\|_2\bbE W_p(\mu,\mu^N).
\end{equation}
For the second term, Proposition~\ref{prop:second_term} gives a constant $C_1=C_1(p,\Theta,\mu)>0$ such that
\begin{equation}
    \bbE W_p\left(\fA(\cdot;\mu)_{\sharp}\mu^N,\fA(\cdot;\mu^N)_{\sharp}\mu^N\right)
    \leq
    C_1N^{-1/2}.
\end{equation}
The claim follows with $C=a_{\mu}^2\|\Theta\|_2+C_1$.
\end{proof}

\subsection{Cross-Attention Limits}\label{subsec:cross_attention_limits}

Cross-attention separates the first input, which supplies the queries, from the second input, which supplies the keys and values. This separation lets us state the noncompact assumptions by role. {We assume strong sub-Gaussianity of the key law} to make $s\mapsto\fA(s;\nu)$ globally Lipschitz and to control empirical keys, while the query input only needs the integrability required to average the fixed-query empirical key estimate.

\begin{theorem}[Compact Cross-Attention]\label{thm:cross_compact_stability}
Fix $p\geq 1$. Let $\mu,\tilde{\mu}\in\cP(B_R(0))$ on $\bbR^{d_u}$ and let $\nu,\tilde{\nu}\in\cP(B_R(0))$ on $\bbR^{d_v}$. Then there exists a constant $C=C(\Theta,R)>0$ such that
\begin{equation}\label{eq:cross_compact_stability}
    W_p\left(\sA(\mu,\nu),\sA(\tilde{\mu},\tilde{\nu})\right)
    \leq
    C\left(W_p(\mu,\tilde{\mu})+W_p(\nu,\tilde{\nu})\right).
\end{equation}
\end{theorem}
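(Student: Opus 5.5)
We split the perturbation by changing one input at a time, mirroring the proof of Theorem~\ref{thm:self_convergence}. By the triangle inequality,
\begin{equation*}
    W_p\bigl(\sA(\mu,\nu),\sA(\tilde\mu,\tilde\nu)\bigr)
    \leq
    W_p\bigl(\fA(\cdot;\nu)_\sharp\mu,\fA(\cdot;\nu)_\sharp\tilde\mu\bigr)
    +
    W_p\bigl(\fA(\cdot;\nu)_\sharp\tilde\mu,\fA(\cdot;\tilde\nu)_\sharp\tilde\mu\bigr).
\end{equation*}
The first term changes only the query law under a fixed transport map, and the second changes only the key law while keeping the query law $\tilde\mu$ fixed. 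As before, we take $V=I$ and recover the general case through \eqref{eq:value_map_reduction}.

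\textbf{First term (query stability).} By Lemma~\ref{lem:bounded_support_ssg}, $\nu\in\cP(B_R(0))$ implies $\nu\in\mathcal{SSG}_R(\bbR^{d_v})$. The Jacobian of $s\mapsto\fA(s;\nu)$ is $\mathrm{Cov}_{\pp(\cdot;s,\nu)}(Y)\,\Theta^\top$, and the covariance is the Hessian of $\psi_\nu$ at $\Theta^\top s$, so it is bounded by $R^2$. Hence the map is globally $R^2\|\Theta\|_2$-Lipschitz; this is exactly Proposition~\ref{prop:query_lipschitz}, now applied with distinct query space $\bbR^{d_u}$. Pushing an optimal coupling of $(\mu,\tilde\mu)$ through this map gives the bound $R^2\|\Theta\|_2W_p(\mu,\tilde\mu)$.

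\textbf{Second term (key stability).} Couple $\fA(\cdot;\nu)_\sharp\tilde\mu$ and $\fA(\cdot;\tilde\nu)_\sharp\tilde\mu$ through the identity coupling of $\tilde\mu$ with itself. This gives
\begin{equation*}
    W_p \leq \sup_{\|s\|_2\leq R}\|\fA(s;\nu)-\fA(s;\tilde\nu)\|_2 .
\end{equation*}
For fixed $s$ with $\|s\|_2\le R$, the functions $g_s(y)=e^{\langle s,\Theta y\rangle}$ and $g_s(y)y$ are Lipschitz on $B_R(0)$, with constants depending only on $(\Theta,R)$. The denominator $\int g_s\,d\nu$ is bounded below by $e^{-R^2\|\Theta\|_2}$. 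Writing the difference of the two quotients as
\begin{equation*}
    \frac{\int g_s y\,d(\nu-\tilde\nu)}{\int g_s\,d\nu}
    +\fA(s;\tilde\nu)\,\frac{\int g_s\,d(\tilde\nu-\nu)}{\int g_s\,d\nu},
\end{equation*}
and using Kantorovich--Rubinstein duality, we get a bound of $C_1(\Theta,R)\,W_1(\nu,\tilde\nu)$. Since $W_1\le W_p$, this yields $C_1W_p(\nu,\tilde\nu)$. Alternatively, this is the cross-attention version of Proposition~\ref{prop:second_term_noniid}, which we would invoke directly if it is stated for separate query and key laws.

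\textbf{Main obstacle.} The only delicate point is uniformity in the query. The Lipschitz constants of $g_s$ and the lower bound on the denominator must hold uniformly over $s\in B_R(0)$, the support of $\tilde\mu$. Compactness of both supports provides this uniformity, so no tail integrability is needed here, unlike the noncompact case. Combining the two terms gives \eqref{eq:cross_compact_stability} with $C=\max\{R^2\|\Theta\|_2,C_1\}$.
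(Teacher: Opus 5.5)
Your proposal is correct and follows the paper's proof essentially step for step: the same triangle-inequality split, Lemma~\ref{lem:bounded_support_ssg} with Proposition~\ref{prop:query_lipschitz} for the query term, and the common-query coupling with a uniform-in-$s$ $W_1$ bound (then $W_1\le W_p$) for the key term. The differences are cosmetic: you derive inline the ratio estimate that the paper cites as Lemma~\ref{lem:nonIID_W1_bound}, where your use of $\|\fA(s;\tilde\nu)\|_2\le R$ gives a slightly smaller constant, and you combine the two terms with a maximum rather than a sum.
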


\begin{proof}
By the triangle inequality,
\begin{equation}
    W_p\left(\sA(\mu,\nu),\sA(\tilde{\mu},\tilde{\nu})\right)
    \leq
    W_p\left(\fA(\cdot;\nu)_{\sharp}\mu,\fA(\cdot;\nu)_{\sharp}\tilde{\mu}\right)
    +
    W_p\left(\fA(\cdot;\nu)_{\sharp}\tilde{\mu},\fA(\cdot;\tilde{\nu})_{\sharp}\tilde{\mu}\right).
\end{equation}
Since $\nu$ is supported in $B_R(0)$, Lemma~\ref{lem:bounded_support_ssg} and Proposition~\ref{prop:query_lipschitz} imply
\begin{equation}
    W_p\left(\fA(\cdot;\nu)_{\sharp}\mu,\fA(\cdot;\nu)_{\sharp}\tilde{\mu}\right)
    \leq
    R^2\|\Theta\|_2W_p(\mu,\tilde{\mu}).
\end{equation}
For the second term, Lemma~\ref{lem:nonIID_W1_bound} gives a constant $C_1=C_1(\Theta,R)>0$ such that, uniformly for $s\in B_R(0)$,
\begin{equation}
    \|\fA(s;\nu)-\fA(s;\tilde{\nu})\|_2
    \leq
    C_1W_1(\nu,\tilde{\nu}).
\end{equation}
Using the coupling induced by the common query $s\sim\tilde{\mu}$ and $W_1(\nu,\tilde{\nu})\leq W_p(\nu,\tilde{\nu})$ gives
\begin{equation}
    W_p\left(\fA(\cdot;\nu)_{\sharp}\tilde{\mu},\fA(\cdot;\tilde{\nu})_{\sharp}\tilde{\mu}\right)
    \leq
    C_1W_p(\nu,\tilde{\nu}).
\end{equation}
The claim follows with $C=R^2\|\Theta\|_2+C_1$.
\end{proof}

\begin{assumption}[Noncompact Cross-Attention Input]\label{ass:cross_noncompact_input}
Fix $p\geq1$. Let $\mu$ be the query law and $\nu$ the key law. Assume:
\begin{enumerate}
    \item \textbf{Key regularity:} $\nu\in\mathcal{SSG}_{a_{\nu}}(\bbR^{d_v})$ for some $a_{\nu}>0$.
    \item \textbf{Query tail:} At least one of the following conditions holds:
    \begin{enumerate}
        \item There exist $\delta,\tau>0$ such that $\mu\in\cP_{2+\delta,\tau}(\bbR^{d_u})$.
        \item There exists $\tau>0$ such that $\mu\in\cP_{2,\tau}(\bbR^{d_u})$ and $\Theta$ satisfies \eqref{eq:cross_small_theta_sufficient}.
    \end{enumerate}
\end{enumerate}
\end{assumption}

Here the empirical-key estimate requires only one integrability quantity.
With $r_p=\max\{p,2\}$, $c(r)$ as in \eqref{eq:empirical_key_exponent},
and the key parameter $a_\nu$ from Assumption~\ref{ass:cross_noncompact_input},
define
\begin{equation}\label{eq:cross_integrability_quantity}
    \mathcal I_{p,\Theta}(\mu,\nu)
    =\int_{\bbR^{d_u}}\exp\left(\frac{p}{r_p}c(r_p)a_\nu^2
    \|\Theta^\top x\|_2^2\right)\mu(dx).
\end{equation}
This agrees with \eqref{eq:self_leave_one_out_integrability} when $\nu=\mu$;
the dependence on the chosen strong sub-Gaussian parameter is suppressed.
Under Assumption~\ref{ass:cross_noncompact_input} 2(a), the quantity in \eqref{eq:cross_integrability_quantity} is finite. In the $2$-exponential case of Assumption~\ref{ass:cross_noncompact_input} 2(b), a concrete sufficient smallness condition is
\begin{equation}\label{eq:cross_small_theta_sufficient}
    \frac{p}{r_p}c(r_p)a_{\nu}^2\|\Theta\|_2^2
    <
    \frac{1}{\tau}.
\end{equation}

The key regularity condition is used in Proposition~\ref{prop:query_lipschitz} for the query-empirical estimate and in Lemma~\ref{lem:urbain_lemma_revised} for the empirical key estimate. The quantity $\mathcal I_{p,\Theta}(\mu,\nu)$ is used in parts 2 and 3 of Theorem~\ref{thm:cross_noncompact}, where the fixed-query empirical key estimate is integrated against the query measure. Unlike the self-attention conditions in \eqref{eq:self_integrability_quantities}, no diagonal integrability condition is needed because the sampled key particles are independent of the query variable.

{The following theorem separates the query and key empirical errors.}

\begin{theorem}[Noncompact Cross-Attention]\label{thm:cross_noncompact}
Fix $p\geq 1$. Let $\mu\in\cP_p(\bbR^{d_u})$ and $\nu\in\cP_p(\bbR^{d_v})$. Let
\begin{equation}
\begin{aligned}
    \mu^N &= \frac{1}{N}\sum_{i=1}^{N}\delta_{X_i}, &
    \nu^M &= \frac{1}{M}\sum_{j=1}^{M}\delta_{Y_j},
\end{aligned}
\end{equation}
where $X_1,\ldots,X_N$ are i.i.d. samples from $\mu$, $Y_1,\ldots,Y_M$ are i.i.d. samples from $\nu$, and the two families are independent. Then:
\begin{enumerate}
    \item If $\nu\in\mathcal{SSG}_{a_{\nu}}(\bbR^{d_v})$, there exists $C_1=C_1(\Theta,\nu,p)>0$ such that
    \begin{equation}\label{eq:cross_query_empirical}
        \bbE W_p\left(\sA(\mu^N,\nu),\sA(\mu,\nu)\right)
        \leq
        C_1\bbE W_p(\mu^N,\mu).
    \end{equation}
    \item Under Assumption~\ref{ass:cross_noncompact_input}, there exists $C_2=C_2(\Theta,\mu,\nu,p)>0$ such that
    \begin{equation}\label{eq:cross_key_empirical}
        \bbE W_p\left(\sA(\mu,\nu^M),\sA(\mu,\nu)\right)
        \leq
        C_2M^{-1/2}.
    \end{equation}
    \item Under Assumption~\ref{ass:cross_noncompact_input}, with the constants $C_1$ and $C_2$ above,
    \begin{equation}\label{eq:cross_both_empirical}
        \bbE W_p\left(\sA(\mu^N,\nu^M),\sA(\mu,\nu)\right)
        \leq
        C_1\bbE W_p(\mu^N,\mu)+C_2M^{-1/2}.
    \end{equation}
\end{enumerate}
\end{theorem}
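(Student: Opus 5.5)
We prove the three parts in order. Part 3 follows from Parts 1 and 2 through the triangle inequality, split through the intermediate law $\sA(\mu^N,\nu)$. By the reduction \eqref{eq:value_map_reduction} we take $V=I$ throughout.

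\textbf{Part 1 (empirical queries, exact keys).} Here the key law $\nu$ is fixed, so both outputs are pushforwards by the same map $\fA(\cdot;\nu)$. Proposition~\ref{prop:query_lipschitz} gives that $s\mapsto\fA(s;\nu)$ is globally Lipschitz with constant $a_\nu^2\|\Theta\|_2$ when $\nu\in\mathcal{SSG}_{a_\nu}$. The reason is that the Jacobian equals $\mathrm{Cov}_{\pp(\cdot;s,\nu)}(Y)\,\Theta^\top$, and this covariance is $\nabla^2\psi_\nu(\Theta^\top s)$. As in Proposition~\ref{prop:lipschitz_nn}, pushing an optimal coupling of $(\mu^N,\mu)$ forward by $\fA(\cdot;\nu)$ gives $W_p(\sA(\mu^N,\nu),\sA(\mu,\nu))\le a_\nu^2\|\Theta\|_2W_p(\mu^N,\mu)$ pathwise. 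Taking expectations gives \eqref{eq:cross_query_empirical} with $C_1=a_\nu^2\|\Theta\|_2$.

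\textbf{Part 2 (exact queries, empirical keys).} We use the coupling induced by a common query $X\sim\mu$ drawn independently of $Y_1,\dots,Y_M$. This gives
\[
\bbE W_p\bigl(\sA(\mu,\nu^M),\sA(\mu,\nu)\bigr)\le \Bigl(\int \bbE\,\|\fA(s;\nu^M)-\fA(s;\nu)\|_2^p\,\mu(ds)\Bigr)^{1/p},
\]
where we used Jensen's inequality to move the expectation inside the $p$th root. The fixed-query empirical key estimate, Lemma~\ref{lem:urbain_lemma_revised}, then bounds the inner moment. With $r_p=\max\{p,2\}$, we expect it to give
\[
\bigl(\bbE\|\fA(s;\nu^M)-\fA(s;\nu)\|_2^{r_p}\bigr)^{1/r_p}\le C M^{-1/2}\exp\bigl(c(r_p)a_\nu^2\|\Theta^\top s\|_2^2/r_p\bigr).
\]
By Hölder's inequality (when $p<2$) we pass from the $r_p$-moment to the $p$-moment, which yields the integrand $CM^{-p/2}\exp\bigl(\tfrac{p}{r_p}c(r_p)a_\nu^2\|\Theta^\top s\|_2^2\bigr)$. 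Integrating in $s$ then produces $C^pM^{-p/2}\,\mathcal I_{p,\Theta}(\mu,\nu)$.

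It remains to check that this quantity is finite. Under Assumption~\ref{ass:cross_noncompact_input} 2(a), the tail bound $\mu(\|x\|\ge t)\le2e^{-t^{2+\delta}/\tau}$ dominates any Gaussian weight $e^{c\|x\|^2}$; a layer-cake computation shows the integral is finite. Under 2(b), the same layer-cake bound converges precisely when \eqref{eq:cross_small_theta_sufficient} holds. Taking the $p$th root gives \eqref{eq:cross_key_empirical} with $C_2=C\,\mathcal I_{p,\Theta}(\mu,\nu)^{1/p}$.

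\textbf{Part 3 (both empirical).} By the triangle inequality,
\[
W_p\bigl(\sA(\mu^N,\nu^M),\sA(\mu,\nu)\bigr)\le W_p\bigl(\sA(\mu^N,\nu^M),\sA(\mu^N,\nu)\bigr)+W_p\bigl(\sA(\mu^N,\nu),\sA(\mu,\nu)\bigr).
\]
Part 1 bounds the expectation of the second term. For the first term, we couple through the common empirical query. Since the $X_i$ are independent of the $Y_j$, we can condition on $X_{1:N}$. This gives
\[
\bbE\, W_p^p\le \bbE\frac1N\sum_{i=1}^N\bbE\bigl[\|\fA(X_i;\nu^M)-\fA(X_i;\nu)\|_2^p\,\big|\,X_i\bigr].
\]
Each $X_i\sim\mu$, so this is exactly the integral bounded in Part 2, and after Jensen's inequality we obtain $C_2M^{-1/2}$.

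The main obstacle is Part 2. Its difficulty is that the fixed-query estimate from Lemma~\ref{lem:urbain_lemma_revised} grows like a Gaussian in $\|\Theta^\top s\|$, and it must be integrated against a noncompact query law. This forces careful matching of the exponent $\tfrac{p}{r_p}c(r_p)a_\nu^2$ against the tail assumptions. The exponent matching is also where the case $p<2$ (via $r_p$) needs care.
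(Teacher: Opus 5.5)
Your proposal is correct and follows essentially the same route as the paper. Part 1 uses the global Lipschitz bound of Proposition~\ref{prop:query_lipschitz} with $C_1=a_\nu^2\|\Theta\|_2$. Part 2 uses the common-query coupling, Jensen, Lemma~\ref{lem:urbain_lemma_revised} at exponent $r_p=\max\{p,2\}$ followed by Lyapunov, and finiteness of $\mathcal I_{p,\Theta}(\mu,\nu)$; your explicit layer-cake check of that finiteness under 2(a) and 2(b) spells out what the paper only asserts. Part 3 uses the triangle inequality through $\sA(\mu^N,\nu)$, conditioning on the queries, which are independent of the keys, so the same constant $C_2$ reappears.
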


\begin{proof}
Since $\nu\in\mathcal{SSG}_{a_{\nu}}(\bbR^{d_v})$, Proposition~\ref{prop:query_lipschitz} gives
\begin{equation}
    W_p\left(\sA(\mu^N,\nu),\sA(\mu,\nu)\right)
    =
    W_p\left(\fA(\cdot;\nu)_{\sharp}\mu^N,\fA(\cdot;\nu)_{\sharp}\mu\right)
    \leq
    a_{\nu}^2\|\Theta\|_2W_p(\mu^N,\mu).
\end{equation}
Taking expectations proves \eqref{eq:cross_query_empirical} with $C_1=a_{\nu}^2\|\Theta\|_2$.

For \eqref{eq:cross_key_empirical}, assume $\mathcal I_{p,\Theta}(\mu,\nu)<\infty$ and use the coupling induced by the same query $s\sim\mu$, followed by Jensen's inequality:
\begin{equation}
\begin{split}
    \bbE W_p\left(\sA(\mu,\nu^M),\sA(\mu,\nu)\right)&\leq\left(\int_{\bbR^{d_u}}\bbE\|\fA(s;\nu^M)-\fA(s;\nu)\|_2^p\mu(ds)\right)^{1/p}.
\end{split}
\end{equation}
Set $r_p=\max\{p,2\}$. By Lemma~\ref{lem:urbain_lemma_revised}, applied with exponent $r_p$, there exists $C_3=C_3(r_p,\Theta,\nu)>0$ such that
\begin{equation}
    \bbE\|\fA(s;\nu^M)-\fA(s;\nu)\|_2^{r_p}\leq C_3M^{-r_p/2}\exp\left(c(r_p)a_{\nu}^2\|\Theta^{\top}s\|_2^2\right).
\end{equation}
Since $p\leq r_p$, Lyapunov's inequality gives
\begin{equation}\label{eq:cross_att_thm_lya}
    \bbE\|\fA(s;\nu^M)-\fA(s;\nu)\|_2^p\leq C_3^{p/r_p}M^{-p/2}\exp\left(\frac{p}{r_p}c(r_p)a_{\nu}^2\|\Theta^\top s\|_2^2\right).
\end{equation}
The integral is finite by \eqref{eq:cross_integrability_quantity}. This proves \eqref{eq:cross_key_empirical} with
\begin{equation}
    C_2=C_3^{1/r_p}\mathcal I_{p,\Theta}(\mu,\nu)^{1/p}.
\end{equation}

The same calculation also proves the stronger bound
\begin{equation}\label{eq:cross_key_particle_lp}
\begin{split}
    \left(\bbE W_p^p\left(\sA(\mu,\nu^M),\sA(\mu,\nu)\right)\right)^{1/p}
    &\leq\left(\int\bbE\|\fA(s;\nu^M)-\fA(s;\nu)\|_2^p\mu(ds)\right)^{1/p}\\
    &\leq C_2M^{-1/2},
\end{split}
\end{equation}
whenever $\nu\in\mathcal{SSG}_{a_\nu}$ and $\mathcal I_{p,\Theta}(\mu,\nu)<\infty$. We will use this form for residual maps and finite stacks.

Finally, for \eqref{eq:cross_both_empirical}, the triangle inequality gives
\begin{equation}
    \bbE W_p\left(\sA(\mu^N,\nu^M),\sA(\mu,\nu)\right)\leq\bbE W_p\left(\sA(\mu^N,\nu^M),\sA(\mu^N,\nu)\right)+\bbE W_p\left(\sA(\mu^N,\nu),\sA(\mu,\nu)\right).
\end{equation}
The second term is bounded by \eqref{eq:cross_query_empirical}. For the first term,
\begin{equation}\label{eq:cross_att_3_1}
    W_p^p\left(\sA(\mu^N,\nu^M),\sA(\mu^N,\nu)\right)\leq\frac{1}{N}\sum_{i=1}^N\left\|\fA(X_i;\nu^M)-\fA(X_i;\nu)\right\|_2^p.
\end{equation}
Conditionally on $X_i$, the key samples remain i.i.d. with law $\nu$. Applying \eqref{eq:cross_att_thm_lya} and then averaging over $X_i$ gives
\begin{equation}\label{eq:cross_att_3_2}
    \bbE\|\fA(X_i;\nu^M)-\fA(X_i;\nu)\|_2^p
    \leq C_3^{p/r_p}M^{-p/2}\mathcal I_{p,\Theta}(\mu,\nu).
\end{equation}
Taking expectations in \eqref{eq:cross_att_3_1} and applying Lyapunov's inequality yields
\begin{equation}
    \bbE W_p\left(\sA(\mu^N,\nu^M),\sA(\mu^N,\nu)\right)
    \leq C_3^{1/r_p}M^{-1/2}\mathcal I_{p,\Theta}(\mu,\nu)^{1/p}
    =C_2M^{-1/2}.
\end{equation}
Combining this estimate with \eqref{eq:cross_query_empirical} proves \eqref{eq:cross_both_empirical}.
\end{proof}

\subsection{Transformer Blocks}\label{subsec:transformer_blocks}

We first explain how the compact estimates cover pre-norm transformers,
then define the post-norm blocks used in
Figure~\ref{fig:postnorm_proof_strategy} and prove their stack limit. Both
self-attention and cross-attention are included. The post-norm proof follows
the first-block estimate and compact-stability argument above. For a
left-input cross-attention block, the prescribed key law may remain
unbounded: its strong sub-Gaussian assumption provides the global query
stability needed to propagate the error.

\begin{definition}[Layer Normalization]\label{def:layer_norm}
For $\gamma,\beta\in\bbR^d$ and $\epsilon>0$, define $f_{\gamma,\beta,\epsilon}^{\mathrm{LN}}:\bbR^d\to\bbR^d$ by
\begin{equation}
    \left(f_{\gamma,\beta,\epsilon}^{\mathrm{LN}}(x)\right)_k
    =
    \gamma_k\frac{x_k-\bar{x}}{\sqrt{\sigma_x^2+\epsilon}}+\beta_k
    \text{ for } k\in[d],
\end{equation}
where
\begin{equation}
\begin{aligned}
    \bar{x} &= \frac{1}{d}\sum_{i=1}^{d}x_i, &
    \sigma_x^2 &= \frac{1}{d}\sum_{i=1}^{d}(x_i-\bar{x})^2.
\end{aligned}
\end{equation}
\end{definition}

\begin{remark}[Post-norm and pre-norm attention]\label{rem:post_pre_norm}
The two conventions \citep{xiong2020layer} use the attention estimates
differently. When all attention branches are normalized,
pre-norm attention has bounded query, key, and value inputs, so compact
stability applies from the first layer. In the post-norm blocks below,
attention precedes normalization, and the first layer can therefore
require the noncompact estimates.

For pre-norm attention, suppose that every query, key, and value branch is formed from a layer-normalized token, with $\epsilon>0$. In the self-attention case, write $h=f^{\mathrm{LN}}$ and
\begin{equation}
    \fA_{\mathrm{pre}}(x;\rho)=\fA(h(x);h_\sharp\rho).
\end{equation}
Both normalized attention inputs have deterministic bounded support, by
Lemma~\ref{lem:layer_norm_bounded}. The compact pointwise estimates underlying
Theorems~\ref{thm:self_convergence} and~\ref{thm:cross_compact_stability},
together with Lipschitz continuity of $h$, give, for
$\rho,\tilde\rho\in\cP_p$,
\begin{equation}
\begin{split}
    \|\fA_{\mathrm{pre}}(x;\rho)-\fA_{\mathrm{pre}}(y;\tilde\rho)\|_2
    \leq C\bigl(\|x-y\|_2+W_p(\rho,\tilde\rho)\bigr).
\end{split}
\end{equation}
Coupling $x$ and $y$ and retaining the residual identity therefore proves a global $W_p$ stability bound for the pre-norm residual block. For Lipschitz $f^{\mathrm{NN}}$, the pre-norm feed-forward residual $x\mapsto x+f^{\mathrm{NN}}(h(x))$ is also globally Lipschitz. Thus a fixed finite stack of these pre-norm self-attention blocks satisfies
\begin{equation}
    \bbE W_p\bigl(\sT_{\mathrm{pre}}(\mu^N),\sT_{\mathrm{pre}}(\mu)\bigr)
    \leq C\bbE W_p(\mu^N,\mu),\qquad \mu\in\cP_p,
\end{equation}
without the additional noncompact attention-tail conditions. The same
argument uses compact joint-input stability for cross-attention when all its
query, key, and value branches are normalized and the prescribed auxiliary
laws have finite $p$th moments. Hence finite pre-norm stacks that transport
the input particles are also transformer MNMs. The bounded-support statement
concerns the normalized attention inputs: the unnormalized residual states
themselves need not have compact support.
\end{remark}

\begin{definition}[Post-Norm Transformer Blocks]\label{def:transformer_block}
Let $\sA_{\ell}$ be the attention operator in Definition~\ref{def:attention_on_measures} with matrices $Q_{\ell}$, $K_{\ell}$, and $V_{\ell}$, and let $\fA_{\ell}$ denote its pointwise attention map. Define
\begin{equation}
    \Theta_{\ell}=Q_{\ell}^{\top}K_{\ell}.
\end{equation}
Let $f_{\ell}^{\mathrm{NN}}$ be a feed-forward map, and let $f_{\ell,1}^{\mathrm{LN}}$ and $f_{\ell,2}^{\mathrm{LN}}$ be layer-normalization maps. For a cross-attention block, let $\nu_{\ell}$ denote a prescribed auxiliary probability measure. In the stack notation, $\nu_{\ell}$ is a generic auxiliary law: it is the key law in an $\mathsf{L}$ block and the query law in an $\mathsf{R}$ block. We define the residual attention maps
\begin{subequations}\label{eq:self_cross_residual_blocks}
\begin{align}
    \mathcal R_{\ell}^{\mathsf{S}}(\rho)
    &=
    \left(
    \mathrm{Id}(\cdot)
    +
    \fA_{\ell}(\cdot;\rho)
    \right)_{\sharp}\rho,\\
    \mathcal R_{\ell}^{\mathsf{L}}(\rho;\nu_{\ell})
    &=
    \left(
    \mathrm{Id}(\cdot)
    +
    \fA_{\ell}(\cdot;\nu_{\ell})
    \right)_{\sharp}\rho,\\
    \mathcal R_{\ell}^{\mathsf{R}}(\rho;\nu_{\ell})
    &=
    \left(
    \mathrm{Id}(\cdot)
    +
    \fA_{\ell}(\cdot;\rho)
    \right)_{\sharp}\nu_{\ell}.
\end{align}
\end{subequations}
Here $\mathsf{S}$ denotes self-attention, $\mathsf{L}$ denotes cross-attention in which the current state is the query input, and $\mathsf{R}$ denotes cross-attention in which the current state is the key input. The corresponding transformer blocks are, with $\kappa\in\{\mathsf L,\mathsf R\}$ in the second line,
\begin{subequations}\label{eq:self_cross_transformer_blocks}
\begin{align}
    \sB_{\ell}^{\mathsf{S}}(\rho)
    &=
    \left(f_{\ell,2}^{\mathrm{LN}}\right)_{\sharp}
    \left(\mathrm{Id}+f_{\ell}^{\mathrm{NN}}\right)_{\sharp}
    \left(f_{\ell,1}^{\mathrm{LN}}\right)_{\sharp}
    \mathcal R_{\ell}^{\mathsf{S}}(\rho),\\
    \sB_{\ell}^{\kappa}(\rho;\nu_{\ell})
    &=
    \left(f_{\ell,2}^{\mathrm{LN}}\right)_{\sharp}
    \left(\mathrm{Id}+f_{\ell}^{\mathrm{NN}}\right)_{\sharp}
    \left(f_{\ell,1}^{\mathrm{LN}}\right)_{\sharp}
    \mathcal R_{\ell}^{\kappa}(\rho;\nu_{\ell}).
\end{align}
\end{subequations}
All dimensions are assumed to be compatible with the displayed residual maps. In particular, the output of $\fA_{\ell}$ lies in the same space as the argument to which it is added.
\end{definition}

\begin{remark}[Prescribed auxiliary measures]\label{rem:transformer_architecture_scope}
Each auxiliary law $\nu_\ell$ is fixed independently of $N$ and shared by the empirical and continuum stacks. It therefore introduces no sampling error in Theorem~\ref{thm:transformer_stack_limit}. A fixed collection of auxiliary tokens is included by taking $\nu_\ell$ to be its empirical measure, which has bounded support and is strong sub-Gaussian by Lemma~\ref{lem:bounded_support_ssg}.
\end{remark}

\begin{assumption}[Finite Transformer Stack]\label{ass:finite_transformer_architecture}
Fix $L\geq 1$. For each $\ell\in[L]$, assume:
\begin{enumerate}
    \item The block type is $\kappa_{\ell}\in\{\mathsf{S},\mathsf{L},\mathsf{R}\}$.
    \item The matrices $Q_{\ell}$, $K_{\ell}$, and $V_{\ell}$ are fixed matrices of compatible dimensions.
    \item The map $f_{\ell}^{\mathrm{NN}}$ is Lipschitz.
    \item The maps $f_{\ell,1}^{\mathrm{LN}}$ and $f_{\ell,2}^{\mathrm{LN}}$ are of the form in Definition~\ref{def:layer_norm}, with positive $\epsilon$ parameters.
    \item If $\kappa_{\ell}=\mathsf{L}$, then an auxiliary key measure $\nu_{\ell}$ is specified and
    \begin{equation}
        \nu_{\ell}\in\mathcal{SSG}_{a_{\nu_{\ell}}}
    \end{equation}
    for some $a_{\nu_{\ell}}>0$.
    \item If $\kappa_{\ell}=\mathsf{R}$, then an auxiliary query measure $\nu_{\ell}$ is specified. For every $\ell\geq2$ with $\kappa_{\ell}=\mathsf{R}$, the measure $\nu_{\ell}$ is supported in a deterministic Euclidean ball.
\end{enumerate}
\end{assumption}

For compact notation define the one-step continuum map by
\begin{equation}\label{eq:block_operator_by_type}
    \sB_{\ell}(\rho)
    =
    \begin{cases}
    \sB_{\ell}^{\mathsf{S}}(\rho), & \kappa_{\ell}=\mathsf{S},\\
    \sB_{\ell}^{\mathsf{L}}(\rho;\nu_{\ell}), & \kappa_{\ell}=\mathsf{L},\\
    \sB_{\ell}^{\mathsf{R}}(\rho;\nu_{\ell}), & \kappa_{\ell}=\mathsf{R}.
    \end{cases}
\end{equation}

\begin{theorem}[Stack Limit]\label{thm:transformer_stack_limit}
Fix $p\geq 1$ and assume Assumption~\ref{ass:finite_transformer_architecture}. Let $\mu_0\in\cP_p$. For each $N\geq 2$, let
\begin{equation}
    \mu_0^N=\frac{1}{N}\sum_{j=1}^{N}\delta_{X_j},
\end{equation}
where $X_1,X_2,\ldots,X_N$ are i.i.d. samples from $\mu_0$. Assume that the first block satisfies one of the following conditions:
\begin{enumerate}
    \item If $\kappa_1=\mathsf{S}$, then $\mu_0$ satisfies Assumption~\ref{ass:self_noncompact_input} for $\Theta_1$. Denote its strong sub-Gaussian parameter by $a_0>0$.
    \item If $\kappa_1=\mathsf{L}$, then the fixed key measure $\nu_1$ satisfies $\nu_1\in\mathcal{SSG}_{a_{\nu_1}}$ for some $a_{\nu_1}>0$.
    \item If $\kappa_1=\mathsf{R}$, then the sampled key measure $\mu_0$ satisfies $\mu_0\in\mathcal{SSG}_{a_0}$ for some $a_0>0$, and the fixed query measure $\nu_1$ satisfies $\nu_1\in\cP_p$ and $\mathcal I_{p,\Theta_1}(\nu_1,\mu_0)<\infty$.
\end{enumerate}
Define
\begin{equation}
\begin{aligned}
    \mu_{\ell}&=\sB_{\ell}(\mu_{\ell-1}),&
    \mu_{\ell}^N&=\sB_{\ell}(\mu_{\ell-1}^N)
\end{aligned}
\end{equation}
for $\ell\in[L]$. The superscript $N$ records dependence on the initial $N$-sample empirical measure and does not necessarily indicate the number of atoms after an $\mathsf{R}$ block. Let
\begin{equation}\label{eq:first_block_transformer_rate}
    \Delta_N
    =
    \begin{cases}
    \bbE W_p(\mu_0^N,\mu_0)+N^{-1/2}, & \kappa_1=\mathsf{S},\\
    \bbE W_p(\mu_0^N,\mu_0), & \kappa_1=\mathsf{L},\\
    N^{-1/2}, & \kappa_1=\mathsf{R}.
    \end{cases}
\end{equation}
Since $\mu_0\in\cP_p$, one has $\bbE W_p(\mu_0^N,\mu_0)\to0$ and hence $\Delta_N\to0$. Therefore, for every fixed $\ell\in[L]$,
\begin{equation}
    \bbE W_p(\mu_{\ell}^N,\mu_{\ell})\to 0.
\end{equation}
Moreover, for every fixed $L$ there exists $C>0$, depending only on $p$, $L$, the architecture parameters, $\mu_0$, and the auxiliary probability measures, such that
\begin{equation}
    \bbE W_p(\mu_{L}^N,\mu_{L})\leq C\Delta_N.
\end{equation}

For an architecture-sensitive refinement, define the first right-input cross-attention layer by
\begin{equation}\label{eq:first_right_input_layer}
    \ell_{\mathsf{R}}=\min\left(\{\ell\in[L]:\kappa_\ell=\mathsf R\}\cup\{L+1\}\right).
\end{equation}
Then, possibly after enlarging $C$,

\begin{equation}\label{eq:transformer_stack_architecture_refinement}
    \bbE W_p(\mu_L^N,\mu_L)\leq C\Bigg[\mathbf1_{\{\ell_{\mathsf R}=L+1\}}\bbE W_p(\mu_0^N,\mu_0)
    +\left(\sum_{\ell=1}^{\ell_{\mathsf R}-1}\mathbf1_{\{\kappa_\ell=\mathsf S\}}+\mathbf1_{\{\ell_{\mathsf R}\leq L\}}\right)N^{-1/2}\Bigg].
\end{equation}

Consequently, the smaller of the right-hand sides in the two displayed bounds may be used for any fixed architecture.

\end{theorem}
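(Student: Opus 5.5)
The plan follows the two-step strategy of Figure~\ref{fig:postnorm_proof_strategy}: a first-block estimate giving $\bbE W_p(\mu_1^N,\mu_1)\leq C\Delta_N$, then deterministic compact propagation through blocks $2,\ldots,L$, with a separate treatment of $\mathsf R$ blocks for the refinement \eqref{eq:transformer_stack_architecture_refinement}.

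\textbf{First block.} The post-attention part $\left(f_{1,2}^{\mathrm{LN}}\right)_\sharp(\mathrm{Id}+f_1^{\mathrm{NN}})_\sharp(f_{1,1}^{\mathrm{LN}})_\sharp$ is a pushforward by a single globally Lipschitz map, since layer normalization with $\epsilon>0$ is Lipschitz. Its $W_p$ contraction factor is therefore a constant, and it suffices to bound $\bbE W_p(\mathcal R_1^{\kappa_1}(\mu_0^N),\mathcal R_1^{\kappa_1}(\mu_0))$. I will work at the level of particle couplings, not attention laws, so the residual identity is carried along. Split by the triangle inequality exactly as in the proofs of Theorems~\ref{thm:self_convergence_iid} and~\ref{thm:cross_noncompact}.

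\emph{Case $\mathsf S$.} The term $(\mathrm{Id}+\fA_1(\cdot;\mu_0))_\sharp\mu_0$ versus $(\mathrm{Id}+\fA_1(\cdot;\mu_0))_\sharp\mu_0^N$ is controlled by the global Lipschitz constant $1+\|V_1\|_2a_0^2\|\Theta_1\|_2$ from Proposition~\ref{prop:query_lipschitz}. The term comparing $\mathrm{Id}+\fA_1(\cdot;\mu_0)$ and $\mathrm{Id}+\fA_1(\cdot;\mu_0^N)$, both pushed forward from $\mu_0^N$, uses the identity coupling on the atoms $X_i$. The identity parts cancel, so Proposition~\ref{prop:second_term} (with the value factor $\|V_1\|_2$) gives $CN^{-1/2}$.

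\emph{Case $\mathsf L$.} Only the first term appears, giving $C\,\bbE W_p(\mu_0^N,\mu_0)$.

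\emph{Case $\mathsf R$.} The pushforward measure $\nu_1$ is fixed and only the keys are sampled. The common-query coupling bound \eqref{eq:cross_key_particle_lp} applies with $\mathcal I_{p,\Theta_1}(\nu_1,\mu_0)<\infty$ and gives $CN^{-1/2}$.

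\textbf{Later blocks.} By Lemma~\ref{lem:layer_norm_bounded}, $\mu_1^N$ and $\mu_1$ are supported in a common deterministic ball $B_{R_1}(0)$. Proposition~\ref{prop:block_compact_lipschitz} then gives $W_p(\sB_\ell(\rho),\sB_\ell(\tilde\rho))\leq C_\ell W_p(\rho,\tilde\rho)$ for every realization, for $\rho,\tilde\rho$ in that ball.

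For $\mathsf S$ blocks this is Theorem~\ref{thm:self_convergence} plus residual and Lipschitz maps. For $\mathsf L$ blocks the key law $\nu_\ell$ may be unbounded, but it is $\mathcal{SSG}$, so Proposition~\ref{prop:query_lipschitz} gives global query stability and only the query varies. For $\mathsf R$ blocks with $\ell\geq2$, $\nu_\ell$ lies in a bounded ball by Assumption~\ref{ass:finite_transformer_architecture}, so Theorem~\ref{thm:cross_compact_stability} applies on a ball of radius $\max(R_1,R_{\nu_\ell})$.

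Each output again lies in a deterministic ball by the final normalization, so the argument iterates. Taking expectations yields $\bbE W_p(\mu_L^N,\mu_L)\leq \prod_{\ell\geq2}C_\ell\cdot C\Delta_N$. Since $\bbE W_p(\mu_0^N,\mu_0)\to0$ for $\mu_0\in\cP_p$, this gives $\bbE W_p(\mu_L^N,\mu_L)\to0$.

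\textbf{Refinement.} For \eqref{eq:transformer_stack_architecture_refinement} I will run the same induction but keep the sum of errors produced by each block before $\ell_{\mathsf R}$, rather than bounding the first block crudely. An $\mathsf S$ block adds an $N^{-1/2}$ term and an $\mathsf L$ block adds none. The $\bbE W_p(\mu_0^N,\mu_0)$ term is carried multiplicatively through all blocks.

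At the first $\mathsf R$ block the output is a pushforward of the fixed $\nu_{\ell_{\mathsf R}}$, so the query side carries no sampling error. The discrepancy in the key measure $\mu_{\ell_{\mathsf R}-1}^N$ then enters only through the compact key-stability estimate, at most $C\,W_p(\mu^N_{\ell_{\mathsf R}-1},\mu_{\ell_{\mathsf R}-1})$. This is where the bookkeeping is delicate. To obtain a pure $N^{-1/2}$ term there, I will avoid Wasserstein propagation of the accumulated error at that block. Instead I will write the key measure exactly as the pushforward of the sampled $X_j$ under a deterministic composed map $\Phi(\cdot;\mu_0^N)$, and compare it with the i.i.d. pushforward of the $X_j$ under the continuum map $\Phi(\cdot;\mu_0)$. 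Both are empirical measures over the same $N$ atoms in a bounded ball.

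\textbf{Main obstacle.} The hardest step is making the discrepancy between $\Phi(X_j;\mu_0^N)$ and $\Phi(X_j;\mu_0)$ contribute only $N^{-1/2}$ rather than $\bbE W_p(\mu_0^N,\mu_0)$. First I would try a pointwise-in-query bound of the form of Lemma~\ref{lem:nonIID_W1_bound} applied to the attention sums. Failing that, I would accept the cruder first bound, which is already proven, and note that the refinement drops the $W_p$ term only when it is dominated.
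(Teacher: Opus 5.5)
Your first-block estimates and the deterministic compact propagation through blocks $2,\dots,L$ match the paper's argument and establish $\bbE W_p(\mu_L^N,\mu_L)\le C\Delta_N$. The gap is in the architecture-sensitive refinement \eqref{eq:transformer_stack_architecture_refinement}, which is a genuinely stronger claim. When $\ell_{\mathsf R}\le L$ it contains no $\bbE W_p(\mu_0^N,\mu_0)$ term at all, and for absolutely continuous $\mu_0$ in dimension $d\geq3$ that term is no smaller than order $N^{-1/d}\gg N^{-1/2}$. Your bookkeeping carries $\bbE W_p(\mu_0^N,\mu_0)$ multiplicatively through every block, so it cannot produce this bound. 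You do identify the right comparison object at the $\mathsf R$ block: the empirical measure of the i.i.d.\ points $\Phi(X_j;\mu_0)$. But bounding $\Phi(X_j;\mu_0^N)-\Phi(X_j;\mu_0)$ by $N^{-1/2}$ is the whole content of the refinement, and you leave it unresolved. The tool you propose, Lemma~\ref{lem:nonIID_W1_bound}, is $W_1$-Lipschitz in the key measure. Applied to empirical versus population keys, it returns $W_1(\widetilde\mu_{\ell-1}^N,\mu_{\ell-1})$, which is exactly the slow rate you want to avoid. At $\ell=1$ it does not even apply, since $\mu_0$ may be noncompact. Your fallback simply gives up the stated bound.

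The missing idea is the paper's oracle recursion.
\begin{itemize}
\item Set $\widetilde\mu_0^N=\mu_0^N$. For $\ell<\ell_{\mathsf R}$, let $\widetilde\mu_\ell^N$ be the pushforward of $\widetilde\mu_{\ell-1}^N$ under the population pointwise map: $\Phi_\ell\circ(\mathrm{Id}+\fA_\ell(\cdot;\mu_{\ell-1}))$ for $\mathsf S$, and the $\mathsf L$ block itself for $\mathsf L$. For $\ell\ge\ell_{\mathsf R}$, set $\widetilde\mu_\ell^N=\mu_\ell$. The oracle atoms before $\ell_{\mathsf R}$ are then i.i.d.\ with law $\mu_\ell$.
\item Let $D_\ell=(\bbE W_p^p(\mu_\ell^N,\widetilde\mu_\ell^N))^{1/p}$. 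At each layer, insert $\sB_\ell(\widetilde\mu_{\ell-1}^N)$, or $\sB^{\mathsf R}_{\ell_{\mathsf R}}(\widetilde\mu_{\ell_{\mathsf R}-1}^N;\nu_{\ell_{\mathsf R}})$ at the first $\mathsf R$ block.
\item The propagated piece compares two nearby measures. It is controlled by compact stability (Proposition~\ref{prop:block_compact_lipschitz}), by Proposition~\ref{prop:query_lipschitz} for $\mathsf L$ blocks, and it vanishes at $\ell=1$. This is where a $W_1$ key bound like Lemma~\ref{lem:nonIID_W1_bound} is the right tool.
\item The fresh piece keeps the same i.i.d.\ oracle queries and replaces their empirical keys by population keys. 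Proposition~\ref{prop:second_term} for $\mathsf S$, or \eqref{eq:cross_key_particle_lp} at $\ell_{\mathsf R}$, bounds it by $CN^{-1/2}$ in every dimension. The reason is that attention at a fixed query depends on the keys only through two exponentially weighted averages.
\item This gives $D_\ell\le C_\ell D_{\ell-1}+C_\ell N^{-1/2}$ at $\mathsf S$ blocks before $\ell_{\mathsf R}$ and at $\ell=\ell_{\mathsf R}$, and $D_\ell\le C_\ell D_{\ell-1}$ otherwise.
\item Finally, $W_p(\widetilde\mu_L^N,\mu_L)$ vanishes when $\ell_{\mathsf R}\le L$. When $\ell_{\mathsf R}=L+1$ it is at most $C\,W_p(\mu_0^N,\mu_0)$, because the oracle is then the image of $\mu_0^N$ under a composition of globally Lipschitz population maps. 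The triangle and Lyapunov inequalities conclude.
\end{itemize}
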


\begin{proof}
Let $\Phi_{\ell}=f_{\ell,2}^{\mathrm{LN}}\circ\left(\mathrm{Id}+f_{\ell}^{\mathrm{NN}}\right)\circ f_{\ell,1}^{\mathrm{LN}}.$
By Lemma~\ref{lem:layer_norm_bounded} and the Lipschitz assumption on $f_{\ell}^{\mathrm{NN}}$, each $(\Phi_{\ell})_{\sharp}$ is Lipschitz in $W_p$. For the first block, denote the Lipschitz constant of $(\Phi_1)_{\sharp}$ by
\begin{equation}
    C_1=C_1(f_1^{\mathrm{NN}},f_{1,1}^{\mathrm{LN}},f_{1,2}^{\mathrm{LN}})>0.
\end{equation}

We distinguish the three possible first-block types. If $\kappa_1=\mathsf{S}$, 
Proposition~\ref{prop:query_lipschitz} and the identity $\fA_{Q,K,V}=V\fA_{Q,K,I}$ show that $\mathrm{Id}+\fA_1(\cdot;\mu_0)$ is Lipschitz with constant $1+a_0^2\|V_1\|_2\|\Theta_1\|_2.$
For the remaining term, adding the same identity contribution to both images leaves the common-particle coupling cost unchanged, while the value map contributes a factor bounded by $\|V_1\|_2$. Thus the estimate established in the proof of Proposition~\ref{prop:second_term}, specifically \eqref{eq:first_bound_EWp_prop2_revised}, gives a constant
\begin{equation}
    C_2=C_2(p,\Theta_1,V_1,\mu_0)>0
\end{equation}
such that
\begin{equation}
    \bbE W_p\left(\mathcal R_1^{\mathsf{S}}(\mu_0^N),\mathcal R_1^{\mathsf{S}}(\mu_0)\right)
    \leq
    C_2\left(\bbE W_p(\mu_0^N,\mu_0)+N^{-1/2}\right).
\end{equation}

If $\kappa_1=\mathsf{L}$, Proposition~\ref{prop:query_lipschitz} applied to the fixed key probability measure $\nu_1$, together with the identity part of the residual map and the identity $\fA_{Q,K,V}=V\fA_{Q,K,I}$, gives a constant
\begin{equation}
    C_3=C_3(\Theta_1,V_1,\nu_1)>0
\end{equation}
such that
\begin{equation}
    \bbE W_p\left(\mathcal R_1^{\mathsf{L}}(\mu_0^N;\nu_1),\mathcal R_1^{\mathsf{L}}(\mu_0;\nu_1)\right)
    \leq
    C_3\bbE W_p(\mu_0^N,\mu_0).
\end{equation}

If $\kappa_1=\mathsf{R}$, use the common fixed query $s\sim\nu_1$ in the two residual maps. The identity contributions then cancel in the coupling cost. The fixed-query estimate used to prove \eqref{eq:cross_key_empirical}, applied to the pair $(\nu_1,\mu_0)$, together with the identity $\fA_{Q,K,V}=V\fA_{Q,K,I}$, gives a constant
\begin{equation}
    C_4=C_4(p,\Theta_1,V_1,\nu_1,\mu_0)>0
\end{equation}
such that
\begin{equation}
    \bbE W_p\left(\mathcal R_1^{\mathsf{R}}(\mu_0^N;\nu_1),\mathcal R_1^{\mathsf{R}}(\mu_0;\nu_1)\right)
    \leq
    C_4N^{-1/2}.
\end{equation}

Applying $(\Phi_1)_{\sharp}$ and absorbing its Lipschitz constant into the relevant one of $C_2,C_3,C_4$ gives a constant $C_5>0$, depending only on the relevant first-block data, such that
\begin{equation}
    \bbE W_p(\mu_1^N,\mu_1)\leq C_5\Delta_N.
\end{equation}
By Lemma~\ref{lem:layer_norm_bounded}, both $\mu_1^N$ and $\mu_1$ are supported in a deterministic ball depending only on the first-block parameters.

For $\ell\geq 2$, the empirical particles need not be independent. This is harmless because the evolving measures $\mu_{\ell-1}^N$ and $\mu_{\ell-1}$ are compactly supported by the preceding layer normalization.

If $\kappa_{\ell}=\mathsf{S}$, Proposition~\ref{prop:block_compact_lipschitz}, together with the identity $\fA_{Q,K,V}=V\fA_{Q,K,I}$, gives a constant $C_{\ell}>0$ such that
\begin{equation}
    W_p(\mu_{\ell}^N,\mu_{\ell})
    \leq
    C_{\ell}W_p(\mu_{\ell-1}^N,\mu_{\ell-1}).
\end{equation}

If $\kappa_{\ell}=\mathsf{L}$, then $\nu_{\ell}\in\mathcal{SSG}_{a_{\nu_{\ell}}}$ by Assumption~\ref{ass:finite_transformer_architecture}. Proposition~\ref{prop:query_lipschitz} and the identity $\fA_{Q,K,V}=V\fA_{Q,K,I}$ show that the pointwise residual map
\begin{equation}
    x\mapsto x+\fA_{\ell}(x;\nu_{\ell})
\end{equation}
is globally Lipschitz with constant
\begin{equation}
    1+a_{\nu_{\ell}}^2\|V_{\ell}\|_2\|\Theta_{\ell}\|_2.
\end{equation}
Combining this estimate with the Lipschitz continuity of $(\Phi_{\ell})_{\sharp}$ gives a constant $C_{\ell}>0$ such that
\begin{equation}
    W_p(\mu_{\ell}^N,\mu_{\ell})
    \leq
    C_{\ell}W_p(\mu_{\ell-1}^N,\mu_{\ell-1}).
\end{equation}

If $\kappa_{\ell}=\mathsf{R}$, then the active key measures $\mu_{\ell-1}^N$ and $\mu_{\ell-1}$ and the fixed query measure $\nu_{\ell}$ are compactly supported. Proposition~\ref{prop:block_compact_lipschitz}, together with the identity $\fA_{Q,K,V}=V\fA_{Q,K,I}$, gives a constant $C_{\ell}>0$ such that
\begin{equation}
    W_p(\mu_{\ell}^N,\mu_{\ell})
    \leq
    C_{\ell}W_p(\mu_{\ell-1}^N,\mu_{\ell-1}).
\end{equation}

Taking expectations and iterating over finitely many layers proves the result with
\begin{equation}
    C=C_5\prod_{\ell=2}^{L}C_{\ell},
\end{equation}
where the empty product is interpreted as $1$ when $L=1$.

It remains to prove the architecture-sensitive refinement in \eqref{eq:transformer_stack_architecture_refinement}. Construct an oracle stack by setting $\widetilde{\mu}_0^N=\mu_0^N$ and
\begin{equation}\label{eq:ordinary_oracle_recursion}
    \widetilde{\mu}_{\ell}^N=\begin{cases}\left(\Phi_{\ell}\circ\left(\mathrm{Id}+\fA_{\ell}(\cdot;\mu_{\ell-1})\right)\right)_{\sharp}\widetilde{\mu}_{\ell-1}^N,&\ell<\ell_{\mathsf{R}},\ \kappa_{\ell}=\mathsf{S},\\
    \sB_{\ell}^{\mathsf{L}}(\widetilde{\mu}_{\ell-1}^N;\nu_{\ell}),&\ell<\ell_{\mathsf{R}},\ \kappa_{\ell}=\mathsf{L},\\
    \mu_{\ell},&\ell\geq\ell_{\mathsf{R}}.
    \end{cases}
\end{equation}
For every $\ell<\ell_{\mathsf{R}}$, the oracle transition is the pushforward by a deterministic pointwise map. Hence $\widetilde{\mu}_{\ell}^N$ is the empirical measure of $N$ i.i.d. samples from $\mu_{\ell}$. If $\ell_{\mathsf{R}}=L+1$, the oracle stack is the image of $\mu_0^N$ under a composition of population pointwise maps. Proposition~\ref{prop:query_lipschitz}, \eqref{eq:value_map_reduction}, Lemma~\ref{lem:layer_norm_bounded}, and the Lipschitz assumptions on the feed-forward maps imply that this composition is Lipschitz. If $\ell_{\mathsf{R}}\leq L$, the oracle and population stacks agree from layer $\ell_{\mathsf{R}}$ onward. Therefore,
\begin{equation}\label{eq:ordinary_oracle_global_error}
    \bbE W_p(\widetilde{\mu}_L^N,\mu_L)\leq C\mathbf{1}_{\{\ell_{\mathsf{R}}=L+1\}}\bbE W_p(\mu_0^N,\mu_0).
\end{equation}

Define
\begin{equation}
    D_{\ell}=\left(\bbE W_p^p(\mu_{\ell}^N,\widetilde{\mu}_{\ell}^N)\right)^{1/p},\qquad D_0=0,
\end{equation}
and set

\begin{equation}
    \zeta_{\ell,N}=N^{-1/2}
    \left(\mathbf1_{\{\ell<\ell_{\mathsf R},\ \kappa_\ell=\mathsf S\}}
          +\mathbf1_{\{\ell=\ell_{\mathsf R}\leq L\}}\right).
\end{equation}

There exist constants $C_{\ell}>0$, independent of $N$, such that
\begin{equation}\label{eq:ordinary_oracle_error_recursion}
    D_{\ell}\leq C_{\ell}D_{\ell-1}+C_{\ell}\zeta_{\ell,N},\qquad \ell\in[L].
\end{equation}
Indeed, for an $\mathsf{S}$ block before $\ell_{\mathsf{R}}$, add and subtract the output obtained by applying the empirical block to $\widetilde{\mu}_{\ell-1}^N$. Compact stability controls the propagated discrepancy when $\ell\geq2$, while the particle-coupling estimate \eqref{eq:self_key_particle_bound} controls the empirical-key discrepancy. For an $\mathsf{L}$ block, the key law $\nu_{\ell}$ is fixed, so Proposition~\ref{prop:query_lipschitz} gives only the propagated term. At $\ell=\ell_{\mathsf{R}}\leq L$, add and subtract $\sB_{\ell_{\mathsf{R}}}^{\mathsf{R}}(\widetilde{\mu}_{\ell_{\mathsf{R}}-1}^N;\nu_{\ell_{\mathsf{R}}})$. The propagated discrepancy is controlled by compact stability if $\ell_{\mathsf R}\geq2$ and vanishes if $\ell_{\mathsf R}=1$, because the initial empirical and oracle measures coincide. The remaining term is controlled by \eqref{eq:cross_key_particle_lp}, with $N$ i.i.d. oracle keys. After layer $\ell_{\mathsf{R}}$, the oracle input is the population measure, so compact stability propagates the existing discrepancy without introducing a new empirical-key term. These estimates prove \eqref{eq:ordinary_oracle_error_recursion}.

Iterating over the fixed number of layers gives

\begin{equation}\label{eq:ordinary_oracle_error_iteration}
    D_L\leq C\left(\sum_{\ell=1}^{\ell_{\mathsf R}-1}\mathbf1_{\{\kappa_\ell=\mathsf S\}}+\mathbf1_{\{\ell_{\mathsf R}\leq L\}}\right)N^{-1/2}.
\end{equation}

Combining \eqref{eq:ordinary_oracle_global_error} and \eqref{eq:ordinary_oracle_error_iteration} with the triangle and Lyapunov inequalities proves \eqref{eq:transformer_stack_architecture_refinement}.

\end{proof}

For two empirical inputs $\mu_0^N$ and $\mu_0^M$ from the same law, the triangle inequality through $\mu_L$ shows that the expected distance between their stack outputs tends to zero. In particular, the stack is {pairwise ensemble invariant} in probability; independence between these two ensembles is not required for this implication.

Stacks containing only $\mathsf S$ and $\mathsf L$ blocks transport the evolving input law and hence fit the measure neural mapping definition. An $\mathsf R$ block instead transports the auxiliary query law. Its {ensemble consistency} follows from the same estimates, but it need not be representable as a deterministic pushforward of the original input law: a single input atom can produce several distinct auxiliary-query outputs.

\subsection{Grouped Evaluation of Large Particle Sets}
\label{subsec:grouped_particle_evaluation}

Evaluating attention jointly on a large particle ensemble may be prohibitive because the attention matrix has quadratic size. We therefore consider a grouped implementation: partition the $N$ particles into $G$ disjoint groups of equal size $N_G$, apply the same attention map or transformer independently to each group, and concatenate the outputs. At the measure level, concatenation is the equally weighted mixture of the groupwise output measures. We write
\begin{equation}
    N=GN_G.
\end{equation}
All learned parameters and prescribed auxiliary measures are shared across groups.

The next result separates the role of grouping for the three block types. In self-attention, the concatenated output retains all $N$ query particles, while each query is evaluated against only the $N_G=N/G$ keys in its group. For an $\mathsf{L}$ block, the fixed pointwise map commutes exactly with finite mixtures. For an $\mathsf{R}$ block, each group approximates the evolving key/value law with $N_G$ particles while using the same prescribed query law.

\begin{theorem}[Grouped Single-Attention Limits]
\label{thm:grouped_single_attention}
Fix $p\geq1$, let $G,N_G\in\mathbb{N}$ with $N_G\geq2$, and set $N=GN_G$. Let $\{X_{g,i}:g\in[G],\ i\in[N_G]\}$ be i.i.d. samples from $\rho$, and define
\begin{equation}
    \rho_g^{N_G}=\frac{1}{N_G}\sum_{i=1}^{N_G}\delta_{X_{g,i}},\qquad \rho^N=\frac{1}{G}\sum_{g=1}^G\rho_g^{N_G}.
\end{equation}
The constants below are independent of $N$, $G$, and $N_G$; their dependence on the value matrix follows from \eqref{eq:value_map_reduction}.
\begin{enumerate}
    \item Suppose that $\rho$ satisfies Assumption~\ref{ass:self_noncompact_input}. Define
    \begin{equation}
        \widehat{\eta}_{\mathsf{S}}^{N,G}=\frac{1}{G}\sum_{g=1}^G\sA(\rho_g^{N_G}).
    \end{equation}
    Then there exists $C_{\mathsf{S}}=C_{\mathsf{S}}(p,\Theta,V,\rho)>0$ such that
    \begin{equation}\label{eq:grouped_self_attention_bound}
        \bbE W_p\left(\widehat{\eta}_{\mathsf{S}}^{N,G},\sA(\rho)\right)\leq C_{\mathsf{S}}\left(\bbE W_p(\rho^N,\rho)+N_G^{-1/2}\right).
    \end{equation}

    \item Let $\nu\in\mathcal{SSG}_{a_\nu}(\bbR^{d_v})$ be a fixed key measure and let $\rho\in\cP_p(\bbR^{d_u})$. Define
    \begin{equation}
        \widehat{\eta}_{\mathsf{L}}^{N,G}=\frac{1}{G}\sum_{g=1}^G\sA(\rho_g^{N_G},\nu).
    \end{equation}
    Then
    \begin{equation}\label{eq:grouped_left_cross_identity}
        \widehat{\eta}_{\mathsf{L}}^{N,G}=\sA(\rho^N,\nu),
    \end{equation}
    and there exists $C_{\mathsf{L}}=C_{\mathsf{L}}(\Theta,V,\nu)>0$ such that
    \begin{equation}\label{eq:grouped_left_cross_bound}
        \bbE W_p\left(\widehat{\eta}_{\mathsf{L}}^{N,G},\sA(\rho,\nu)\right)\leq C_{\mathsf{L}}\bbE W_p(\rho^N,\rho).
    \end{equation}

    \item Let $\nu\in\cP_p(\bbR^{d_u})$ be a fixed query measure, let $\rho\in\mathcal{SSG}_{a_{\rho}}(\bbR^{d_v})$, and assume $\mathcal I_{p,\Theta}(\nu,\rho)<\infty$. Define
    \begin{equation}
        \widehat{\eta}_{\mathsf{R}}^{N,G}=\frac{1}{G}\sum_{g=1}^G\sA(\nu,\rho_g^{N_G}).
    \end{equation}
    Then there exists $C_{\mathsf{R}}=C_{\mathsf{R}}(p,\Theta,V,\nu,\rho)>0$ such that
    \begin{equation}\label{eq:grouped_right_cross_bound}
        \bbE W_p\left(\widehat{\eta}_{\mathsf{R}}^{N,G},\sA(\nu,\rho)\right)\leq C_{\mathsf{R}}N_G^{-1/2}.
    \end{equation}
\end{enumerate}
\end{theorem}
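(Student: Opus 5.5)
Part 2 is the simplest case and I treat it first. For a fixed key law $\nu$ the pointwise map $\fA(\cdot;\nu)$ does not depend on the query measure. Pushforward by a fixed map is linear on mixtures, so $\frac1G\sum_g\fA(\cdot;\nu)_\sharp\rho_g^{N_G}=\fA(\cdot;\nu)_\sharp\rho^N$, which is \eqref{eq:grouped_left_cross_identity}. Proposition~\ref{prop:query_lipschitz} shows that $\fA(\cdot;\nu)$ is globally Lipschitz with constant $a_\nu^2\|\Theta\|_2$, and \eqref{eq:value_map_reduction} supplies the factor $\|V\|_2$. The pushforward coupling then gives \eqref{eq:grouped_left_cross_bound} with $C_{\mathsf L}=a_\nu^2\|\Theta\|_2\|V\|_2$.

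For Part 1, I introduce the oracle mixture $\frac1G\sum_g\fA(\cdot;\rho)_\sharp\rho_g^{N_G}=\fA(\cdot;\rho)_\sharp\rho^N$ and apply the triangle inequality. The term $W_p(\fA(\cdot;\rho)_\sharp\rho^N,\sA(\rho))$ is bounded by $a_\rho^2\|\Theta\|_2\|V\|_2W_p(\rho^N,\rho)$, again by Proposition~\ref{prop:query_lipschitz}.

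For the remaining term, I couple each atom $X_{g,i}$ with itself. This coupling is admissible between the two mixtures because both are uniform over the same $N$ labels. It gives
\[
W_p^p\Bigl(\widehat\eta_{\mathsf S}^{N,G},\fA(\cdot;\rho)_\sharp\rho^N\Bigr)\le\frac1N\sum_{g,i}\bigl\|\fA(X_{g,i};\rho_g^{N_G})-\fA(X_{g,i};\rho)\bigr\|_2^p,
\]
up to the factor $\|V\|_2^p$. Each group $\rho_g^{N_G}$ is exactly an $N_G$-sample empirical measure of $\rho$. Hence the expectation of each group's inner average equals the quantity controlled in the proof of Proposition~\ref{prop:second_term}, namely the particle-coupling estimate \eqref{eq:self_key_particle_bound}, with $N$ replaced by $N_G\ge2$. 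That estimate bounds it by $C_1^pN_G^{-p/2}$, which absorbs the diagonal self-interaction via $\mathcal D_{p,\Theta}(\rho)$ and the leave-one-out term via $\mathcal I_{p,\Theta}(\rho,\rho)$.

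Averaging over $g$ and applying Jensen's (Lyapunov's) inequality gives an $\bbE W_p$ bound of $C_1\|V\|_2N_G^{-1/2}$. The constant does not depend on $G$, because the groups enter only through an average of identically distributed terms; independence across groups is never needed.

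Part 3 follows the same pattern, now coupling the common query $s\sim\nu$. Each component $\fA(\cdot;\rho_g^{N_G})_\sharp\nu$ is coupled with $\sA(\nu,\rho)$ through $s$. Convexity of $W_p^p$ under mixtures, or equivalently the mixture of these couplings, gives
\[
W_p^p\bigl(\widehat\eta_{\mathsf R}^{N,G},\sA(\nu,\rho)\bigr)\le\frac1G\sum_g\int\bigl\|\fA(s;\rho_g^{N_G})-\fA(s;\rho)\bigr\|_2^p\,\nu(ds).
\]
Taking expectations, I apply \eqref{eq:cross_att_thm_lya}, the consequence of Lemma~\ref{lem:urbain_lemma_revised} plus Lyapunov, with $M=N_G$. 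This gives $C_3^{p/r_p}N_G^{-p/2}\exp\bigl(\tfrac{p}{r_p}c(r_p)a_\rho^2\|\Theta^\top s\|_2^2\bigr)$ pointwise. Integrating against $\nu$ produces $\mathcal I_{p,\Theta}(\nu,\rho)<\infty$, and Jensen's inequality then gives $C_{\mathsf R}=\|V\|_2C_3^{1/r_p}\mathcal I_{p,\Theta}(\nu,\rho)^{1/p}$.

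The main obstacle is Part 1. The second-term estimate there is stated in the paper as an $\bbE W_p$ bound for a single ensemble, whereas I need its underlying form: a bound on the expected $p$th power of the per-particle discrepancy, uniform in the particle index. Only in that form can it be averaged across groups. I expect \eqref{eq:self_key_particle_bound} to provide exactly this. If it instead bounds only $\bbE W_p$, I would re-derive it by splitting $\fA(X_i;\rho^{N_G})$ into the diagonal contribution, handled by Lemma~\ref{lem:remove_XX_term}, and the leave-one-out empirical key term. In the latter, $X_i$ is independent of the other keys, so Lemma~\ref{lem:urbain_lemma_revised} applies conditionally on $X_i$.
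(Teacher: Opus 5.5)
Your proposal is correct and follows the paper's proof essentially step for step. The left-input identity comes from a fixed pushforward commuting with mixtures; the self-attention case uses the oracle $\fA(\cdot;\rho)_\sharp\rho^N$, the common-particle coupling, and the per-particle $L^p$ form of \eqref{eq:self_key_particle_bound} applied groupwise with $N_G$ samples (that middle expression is exactly the form you need); and the right-input case averages the common-query couplings over groups and applies the $L^p$ empirical-key estimate with $M=N_G$.
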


\begin{proof}
For self-attention, introduce the population-key empirical output
\begin{equation}
    \widetilde{\eta}_{\mathsf{S}}^N=\fA(\cdot;\rho)_{\sharp}\rho^N.
\end{equation}
Pairing the two outputs generated from each common query particle $X_{g,i}$ gives
\begin{equation}\label{eq:grouped_self_common_particle_coupling}
    W_p^p\left(\widehat{\eta}_{\mathsf{S}}^{N,G},\widetilde{\eta}_{\mathsf{S}}^N\right)\leq\frac{1}{N}\sum_{g=1}^G\sum_{i=1}^{N_G}\left\|\fA(X_{g,i};\rho_g^{N_G})-\fA(X_{g,i};\rho)\right\|_2^p.
\end{equation}
The leave-one-out and diagonal estimates in the proof of Proposition~\ref{prop:second_term}, applied separately within each group with sample size $N_G$, imply
\begin{equation}\label{eq:grouped_self_local_key_error}
    \left(\bbE W_p^p\left(\widehat{\eta}_{\mathsf{S}}^{N,G},\widetilde{\eta}_{\mathsf{S}}^N\right)\right)^{1/p}\leq CN_G^{-1/2}.
\end{equation}
By Proposition~\ref{prop:query_lipschitz} and \eqref{eq:value_map_reduction},
\begin{equation}
    W_p\left(\widetilde{\eta}_{\mathsf{S}}^N,\sA(\rho)\right)\leq a_{\rho}^2\|V\|_2\|\Theta\|_2W_p(\rho^N,\rho),
\end{equation}
where $a_{\rho}$ is the strong sub-Gaussian parameter of $\rho$. The triangle and Lyapunov inequalities prove \eqref{eq:grouped_self_attention_bound}.

For left-input cross-attention, the pointwise map $s\mapsto\fA(s;\nu)$ is fixed, and pushforward by a fixed map commutes with finite mixtures. Hence
\begin{equation}
    \widehat{\eta}_{\mathsf{L}}^{N,G}=\frac{1}{G}\sum_{g=1}^G\fA(\cdot;\nu)_{\sharp}\rho_g^{N_G}=\fA(\cdot;\nu)_{\sharp}\rho^N=\sA(\rho^N,\nu).
\end{equation}
Equation~\eqref{eq:grouped_left_cross_bound} follows from Proposition~\ref{prop:query_lipschitz} and \eqref{eq:value_map_reduction}.

For right-input cross-attention, couple each $\sA(\nu,\rho_g^{N_G})$ with $\sA(\nu,\rho)$ using the same query $s\sim\nu$, and average the resulting couplings over $g$. This gives
\begin{equation}
    W_p^p\left(\widehat{\eta}_{\mathsf{R}}^{N,G},\sA(\nu,\rho)\right)\leq\frac{1}{G}\sum_{g=1}^G\int\left\|\fA(s;\rho_g^{N_G})-\fA(s;\rho)\right\|_2^p\nu(ds).
\end{equation}
The $L^p$ empirical-key estimate used in the proof of \eqref{eq:cross_key_empirical}, together with \eqref{eq:value_map_reduction}, yields
\begin{equation}\label{eq:grouped_right_local_key_error}
    \left(\bbE W_p^p\left(\widehat{\eta}_{\mathsf{R}}^{N,G},\sA(\nu,\rho)\right)\right)^{1/p}\leq C_{\mathsf{R}}N_G^{-1/2}.
\end{equation}
Lyapunov's inequality proves \eqref{eq:grouped_right_cross_bound}.
\end{proof}

We now propagate these groupwise errors through a finite transformer stack, using the block types $\kappa_\ell$ and the first right-input layer $\ell_{\mathsf R}$ in \eqref{eq:first_right_input_layer}.

\begin{theorem}[Grouped Transformer Limit]
\label{thm:grouped_transformer_limit}
Fix $p\geq1$, let $\mu_0\in\cP_p$, and assume Assumption~\ref{ass:finite_transformer_architecture} together with the first-block condition in Theorem~\ref{thm:transformer_stack_limit}. Let $G,N_G\in\mathbb{N}$ with $N_G\geq2$, set $N=GN_G$, and let $\{X_{g,i}:g\in[G],\ i\in[N_G]\}$ be i.i.d. samples from $\mu_0$. Define
\begin{equation}
    \widehat{\mu}_{0,g}=\frac{1}{N_G}\sum_{i=1}^{N_G}\delta_{X_{g,i}},\qquad \mu_0^N=\frac{1}{G}\sum_{g=1}^G\widehat{\mu}_{0,g}.
\end{equation}
Evaluate the stack independently within each group by
\begin{equation}
    \widehat{\mu}_{\ell,g}=\sB_{\ell}(\widehat{\mu}_{\ell-1,g}),\qquad \widehat{\mu}_{\ell}^{N,G}=\frac{1}{G}\sum_{g=1}^G\widehat{\mu}_{\ell,g},\qquad \ell\in[L].
\end{equation}
The superscripts $N$ and $(N,G)$ record dependence on the initial sample size and grouping parameters and do not necessarily indicate the number of atoms after an $\mathsf{R}$ block. Let the population stack be
\begin{equation}
    \mu_{\ell}=\sB_{\ell}(\mu_{\ell-1}),\qquad \ell\in[L].
\end{equation}
Then there exists $C>0$, depending only on $p$, $L$, the architecture parameters, $\mu_0$, and the prescribed auxiliary measures, such that

\begin{equation}\label{eq:grouped_transformer_bound}
    \bbE W_p\left(\widehat\mu_L^{N,G},\mu_L\right)\leq C\Bigg[\mathbf1_{\{\ell_{\mathsf R}=L+1\}}\bbE W_p(\mu_0^N,\mu_0)
    +\left(\sum_{\ell=1}^{\ell_{\mathsf R}-1}\mathbf1_{\{\kappa_\ell=\mathsf S\}}+\mathbf1_{\{\ell_{\mathsf R}\leq L\}}\right)N_G^{-1/2}\Bigg].
\end{equation}

The attention-error factor can also be written in terms of $N$ and $G$:
\begin{equation}\label{eq:grouped_transformer_bound_NG}
    N_G^{-1/2}=\sqrt{\frac{G}{N}}.
\end{equation}

Since $\mu_0\in\cP_p$, one has $\bbE W_p(\mu_0^N,\mu_0)\to0$. If at least one block has type $\mathsf S$ or $\mathsf R$, the additional groupwise terms vanish whenever $N_G\to\infty$, equivalently $G=o(N)$. For fixed $G$, the attention error is $O(N^{-1/2})$. If $\kappa_\ell=\mathsf L$ for every $\ell$, grouping is exact:
\begin{equation}\label{eq:grouped_all_left_exact}
    \widehat{\mu}_L^{N,G}=\mu_L^N.
\end{equation}
For architectures with self-attention or right-input cross-attention, keeping $N_G$ fixed while increasing $N$ does not make the attention error terms in this bound vanish.
\end{theorem}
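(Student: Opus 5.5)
The plan follows the oracle-stack argument in the proof of Theorem~\ref{thm:transformer_stack_limit}, carried out groupwise. First I would prove the exact identity \eqref{eq:grouped_all_left_exact}. If every block has type $\mathsf L$, each block is a pushforward by the fixed pointwise map $\Phi_\ell\circ(\mathrm{Id}+\fA_\ell(\cdot;\nu_\ell))$. Pushforward by a fixed map commutes with finite mixtures, as in part~2 of Theorem~\ref{thm:grouped_single_attention}. Induction on $\ell$ then gives $\widehat\mu_\ell^{N,G}=\mu_\ell^N$.

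For the general bound I would use the same oracle as in \eqref{eq:ordinary_oracle_recursion}. For $\ell<\ell_{\mathsf R}$, set $\widetilde\mu_\ell^N$ equal to the pushforward of $\widetilde\mu_{\ell-1}^N$ by the population pointwise map (for $\mathsf S$) or by the fixed-key map (for $\mathsf L$), with $\widetilde\mu_0^N=\mu_0^N$. For $\ell\geq\ell_{\mathsf R}$, set $\widetilde\mu_\ell^N=\mu_\ell$. These are deterministic pushforwards, so they commute with the group mixture. Each $\widetilde\mu_\ell^N$ is therefore the mixture of groupwise oracles $\widetilde\mu_{\ell,g}$, and it is the empirical measure of all $N$ particles pushed forward. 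The global oracle error \eqref{eq:ordinary_oracle_global_error} then holds verbatim, giving the term $\mathbf1_{\{\ell_{\mathsf R}=L+1\}}\bbE W_p(\mu_0^N,\mu_0)$.

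The remaining step is the recursion for $D_\ell=(\bbE W_p^p(\widehat\mu_\ell^{N,G},\widetilde\mu_\ell^N))^{1/p}$. Here I would use convexity of $W_p^p$ under mixtures with equal weights: gluing groupwise couplings gives
\[
W_p^p\Bigl(\tfrac1G\textstyle\sum_g\alpha_g,\tfrac1G\sum_g\beta_g\Bigr)\leq\tfrac1G\textstyle\sum_gW_p^p(\alpha_g,\beta_g).
\]
So it suffices to bound $\bbE W_p^p(\widehat\mu_{\ell,g},\widetilde\mu_{\ell,g})$ for each group. Within a group, the recursion \eqref{eq:ordinary_oracle_error_recursion} should go through with $N$ replaced by $N_G$.
\begin{itemize}
\item For $\mathsf S$ blocks before $\ell_{\mathsf R}$, the new empirical-key error comes from the particle estimate \eqref{eq:self_key_particle_bound} applied to a group of $N_G$ particles. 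This is valid at $\ell=1$ because the group's particles are i.i.d.; for $\ell\geq2$, the oracle particles in group $g$ are i.i.d. samples from $\mu_{\ell-1}$ pushed forward.
\item The propagated error is handled for each realization by compact stability (Proposition~\ref{prop:block_compact_lipschitz}) or by global query Lipschitzness (Proposition~\ref{prop:query_lipschitz}).
\item At $\ell_{\mathsf R}$, the term from \eqref{eq:cross_key_particle_lp} with $N_G$ i.i.d. oracle keys gives $N_G^{-1/2}$, exactly as in \eqref{eq:grouped_right_local_key_error}.
\item After $\ell_{\mathsf R}$, compact stability only propagates existing error.
\end{itemize}
Averaging over $g$ yields $D_\ell\le C_\ell D_{\ell-1}+C_\ell\zeta_{\ell,N_G}$, with constants independent of $G$ and $N_G$. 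Iterating over the fixed number of layers and combining with the oracle error via the triangle and Lyapunov inequalities gives \eqref{eq:grouped_transformer_bound}. The identity \eqref{eq:grouped_transformer_bound_NG} and the stated limiting statements are then immediate.

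The main obstacle is the constant-uniformity and independence bookkeeping in the per-group recursion. The oracle at layer $\ell-1$ in group $g$ must genuinely consist of $N_G$ i.i.d. draws from $\mu_{\ell-1}$, so that the leave-one-out and diagonal estimates of Proposition~\ref{prop:second_term} apply with sample size $N_G$. The compact-stability constants must depend only on the deterministic layer-norm radius, through Lemma~\ref{lem:layer_norm_bounded}, and not on the group. I would check that the deterministic ball is common to all groups and to the oracle, which holds because every block ends in $f^{\mathrm{LN}}_{\ell,2}$. I would also check that the per-realization Lipschitz bounds are applied before taking expectations, so that dependence among particles within a group after attention causes no difficulty.
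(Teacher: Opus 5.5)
Your proposal is correct and follows the paper's proof essentially step by step. The shared steps are the groupwise oracle that coincides with the population stack from $\ell_{\mathsf R}$ onward, convexity of $W_p^p$ under equal-weight mixtures, the per-group recursion with $N_G$ in place of $N$, and commutation of fixed pushforwards with mixtures for the all-$\mathsf L$ identity. For $\ell\geq2$, note that Assumption~\ref{ass:self_noncompact_input} for $\mu_{\ell-1}$ comes from its bounded support via Lemma~\ref{lem:bounded_support_ssg}.

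One bookkeeping fix: as in the paper, run the recursion on $\bigl(\frac1G\sum_{g}\bbE W_p^p(\widehat\mu_{\ell,g},\widetilde\mu_{\ell,g})\bigr)^{1/p}$, not on the mixture distance you called $D_\ell$. Grouped evaluation is not a function of the mixture, so the mixture error at layer $\ell-1$ does not control the groupwise errors at layer $\ell$. Apply the mixture convexity only at the final layer $L$.
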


\begin{proof}
Define the pointwise post-attention map
\begin{equation}
    \Phi_{\ell}=f_{\ell,2}^{\mathrm{LN}}\circ(\mathrm{Id}+f_{\ell}^{\mathrm{NN}})\circ f_{\ell,1}^{\mathrm{LN}}.
\end{equation}
By Lemma~\ref{lem:layer_norm_bounded} and the Lipschitz assumption on $f_{\ell}^{\mathrm{NN}}$, each $\Phi_{\ell}$ is globally Lipschitz and has bounded range. Construct oracle group measures by setting $\widetilde{\mu}_{0,g}=\widehat{\mu}_{0,g}$ and
\begin{equation}\label{eq:grouped_oracle_recursion}
    \widetilde{\mu}_{\ell,g}=\begin{cases}\left(\Phi_{\ell}\circ\left(\mathrm{Id}+\fA_{\ell}(\cdot;\mu_{\ell-1})\right)\right)_{\sharp}\widetilde{\mu}_{\ell-1,g},&\ell<\ell_{\mathsf{R}},\ \kappa_{\ell}=\mathsf{S},\\
    \sB_{\ell}^{\mathsf{L}}(\widetilde{\mu}_{\ell-1,g};\nu_{\ell}),&\ell<\ell_{\mathsf{R}},\ \kappa_{\ell}=\mathsf{L},\\
    \mu_{\ell},&\ell\geq\ell_{\mathsf{R}}.
    \end{cases}
\end{equation}
For every $\ell<\ell_{\mathsf{R}}$, the oracle transition is the pushforward by a deterministic pointwise map. Hence $\widetilde{\mu}_{\ell,g}$ is the empirical measure of $N_G$ i.i.d. samples from $\mu_{\ell}$, and the $GN_G=N$ oracle particles are i.i.d. with law $\mu_{\ell}$. Set
\begin{equation}
    \widetilde{\mu}_{\ell}^{N,G}=\frac{1}{G}\sum_{g=1}^G\widetilde{\mu}_{\ell,g}.
\end{equation}
For $\ell<\ell_{\mathsf{R}}$, the concatenated oracle measure is the image of $\mu_0^N$ under the corresponding composition of population pointwise maps. Proposition~\ref{prop:query_lipschitz}, \eqref{eq:value_map_reduction}, Lemma~\ref{lem:layer_norm_bounded}, and the Lipschitz assumption on each $f_{\ell}^{\mathrm{NN}}$ imply that this composition is Lipschitz. If $\ell_{\mathsf{R}}\leq L$, the oracle measures equal the population measures from layer $\ell_{\mathsf{R}}$ onward. Therefore,
\begin{equation}\label{eq:grouped_oracle_global_error}
    \bbE W_p\left(\widetilde{\mu}_L^{N,G},\mu_L\right)\leq C\mathbf{1}_{\{\ell_{\mathsf{R}}=L+1\}}\bbE W_p(\mu_0^N,\mu_0).
\end{equation}

It remains to compare the independently evaluated groups with their oracle counterparts. Define
\begin{equation}
    D_{\ell}=\left(\frac{1}{G}\sum_{g=1}^G\bbE W_p^p\left(\widehat{\mu}_{\ell,g},\widetilde{\mu}_{\ell,g}\right)\right)^{1/p},\qquad D_0=0,
\end{equation}
and set

\begin{equation}
    \zeta_{\ell,N_G}=N_G^{-1/2}
    \left(\mathbf1_{\{\ell<\ell_{\mathsf R},\ \kappa_\ell=\mathsf S\}}
          +\mathbf1_{\{\ell=\ell_{\mathsf R}\leq L\}}\right).
\end{equation}

We claim that there exist constants $C_{\ell}>0$, independent of $N$, $G$, and $N_G$, such that
\begin{equation}\label{eq:grouped_error_recursion}
    D_{\ell}\leq C_{\ell}D_{\ell-1}+C_{\ell}\zeta_{\ell,N_G},\qquad \ell\in[L].
\end{equation}

Consider first an $\mathsf{S}$ block with $\ell<\ell_{\mathsf{R}}$. Add and subtract the output obtained by applying the empirical block to $\widetilde{\mu}_{\ell-1,g}$. If $\ell\geq2$, Proposition~\ref{prop:block_compact_lipschitz}, together with \eqref{eq:value_map_reduction}, controls the propagated discrepancy; for $\ell=1$, that discrepancy vanishes because $D_0=0$. The remaining discrepancy keeps the same oracle query particles and replaces their empirical key measure by $\mu_{\ell-1}$. The $L^p$ estimates in the proof of Proposition~\ref{prop:second_term}, followed by the residual and post-attention Lipschitz maps, give the self-attention contribution to $\zeta_{\ell,N_G}$. At $\ell=1$, the required assumptions are precisely the first-block assumptions. For $\ell\geq2$, the population measure $\mu_{\ell-1}$ has bounded support by Lemma~\ref{lem:layer_norm_bounded}; it is therefore strong sub-Gaussian by Lemma~\ref{lem:bounded_support_ssg} and belongs to $\cP_{2+\delta,\tau}$ for suitable $\delta,\tau>0$, so Assumption~\ref{ass:self_noncompact_input} holds.

For an $\mathsf{L}$ block, the key measure $\nu_{\ell}$ is fixed. Proposition~\ref{prop:query_lipschitz}, \eqref{eq:value_map_reduction}, the residual identity, and the Lipschitz post-attention maps give $D_{\ell}\leq C_{\ell}D_{\ell-1}$; no compact-support assumption on $\nu_{\ell}$ is needed.

At $\ell=\ell_{\mathsf{R}}\leq L$, add and subtract $\sB_{\ell_{\mathsf{R}}}^{\mathsf{R}}(\widetilde{\mu}_{\ell_{\mathsf{R}}-1,g};\nu_{\ell_{\mathsf{R}}})$. If $\ell_{\mathsf{R}}\geq2$, Proposition~\ref{prop:block_compact_lipschitz}, together with \eqref{eq:value_map_reduction}, controls the propagated discrepancy; if $\ell_{\mathsf{R}}=1$, that discrepancy vanishes. The remaining discrepancy is an empirical-key error with $N_G$ i.i.d. keys and fixed query measure $\nu_{\ell_{\mathsf{R}}}$. The $L^p$ empirical-key estimate used in the proof of \eqref{eq:cross_key_empirical}, followed by the post-attention Lipschitz maps, gives the right-input contribution to $\zeta_{\ell,N_G}$. The required integrability is part of the first-block assumptions when $\ell_{\mathsf{R}}=1$ and follows from compact support when $\ell_{\mathsf{R}}\geq2$.

For $\ell>\ell_{\mathsf{R}}$, the oracle input is already the population measure $\mu_{\ell-1}$, so no fresh empirical-key error occurs. Self-attention and $\mathsf{R}$ blocks are controlled by Proposition~\ref{prop:block_compact_lipschitz} and \eqref{eq:value_map_reduction}; an $\mathsf{L}$ block is controlled as above by Proposition~\ref{prop:query_lipschitz}. Taking the $L^p$ norm over the product of the group index and the underlying probability space and applying Minkowski's inequality proves \eqref{eq:grouped_error_recursion}. Iterating over the fixed number of layers gives

\begin{equation}\label{eq:grouped_error_iteration}
    D_L\leq C\left(\sum_{\ell=1}^{\ell_{\mathsf R}-1}\mathbf1_{\{\kappa_\ell=\mathsf S\}}+\mathbf1_{\{\ell_{\mathsf R}\leq L\}}\right)N_G^{-1/2}.
\end{equation}

Finally, mixing couplings for the $G$ pairs of group measures gives
\begin{equation}
    W_p^p\left(\widehat{\mu}_L^{N,G},\widetilde{\mu}_L^{N,G}\right)\leq\frac{1}{G}\sum_{g=1}^GW_p^p\left(\widehat{\mu}_{L,g},\widetilde{\mu}_{L,g}\right).
\end{equation}
Taking expectations and applying Lyapunov's inequality gives
\begin{equation}
    \bbE W_p\left(\widehat{\mu}_L^{N,G},\widetilde{\mu}_L^{N,G}\right)\leq D_L.
\end{equation}
Combining this estimate with \eqref{eq:grouped_oracle_global_error},
\eqref{eq:grouped_error_iteration}, and the triangle inequality proves
\eqref{eq:grouped_transformer_bound}. Identity~\eqref{eq:grouped_transformer_bound_NG}
follows from $N=GN_G$.

If $\kappa_\ell=\mathsf L$ for every $\ell$, then each block is the pushforward by a fixed pointwise map and therefore commutes with finite mixtures. Iterating this identity proves \eqref{eq:grouped_all_left_exact}.
\end{proof}

\subsection{Extension to Multihead Attention}
\label{subsec:multihead_extension}

The preceding results use one attention head. We now extend them to a
fixed finite number of heads, all evaluated on the same query and key
inputs. The extension uses the pointwise estimates underlying the
single-head proofs, so it preserves the dependence among head outputs.

Let $H\geq1$ be fixed. For head $h\in[H]$, let
$\fA_h(s;\nu)$ be the attention map in
Definition~\ref{def:attention_on_measures}, with matrices $Q_h,K_h,V_h$
and $\Theta_h=Q_h^\top K_h$. Given a fixed output matrix $O$, define
\begin{equation}\label{eq:multihead_pointwise_map}
\begin{aligned}
    \fA^{\mathrm{MH}}(s;\nu)
    &=O\operatorname{concat}\bigl(\fA_1(s;\nu),\ldots,
                                  \fA_H(s;\nu)\bigr),\\
    \sA^{\mathrm{MH}}(\mu,\nu)
    &=\fA^{\mathrm{MH}}(\cdot;\nu)_\sharp\mu,
\end{aligned}
\end{equation}
where $\operatorname{concat}$ stacks the head outputs in one vector.
The dimensions of $O$ are chosen to make the residual connections
compatible. A block is defined as in
Definition~\ref{def:transformer_block}, replacing its attention map by
\eqref{eq:multihead_pointwise_map}. Different layers may have different
fixed numbers of heads and output matrices.

\begin{corollary}[Multihead Extension]\label{cor:multihead_extension}
Fix $p\geq1$. The compact and noncompact self- and cross-attention
estimates remain valid for $\sA^{\mathrm{MH}}$, provided that the
corresponding hypotheses hold for every head. The constants may depend
on $H$, $O$, and all head parameters, but not on the sample sizes.
In particular, under Assumption~\ref{ass:self_noncompact_input} for
each $\Theta_h$,
\begin{equation}\label{eq:multihead_self_rate}
    \bbE W_p\bigl(\sA^{\mathrm{MH}}(\mu^N,\mu^N),
                   \sA^{\mathrm{MH}}(\mu,\mu)\bigr)
    \leq C\bigl(\bbE W_p(\mu^N,\mu)+N^{-1/2}\bigr).
\end{equation}
The conclusions of Theorems~\ref{thm:transformer_stack_limit}
and~\ref{thm:grouped_transformer_limit}, including their
architecture-sensitive bounds, also hold for multihead blocks.
The residual, feed-forward, normalization, and auxiliary-law assumptions
are unchanged; the attention and first-block hypotheses hold for every
head. All heads in a block share its type and auxiliary law, and
residual dimensions are matched after the output projection.
All head counts and output matrices are fixed independently of
$N$ and $G$. The pre-norm result in
Remark~\ref{rem:post_pre_norm} extends in the same way.
\end{corollary}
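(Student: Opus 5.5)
The corollary reduces to the single-head case because all single-head proofs control pointwise differences of attention maps under a fixed particle coupling. I will state those underlying estimates for each head, then combine them through $\operatorname{concat}$ and $O$. The elementary fact used throughout is
\[
\|\fA^{\mathrm{MH}}(s;\nu)-\fA^{\mathrm{MH}}(s';\nu')\|_2
\leq\|O\|_2\sum_{h=1}^H\|\fA_h(s;\nu)-\fA_h(s';\nu')\|_2 .
\]
Since the same coupling serves all heads, the dependence among head outputs causes no difficulty. The bound needs only Minkowski's inequality in $L^p$, over the query law and, where present, the sampling randomness. In $L^p$ one gets $\|\sum_h e_h\|_{L^p}\leq\sum_h\|e_h\|_{L^p}$, with no independence required.

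The first step is query stability. For a fixed key law $\nu$ that is strong sub-Gaussian, Proposition~\ref{prop:query_lipschitz} and \eqref{eq:value_map_reduction} show that each $s\mapsto\fA_h(s;\nu)$ is Lipschitz with constant $a_\nu^2\|V_h\|_2\|\Theta_h\|_2$. Hence $\fA^{\mathrm{MH}}(\cdot;\nu)$ is Lipschitz with constant $\|O\|_2\sum_h a_\nu^2\|V_h\|_2\|\Theta_h\|_2$.

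The second step is the empirical-key error. For each query $s$, Lemma~\ref{lem:urbain_lemma_revised} bounds the $L^{r_p}$ norm of $\fA_h(s;\nu^M)-\fA_h(s;\nu)$ by $M^{-1/2}$ times $\exp(c\,a^2\|\Theta_h^\top s\|^2/r_p)$, up to a constant. Summing over heads and integrating against the query law requires the quantity $\mathcal I_{p,\Theta_h}$ to be finite for each $h$, which is exactly the per-head hypothesis. In the self-attention case I also need the diagonal term $\mathcal D_{p,\Theta_h}$ and the leave-one-out argument of Proposition~\ref{prop:second_term} for each head. These give, for each $h$, the particle-coupling bound
\[
\Bigl(\bbE\,\frac1N\sum_i\|\fA_h(X_i;\mu^N)-\fA_h(X_i;\mu)\|_2^p\Bigr)^{1/p}\leq C_hN^{-1/2},
\]
and summing over $h$ with the factor $\|O\|_2$ gives the multihead version. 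Combining the two steps through the same triangle-inequality split \eqref{eq:triangle_thm1} yields \eqref{eq:multihead_self_rate}. The cross-attention statements follow in the same way, using \eqref{eq:cross_key_particle_lp} for each head.

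The third step covers the compact estimates. In Theorems~\ref{thm:self_convergence} and~\ref{thm:cross_compact_stability}, Proposition~\ref{prop:block_compact_lipschitz}, and Remark~\ref{rem:post_pre_norm}, the proofs bound $\|\fA(s;\mu)-\fA(s';\nu)\|$ by $C(\|s-s'\|+W_p(\mu,\nu))$ on a ball. Summing these bounds over heads gives the same form with constant $\|O\|_2\sum_hC_h$.

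The stack and grouped theorems use the attention map only through these three ingredients: global query-Lipschitzness, the $L^p$ empirical-key particle estimate, and compact joint stability. Everything else is unchanged: the oracle constructions, the recursions \eqref{eq:ordinary_oracle_error_recursion} and \eqref{eq:grouped_error_recursion}, the layer-normalization support bound, and the exact mixture identity for $\mathsf L$ blocks. So I will rerun those proofs verbatim with the multihead constants. Heads share the block type and auxiliary law, so the oracle transitions remain deterministic pointwise maps and the oracle particles stay i.i.d.

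I expect the main obstacle to be the self-attention diagonal and leave-one-out estimate. It must be confirmed that the proof of Proposition~\ref{prop:second_term} yields a per-particle $L^p$ bound, as in \eqref{eq:self_key_particle_bound}, rather than only a bound on $\bbE W_p$. Minkowski can then be applied at the level of the common-particle coupling cost, before any Wasserstein infimum is taken; summing Wasserstein distances of different heads would be invalid. I would first check that \eqref{eq:self_key_particle_bound} is indeed stated in this form, and otherwise extract it from that proof.
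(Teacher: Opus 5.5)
Your proposal is correct and follows essentially the same route as the paper. Both combine the pointwise bound with factor $\|O\|_2\sum_h$, the per-head query-Lipschitz constants $a_\nu^2\|V_h\|_2\|\Theta_h\|_2$, and Minkowski's inequality in $L^p$ under the common-particle coupling \eqref{eq:self_key_particle_bound} or the common-query coupling \eqref{eq:cross_key_particle_lp}, then rerun the oracle recursions with $\fA^{\mathrm{MH}}$. The concern you flagged is already settled, since Proposition~\ref{prop:second_term} states \eqref{eq:self_key_particle_bound} in exactly the per-particle $L^p$ form you need.
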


\begin{proof}
For any queries $s,t$ and key laws $\nu,\widetilde\nu$ for which the
maps are defined, concatenation and the output projection give
\begin{equation}\label{eq:multihead_coupling_bound}
    \bigl\|\fA^{\mathrm{MH}}(s;\nu)
            -\fA^{\mathrm{MH}}(t;\widetilde\nu)\bigr\|_2
    \leq \|O\|_2\sum_{h=1}^H
       \bigl\|\fA_h(s;\nu)-\fA_h(t;\widetilde\nu)\bigr\|_2.
\end{equation}
On compact sets, apply the pointwise query and key stability estimates
to each summand and integrate over one coupling of the query laws.
Minkowski's inequality gives the compact multihead estimate. For
noncompact keys, Proposition~\ref{prop:query_lipschitz}, with each
value matrix retained, bounds the Lipschitz constant of the population
query map by
\[
    \|O\|_2\sum_{h=1}^H
       a_\nu^2\|V_h\|_2\|\Theta_h\|_2.
\]
This controls transport of the query measure. For the empirical-key
error, use \eqref{eq:multihead_coupling_bound} in $L^p$ under the
common-particle coupling of \eqref{eq:self_key_particle_bound} for
self-attention, or the common-query coupling of
\eqref{eq:cross_key_particle_lp} for cross-attention. Each head
contributes an $O(N^{-1/2})$ or $O(M^{-1/2})$ term. The finite sum
preserves the order, proving the single-layer claims, including
\eqref{eq:multihead_self_rate}. Independence between heads is not
required: they use the same particles and the same coupling.

For stacks, replace the pointwise attention map in the oracle
recursion~\eqref{eq:ordinary_oracle_recursion} by
$\fA^{\mathrm{MH}}$. Before the first prescribed-query block, the
oracle particles are still i.i.d. images of the initial samples under
a deterministic population map. The headwise estimates just proved
give the same local error orders in
\eqref{eq:ordinary_oracle_error_recursion}. Residual connections,
feed-forward maps, and layer normalization propagate these errors
as in the single-head proof. Thus the stack bounds retain their rates.
Applying the same argument to each group gives the
$N_G^{-1/2}$ attention term in
\eqref{eq:grouped_transformer_bound}; the oracle input contribution
still uses all $N$ particles. Finally, normalized pre-norm branches
have bounded range for every head, so their deterministic compact
estimate follows from the first part of the argument.
\end{proof}

\subsection{Example: Set Transformers as {Measure-to-Vector Maps}}
\label{subsec:set_transformer_limit}

The Set Transformer of \citet{lee2019set} combines an equivariant encoder
with attention pooling at learned seed vectors. We consider a self-attention
encoder with parameters, depth, head counts, and seeds fixed independently
of the input sample size. Following the operator formulation of
\citet[Sections~3.1 and~4.1]{bach2026learning}, its {measure-to-vector} map is
\begin{equation}\label{eq:set_transformer_feature_map}
    \mu
    \xrightarrow[\text{encoder}]{\sB_L^{\mathsf S}\circ\cdots\circ\sB_1^{\mathsf S}}
    \mu_L
    \xrightarrow{g_\sharp}
    g_\sharp\mu_L
    \xrightarrow[\eqref{eq:set_transformer_pooling}]{\text{fixed-seed pooling}}
    (z_j(\mu))_{j=1}^k
    \xrightarrow{f_{\mathrm{out}}}
    \sFST(\mu).
\end{equation}
The blocks are those of Definition~\ref{def:transformer_block}, with the
multihead attention map in \eqref{eq:multihead_pointwise_map}.
For an input probability measure
$\mu$ on $\bbR^{d_0}$, set $\mu_0=\mu$ and
\begin{equation}
    \mu_{\ell}=\sB_{\ell}^{\mathsf{S}}(\mu_{\ell-1}),
    \qquad \ell\in[L],
\end{equation}
where $L\geq1$. Let
$g:\bbR^{d_L}\to\bbR^{d}$ be a fixed Lipschitz map applied separately
to each encoded token. This is the row-wise feed-forward map used
before pooling. Fix an ordered tuple of seed vectors
$(s_1,\ldots,s_k)\in(\bbR^{d})^k$. Pooling is an
$\mathsf R$ block with these seeds as queries and $g_\sharp\mu_L$
as the key law. Define its ordered outputs by
\begin{equation}\label{eq:set_transformer_pooling}
    \delta_{z_j(\mu)}
    =\sB_{\mathrm{pool}}^{\mathsf R}(g_\sharp\mu_L;\delta_{s_j}),
    \qquad j\in[k].
\end{equation}
The block sends each Dirac query law to a Dirac output law, so this
defines $z_j(\mu)$ uniquely. Its attention output has dimension $d$,
its feed-forward map is Lipschitz, and both layer normalizations have
positive $\epsilon$. For a fixed map
$f_{\mathrm{out}}:(\bbR^{d})^k\to\bbR^q$, the {vector output} is
$\sFST(\mu)=f_{\mathrm{out}}(z_1(\mu),\ldots,z_k(\mu))$ on the
input laws for which the encoder attention integrals exist.
Assume that $f_{\mathrm{out}}$ is Lipschitz on the bounded set reached
by the pooled tokens. This includes a finite decoder of self-attention
blocks acting on the $k$ seed outputs, followed by a Lipschitz
feed-forward readout: on this bounded set, the compact pointwise
estimates control token differences with the seed labels preserved.

The map is invariant to input permutations because it depends on the
input ensemble only through its empirical measure. Its output retains
the seed labels, so the convergence argument below compares corresponding
pooled tokens.

\begin{theorem}[Fixed-Parameter Set Transformer Limit]
\label{thm:set_transformer_limit}
Fix $p\geq1$ and the architecture just described. Assume that the
encoder satisfies Assumption~\ref{ass:finite_transformer_architecture}
head by head, as in Corollary~\ref{cor:multihead_extension}, with
$\kappa_{\ell}=\mathsf{S}$ for every $\ell$. Assume that $\mu$
satisfies Assumption~\ref{ass:self_noncompact_input} for every first-layer
encoder attention matrix $\Theta_{1,h}$ and this exponent $p$. Let
$\mu^N=N^{-1}\sum_{i=1}^N\delta_{X_i}$, where the $X_i$ are i.i.d.
with law $\mu$. Then the {measure-to-vector} map in
\eqref{eq:set_transformer_feature_map} is well defined at $\mu$ and
$\mu^N$, and there is a constant $C$ independent of $N$ such that
\begin{equation}\label{eq:set_transformer_feature_rate}
    \bbE\left\|\sFST(\mu^N)-\sFST(\mu)\right\|_2
    \leq C N^{-1/2},
    \qquad N\geq2.
\end{equation}
The constant may depend on the input law, the dimensions, and all
fixed architecture parameters. Consequently, the output converges in
$L^1$ and in probability. Outputs from any two i.i.d. ensembles with
the same input law converge to one another in these senses as both
sample sizes tend to infinity.
\end{theorem}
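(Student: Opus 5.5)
The plan is to view the pooling stage as one additional right-input cross-attention block appended to the encoder, and then to use the oracle argument behind the architecture-sensitive refinement \eqref{eq:transformer_stack_architecture_refinement}. In the augmented stack the first $\mathsf R$ layer is the pooling layer, so $\ell_{\mathsf R}\leq L+1$ and the input-sampling term $\bbE W_p(\mu^N,\mu)$ drops out. This matters: a bound through $\bbE W_p(\mu^N,\mu)$ alone would give only the dimension-dependent empirical rate, not $N^{-1/2}$. Well-definedness at $\mu$ follows from Assumption~\ref{ass:self_noncompact_input}, which makes the first-layer integrals finite. Well-definedness at $\mu^N$ is trivial. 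After each block, Lemma~\ref{lem:layer_norm_bounded} places $\mu_\ell^N$ and $\mu_\ell$ in a common deterministic ball. Hence $g_\sharp\mu_L^N$ and $g_\sharp\mu_L$ lie in a common ball, and every pooled token $z_j$ lies in a bounded set, because pooling ends with layer normalization. This is the set on which $f_{\mathrm{out}}$ is assumed Lipschitz.

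\textbf{Step 1 (encoder oracle).} I will define oracle measures $\widetilde\mu_\ell^N$ as in \eqref{eq:ordinary_oracle_recursion}: each $\mathsf S$ block is replaced by the population pointwise map $\Phi_\ell\circ(\mathrm{Id}+\fA^{\mathrm{MH}}_\ell(\cdot;\mu_{\ell-1}))$. Then $\widetilde\mu_L^N$ is the empirical measure of $N$ i.i.d.\ samples from $\mu_L$, and $g_\sharp\widetilde\mu_L^N$ is the empirical measure of $N$ i.i.d.\ samples from $g_\sharp\mu_L$. Next I will use the recursion \eqref{eq:ordinary_oracle_error_recursion} in its multihead form from Corollary~\ref{cor:multihead_extension}. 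Its ingredients are the first-layer particle-coupling estimate \eqref{eq:self_key_particle_bound} under Assumption~\ref{ass:self_noncompact_input} for each $\Theta_{1,h}$, together with compact stability (Proposition~\ref{prop:block_compact_lipschitz}) for $\ell\geq2$. This gives
\[
D_L=\bigl(\bbE W_p^p(\mu_L^N,\widetilde\mu_L^N)\bigr)^{1/p}\leq C N^{-1/2},
\]
and hence $\bigl(\bbE W_p^p(g_\sharp\mu_L^N,g_\sharp\widetilde\mu_L^N)\bigr)^{1/p}\leq \mathrm{Lip}(g)\,C N^{-1/2}$.

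\textbf{Step 2 (pooling).} Fix $j$. Since the query law $\delta_{s_j}$ is a Dirac, the pooled output is a Dirac, and $W_p(\delta_a,\delta_b)=\|a-b\|_2$. I will split the pooling error at the oracle,
\[
\|z_j(\mu^N)-z_j(\mu)\|_2\leq \|z_j(\mu^N)-\widetilde z_j\|_2+\|\widetilde z_j-z_j(\mu)\|_2,
\]
where $\widetilde z_j$ is the pooled output with key law $g_\sharp\widetilde\mu_L^N$. For the first term, all key laws lie in a fixed ball, so compact cross-attention stability in the key argument (Theorem~\ref{thm:cross_compact_stability}, or pointwise Lemma~\ref{lem:nonIID_W1_bound}, headwise as in \eqref{eq:multihead_coupling_bound}) and the Lipschitz post-attention map $\Phi_{\mathrm{pool}}$ bound it by $C\,W_p(g_\sharp\mu_L^N,g_\sharp\widetilde\mu_L^N)$. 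This requires no independence. For the second term, the keys are $N$ i.i.d.\ samples from a compactly supported law, which is strong sub-Gaussian by Lemma~\ref{lem:bounded_support_ssg}. A single fixed query $s_j$ trivially satisfies the integrability condition. The fixed-query estimate \eqref{eq:cross_key_particle_lp} (via Lemma~\ref{lem:urbain_lemma_revised}) then gives an $L^p$ bound $CN^{-1/2}$. The residual identity contributes identically to both terms and cancels, since the query is common.

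\textbf{Step 3 (readout).} Since $f_{\mathrm{out}}$ is Lipschitz on the bounded set reached by the pooled tokens, I get
\[
\|\sFST(\mu^N)-\sFST(\mu)\|_2\leq C\sum_{j=1}^k\|z_j(\mu^N)-z_j(\mu)\|_2.
\]
Taking expectations, using Lyapunov's inequality to pass from the $L^p$ bounds of Steps 1 and 2 to $L^1$, and summing over the finitely many seeds yields \eqref{eq:set_transformer_feature_rate}. Convergence in probability then follows from Markov's inequality. Mutual convergence of outputs from two ensembles follows from the triangle inequality through $\sFST(\mu)$.

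I expect the main obstacle to be Step 1 at the first layer. There one must keep the $L^p$ (not merely $L^1$) form of the particle-coupling bound, and the queries are the sampled particles themselves, correlated with the empirical keys. This is where the diagonal term $\mathcal D_{p,\Theta}$ and the leave-one-out estimate enter, and the bound must be kept headwise. Since the paper already isolates \eqref{eq:self_key_particle_bound} and its multihead version, I would invoke those estimates rather than redo them. I would then check carefully that the oracle remains exactly i.i.d.\ through multihead $\mathsf S$ blocks and the map $g$.
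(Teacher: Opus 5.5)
Your proposal is correct and is essentially the paper's proof. The paper likewise obtains $(\bbE W_p^p(\mu_L^N,\widetilde\mu_L^N))^{1/p}\le CN^{-1/2}$ from the multihead oracle recursion. It then inserts $\fA_{\mathrm{pool}}(s_j;g_\sharp\widetilde\mu_L^N)$ for each seed and controls the two pieces by Lemma~\ref{lem:nonIID_W1_bound} and by the fixed-query estimate of Lemma~\ref{lem:urbain_lemma_revised} on the i.i.d.\ compactly supported oracle keys. It cancels the residual seed and concludes with the Lipschitz post-attention and readout maps, Lyapunov, Markov, and the triangle inequality. The only detail to make explicit is that the common ball must be enlarged to contain the fixed seeds $s_j$, since the compact key estimate is stated for queries in $B_R(0)$.
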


\begin{proof}
Let $\mu_L^N$ be the empirical encoder output and let
$\widetilde\mu_L^N$ be the oracle output defined in
\eqref{eq:ordinary_oracle_recursion}. Since the encoder contains only
self-attention blocks, $\widetilde\mu_L^N$ is an empirical measure of
$N$ i.i.d. samples with law $\mu_L$. Moreover, the encoder estimate
\eqref{eq:ordinary_oracle_error_iteration}, extended by
Corollary~\ref{cor:multihead_extension}, gives
\begin{equation}\label{eq:set_transformer_encoder_oracle}
    \left(\bbE W_p^p(\mu_L^N,\widetilde\mu_L^N)\right)^{1/p}
    \leq C N^{-1/2}.
\end{equation}
All three measures $\mu_L^N$, $\widetilde\mu_L^N$, and $\mu_L$ have
support in one deterministic ball, by the final encoder layer
normalization. Their pushforwards by $g$ also have support in a common
deterministic ball.

For each seed, insert the intermediate attention output
$\fA_{\mathrm{pool}}(s_j;g_{\sharp}\widetilde\mu_L^N)$.
The compact key estimate in Lemma~\ref{lem:nonIID_W1_bound}, applied
head by head and combined using \eqref{eq:multihead_coupling_bound}, gives
\begin{equation}
    \bigl\|\fA_{\mathrm{pool}}(s_j;g_{\sharp}\mu_L^N)
        -\fA_{\mathrm{pool}}(s_j;g_{\sharp}\widetilde\mu_L^N)
      \bigr\|_2\leq C W_p(\mu_L^N,\widetilde\mu_L^N).
\end{equation}
Choose the ball large enough to contain every fixed seed as well.
The keys of $g_{\sharp}\widetilde\mu_L^N$ are i.i.d. with bounded
support. The fixed-query empirical-key estimate in
Lemma~\ref{lem:urbain_lemma_revised}, used with exponent $2$ for each
head and combined using \eqref{eq:multihead_coupling_bound}, therefore
implies
\begin{equation}
    \bbE\left\|
       \fA_{\mathrm{pool}}(s_j;g_{\sharp}\widetilde\mu_L^N)
       -\fA_{\mathrm{pool}}(s_j;g_{\sharp}\mu_L)
    \right\|_2
    \leq C N^{-1/2}.
\end{equation}
The residual seed cancels when comparing outputs with the same label
$j$. Apply the Lipschitz post-attention operations in the pooling
block, sum over the fixed number $k$ of labels, and apply the
Lipschitz readout. Equation~\eqref{eq:set_transformer_encoder_oracle}
and Lyapunov's inequality then prove
\eqref{eq:set_transformer_feature_rate}.

Finally, Markov's inequality gives convergence in probability, and
the triangle inequality through $\sFST(\mu)$ proves the assertion
for two ensembles, whether or not those ensembles are independent
of each other.
\end{proof}

\begin{remark}[Induced Attention and Compact Inputs]
The inducing vectors in an induced self-attention block are fixed
parameters, but their updated states depend on the input law.
Writing $\eta_I$ for the empirical measure of the fixed inducing
vectors, the measure form of this block is
\begin{equation}
    \rho\longmapsto
    \sB_2^{\mathsf L}\bigl(\rho;\sB_1^{\mathsf R}(\rho;\eta_I)\bigr).
\end{equation}
The first block updates the inducing queries using the input keys;
the second updates the input queries using these induced keys.
The intermediate key law
$\sB_1^{\mathsf R}(\rho;\eta_I)$ depends on $\rho$ and is not a
prescribed auxiliary law in
Assumption~\ref{ass:finite_transformer_architecture}. Thus the
noncompact theorem above is stated for a self-attention-block
encoder. For inputs supported in a fixed ball, however, the two-input
compact block estimate in Proposition~\ref{prop:block_compact_lipschitz}
applies successively to the two displayed blocks, while retaining
$\rho$ as the outer query law. Finite encoders built from either
self-attention or induced self-attention blocks, followed by the
ordered pooling and readout above, consequently satisfy
\begin{equation}
    \bbE\left\|\sFST(\mu^N)-\sFST(\mu)\right\|_2
    \leq C\bbE W_p(\mu^N,\mu)\longrightarrow0
\end{equation}
for compactly supported $\mu$. Here all inducing vectors, seeds, and
network parameters are again fixed independently of $N$.
\end{remark}

\section{Numerical Experiments}
\label{sec:experiments}

We examine how the output of a {fixed trained self-attention transformer} converges
as the input ensemble grows, using compactly supported, Gaussian, and
faster-than-Gaussian input laws. The experiments compare the observed decay
with the rates suggested by Theorem~\ref{thm:transformer_stack_limit}.
We also measure approximation to the true operator to distinguish
particle-count convergence from the accuracy of the learned map.

\subsection{Experimental setting}
\label{subsec:experimental_protocol}

For each problem, we seek to approximate an operator
$\sG^\dagger:\mathcal X_d\to\mathcal Y_d$, where $\mathcal X_d$ and
$\mathcal Y_d$ are classes of probability measures. Given independent
samples $X_1,\ldots,X_N$ from $\mu\in\mathcal X_d$, write
\[
    \mu^N=\frac1N\sum_{i=1}^N\delta_{X_i},\qquad
    \widehat\pi_\theta^N=\sG_\theta(\mu^N),\qquad
    \pi_\theta=\sG_\theta(\mu).
\]
The network maps the input ensemble to an output ensemble of the same
size. It consists of a pointwise input map, a stack of multihead
self-attention and feed-forward blocks with post-norm layer normalization,
and a pointwise output map. There are no positional encodings.
At each optimization step, each item in the batch receives a freshly
drawn random input measure $\mu$. We sample input particles from $\mu$
and determine its true output law $\sG^\dagger(\mu)$ analytically from
the selected measure parameters. The input ensemble passes through the
transformer, and we compare the predicted particles with an independent
ensemble sampled from this true output law. Training minimizes their
empirical energy distance \citep{szekely2013energy}, averaged over the
batch; on the torus, this distance uses sine--cosine embeddings.
The input measures vary during training, while the target operator
remains fixed. At evaluation, the trained parameters are frozen and
attention acts on the complete ensemble, including the reference.
Appendices~\ref{app:experimental_generators} and
\ref{app:experimental_training} specify the random measure families used
for training and testing, target sampling, and detailed training settings.

We use $p=2$ throughout. Under the hypotheses of
Theorem~\ref{thm:transformer_stack_limit} and the multihead extension in
Corollary~\ref{cor:multihead_extension}, a fixed self-attention stack
satisfies
\begin{equation}\label{eq:experimental_transformer_rate}
    \bbE W_2(\widehat\pi_\theta^N,\pi_\theta)
    \leq C\left(\bbE W_2(\mu^N,\mu)+N^{-1/2}\right).
\end{equation}
The same estimate holds with Lipschitz input and output maps when the
lifted input law satisfies the first-layer assumptions. In particular, for
regular compact input laws, the empirical Wasserstein estimates of
\citet{borda2023empirical,divol2021short} give the comparison rates
\begin{equation}\label{eq:experimental_dimension_rates}
    r_d(N)=
    \begin{cases}
        N^{-1/2}, & d=1,\\[2pt]
        (\log N/N)^{1/2}, & d=2,\\[2pt]
        N^{-1/d}, & d\geq3.
    \end{cases}
\end{equation}
Here $d$ is the dimension of the input distribution. These rates apply,
for example, to densities bounded above and away from zero on the flat
torus. The attention term
in \eqref{eq:experimental_transformer_rate} does not worsen their order.
We use the same curves as benchmarks for the noncompact experiments;
their application there requires additional distributional assumptions
and may involve logarithmic corrections.

The population output $\pi_\theta$ is unavailable numerically. We replace
it by an independently generated learned output with
$M=500{,}000$ particles and measure
\begin{equation}\label{eq:experimental_reference_error}
    e_D(N)=D\bigl(\widehat\pi_\theta^N,
                         \widehat\pi_\theta^M\bigr).
\end{equation}
For $D=W_2$, the triangle inequality gives
\begin{equation}\label{eq:experimental_finite_reference}
    \left|W_2(\widehat\pi_\theta^N,\widehat\pi_\theta^M)
          -W_2(\widehat\pi_\theta^N,\pi_\theta)\right|
    \leq W_2(\widehat\pi_\theta^M,\pi_\theta).
\end{equation}
When the rate in \eqref{eq:experimental_dimension_rates} applies, the
expected reference contribution is $O(r_d(M))$. The benchmark $r_d(M)$
is smaller than $r_d(N)$ when $N\ll M$; near the largest test sizes,
reference error can still affect the fitted slope. We interpret the results as
comparisons with a large finite reference. Appendix~\ref{app:experimental_rate_fits}
quantifies this distinction.

In one dimension, $W_2$ can be computed using quantiles. In dimensions
$d\geq2$, exact optimal transport against the reference ensemble is too
costly for repeated evaluations. We use a Sinkhorn approximation on the
$W_2$ scale and the sliced Wasserstein distance $\mathrm{SW}_2$
\citep{feydy2019sinkhorn,bonneel2015sliced}. The former is the square root
of a suitably normalized, debiased entropic transport cost; at fixed
regularization it is a surrogate for $W_2$. Sliced Wasserstein averages
one-dimensional transport costs over projections. Its empirical rate is
$N^{-1/2}$ under an integrated quantile regularity condition
\citep{manole2022minimax}, with possible logarithmic corrections for
unbounded laws \citep{nietert2022statistical}. We thus use $N^{-1/2}$ as
the dimension-independent sliced benchmark. Although $\mathrm{SW}_2\leq W_2$
already transfers convergence in Euclidean space, our theorem does not
establish this sharper sliced rate in higher dimensions. The sliced
experiments examine whether such a refinement is supported numerically.
The metric definitions, periodic geometry, and numerical parameters are
specified in Appendix~\ref{app:experimental_metrics}.

We evaluate the seven operators in dimensions $d=1,2,3,5,10$ at
$N=2^8,2^9,\ldots,2^{16}$, using one trained model per operator and
dimension, for a total of $35$ models. The $d=5,10$ cases extend the tests
to higher dimensions with the same particle-count grid, reference size,
and random-stream construction. Their larger architectures and H100 training
configurations are specified in Appendix~\ref{app:experimental_training}.
All models use training seed $7$. For each model, input-measure
seeds $0,1,2$ select three test laws. For each law, we average $R_d$
independent particle-sampling realizations, with $R_1=100$, $R_2=25$, $R_3=10$,
and $R_5=R_{10}=5$. Each law has one independent
$M=500{,}000$-particle learned reference, shared across particle counts
and sampling replicates.

These averages estimate the expected distance to the fixed learned
reference. In $d=1$, they approximate
$\bbE W_2(\widehat\pi_\theta^N,\pi_\theta)$ up to the finite-reference
error in \eqref{eq:experimental_finite_reference}; in higher dimensions,
they estimate the expected Sinkhorn and sliced errors. The repeat counts
balance sampling variability and evaluation cost.
Appendix~\ref{app:experimental_sampling_repeats} compares different
sampling counts for selected problems in $d=1,2,3$.

Figures show the mean and one sampling standard error for input-measure
seed $0$, conditional on its reference. Tables summarize log--log decay
exponents fitted separately to each sampling replicate over all nine
particle counts: the $R_d$ exponents are averaged within each law, and the
resulting three rates are reported as mean $\pm$ SE. These summaries are
conditional on the trained model. Only the multiplicative constant of
each theoretical guide is fitted. Appendix~\ref{app:experimental_rate_fits}
gives the random-stream construction and fitting formulas.

\subsection{Operators and input distributions}
\label{subsec:experimental_operators}

We consider seven operators, organized into four families. All have target
laws determined analytically from the input-measure parameters, from which
independent samples can be drawn. Parameter ranges and the seeds defining
the fixed tilting potentials are specified in
Appendix~\ref{app:experimental_generators}.
We distinguish dependence on the input law from nonlinearity in the measure.
Since probability measures form a convex set rather than a vector space,
{measure-linearity} here means preservation of convex mixtures, wherever these
mixtures belong to the stated domain.

\paragraph{Doubling on the torus.}
Let $\mathbb T^d=[0,1)^d$ with the periodic distance, and define
\[
    \sG^\dagger_{\mathrm{dbl}}:\cP(\mathbb T^d)\to\cP(\mathbb T^d),
    \qquad
    \sG^\dagger_{\mathrm{dbl}}(\mu)=(\Psi_d)_\sharp\mu,
    \qquad \Psi_d(x)=2x\pmod1.
\]
Inputs are product Gaussian laws truncated to $[0,1)^d$. Their densities
are bounded above and away from zero, so this experiment provides the
compact setting for the rate comparison. The operator {satisfies measure-linearity};
it supplies a simple test before introducing distribution-dependent
transformations. The network uses sine--cosine input coordinates and an
angular output map to respect periodicity.

\paragraph{Gaussian transformations.}
Let $\mathcal G_d=\{\normal(m,\Sigma):m\in\bbR^d,
\ \Sigma\succ0\}$. We consider the two maps from $\mathcal G_d$ to itself
\begin{align*}
    \sG^\dagger_{\mathrm{scale}}\bigl(\normal(m,\Sigma)\bigr)
        &=\normal(m/2,\Sigma/2),\\
    \sG^\dagger_{\mathrm{inv}}\bigl(\normal(m,\Sigma)\bigr)
        &=\normal(m,\Sigma^{-1}).
\end{align*}
For example, rescaling can be realized by the measure-dependent pointwise
map $x\mapsto m/2+(x-m)/\sqrt2$. Unlike torus doubling, these tasks
require information about the input law. Since $\mathcal G_d$ is not
closed under convex mixtures, measure-linearity is addressed through the
componentwise extensions below, rather than through the parameter formulas.

\paragraph{Gaussian-mixture transformations.}
Let $\mathcal M_d$ be the class of mixtures of at most three nondegenerate
Gaussian components. For
$\mu=\sum_{k=1}^K w_k\normal(m_k,\Sigma_k)\in\mathcal M_d$, set
\begin{align*}
    \sG^\dagger_{\mathrm{mix,scale}}(\mu)
        &=\sum_{k=1}^K w_k\normal(m_k/2,\Sigma_k/2),\\
    \sG^\dagger_{\mathrm{mix,inv}}(\mu)
        &=\sum_{k=1}^K w_k\normal(m_k,\Sigma_k^{-1}).
\end{align*}
Both operators map $\mathcal M_d$ to itself. Component labels do not
affect the output; identical components can be merged without changing the
result. Both maps preserve convex mixtures whenever the combined law lies
in $\mathcal M_d$:
\[
    \sG^\dagger_{\mathrm{mix},\bullet}(t\mu+(1-t)\nu)
    =t\sG^\dagger_{\mathrm{mix},\bullet}(\mu)
     +(1-t)\sG^\dagger_{\mathrm{mix},\bullet}(\nu),
    \qquad 0\leq t\leq1.
\]
Their natural extensions to all finite Gaussian mixtures {therefore
satisfy measure-linearity}, despite the need to infer component parameters
from samples. These problems retain Gaussian tails while introducing
multimodality and the need to infer component structure from the ensemble.
We treat them as a test of the extent to which the observed convergence
persists beyond the single-Gaussian setting.

\begin{remark}[Gaussian tails]
\label{rem:experimental_gaussian_tails}
A nondegenerate Gaussian law and a finite Gaussian mixture both have
quadratic exponential tails: they belong to $\cP_{2,\tau}$ for a
sufficiently large $\tau$, but not to $\cP_{2+\delta,\tau}$ for
$\delta>0$. Thus both families require the quadratic-tail branch of
Assumption~\ref{ass:self_noncompact_input}, including the first-layer
smallness conditions in \eqref{eq:self_small_theta_sufficient}.
As explained in Remark~\ref{rem:self_attention_examples}, a Gaussian law
also satisfies the separate strong sub-Gaussian condition. Gaussian tail
decay alone does not establish this stronger condition for a mixture
with unequal component covariances. For example, under a linear
exponential tilt by $s$, component $k$ has mean $m_k+\Sigma_k s$;
the between-component covariance need not stay bounded uniformly in $s$.
The strong sub-Gaussian condition must therefore be verified separately
for the mixture laws. The same considerations apply after the affine
input lift, and to the Gaussian source laws used in tilting below.
We do not enforce the smallness conditions during training, so these
experiments examine convergence without certifying all noncompact
hypotheses for the trained models.
\end{remark}

\paragraph{Exponential tilting.}
For a fixed potential $u:\bbR^d\to\bbR$, define
\begin{equation}\label{eq:experimental_tilting}
    \sG^\dagger_u(\mu)(dx)
       =\frac{e^{-u(x)}}{Z_u(\mu)}\mu(dx),
    \qquad Z_u(\mu)=\int e^{-u(x)}\mu(dx).
\end{equation}
Its domain is
$\mathcal X_u=\{\mu\in\cP_2(\bbR^d):0<Z_u(\mu)<\infty,
\ \int\|x\|_2^2e^{-u(x)}\mu(dx)<\infty\}$,
and its codomain is $\cP_2(\bbR^d)$. The normalization makes this a
generally nonlinear operator on measures: mixing two source laws changes
their output weights in proportion to their respective normalizing
constants. We choose source and target densities
proportional to $e^{-V_s}$ and $e^{-V_t}$, and set $u=V_t-V_s$.
For Gaussian tilting, $V_s$ and $V_t$ are quadratic potentials. For cubic
tilting, they have the form
\[
    V(x)=\|C^{-1}(x-m)\|_3^3,\qquad C\text{ invertible}.
\]
The latter laws satisfy both the faster-than-Gaussian tail condition and
strong sub-Gaussianity; Appendix~\ref{app:experimental_cubic_ssg} verifies
these properties for the specific family used here. In each tilting
experiment, $u$ is fixed while input and target laws vary together:
we add a common random quadratic--linear potential $h$ to both $V_s$
and $V_t$. The input density is proportional to $e^{-V_s-h}$ and its
true output to $e^{-V_t-h}$, so their potential difference remains $u$.
Appendix~\ref{app:experimental_random_tilting} specifies this construction.

\subsection{Convergence measured by optimal transport}
\label{subsec:experimental_sinkhorn}

Figure~\ref{fig:reference_sinkhorn} compares the learned outputs with the
large learned reference. Table~\ref{tab:experimental_sinkhorn_rates}
summarizes rates across the three test measures. The $d=2$ comparison
exponent $0.4381$ is the log--log fit of $(\log N/N)^{1/2}$ over the
same nine-point grid; its asymptotic algebraic exponent is $1/2$.

\begin{figure}[t]
\centering
\includegraphics[width=0.97\textwidth]{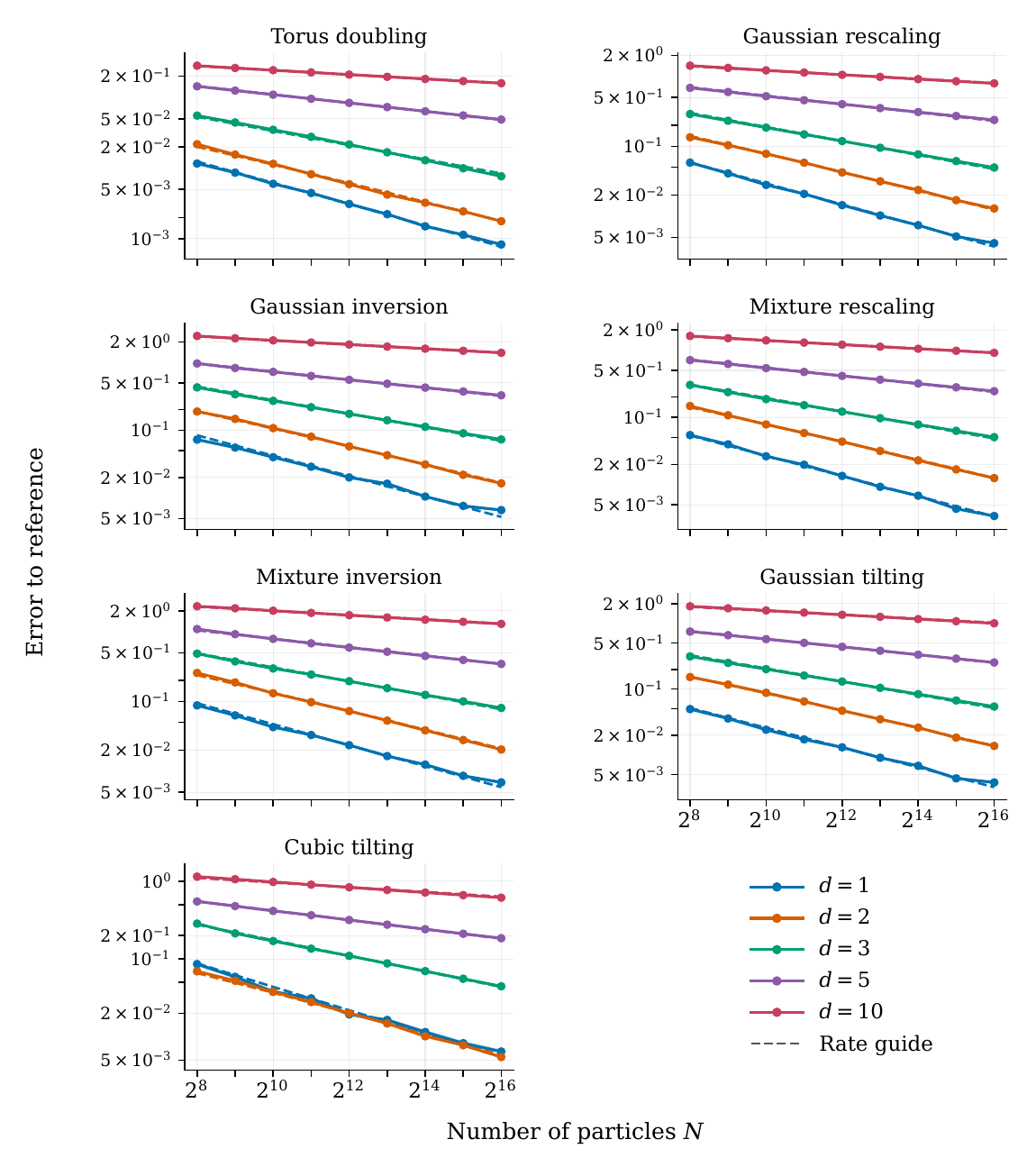}
\caption{Convergence to the learned reference for all seven operators.
Each panel shows $W_2$ in $d=1$ and the Sinkhorn approximation in $d=2,3,5,10$;
torus doubling uses periodic ground distance. Curves use input-measure
seed $0$ and average $R_d$ sampling replicates as specified in
Section~\ref{subsec:experimental_protocol}, with one sampling standard
error conditional on the shared reference shown by shading. Dashed guides have shapes $r_d(N)$ with fitted
multiplicative constants. The horizontal axis is shared; vertical limits
and ticks are chosen separately for each panel.}
\label{fig:reference_sinkhorn}
\end{figure}

\begin{table}[t]
\centering
\caption{Numerical decay exponents for $W_2$ ($d=1$) and the Sinkhorn
approximation ($d=2,3,5,10$). Each measure-level rate averages $R_d$
separately fitted sampling exponents; entries report mean $\pm$ SE
across the three input measures. The last row is the $W_2$ comparison on the finite grid,
not a theorem for fixed-regularization Sinkhorn divergence.}
\label{tab:experimental_sinkhorn_rates}
\footnotesize
\setlength{\tabcolsep}{3pt}
\begin{tabular}{lccccc}
\toprule
Operator & $d=1$ & $d=2$ & $d=3$ & $d=5$ & $d=10$ \\
\midrule
Torus doubling & $0.4806 \pm 0.0013$ & $0.4502 \pm 0.0017$ & $0.3543 \pm 0.0018$ & $0.1941 \pm 0.0010$ & $0.1035 \pm 0.0002$ \\
Gaussian rescaling & $0.4904 \pm 0.0080$ & $0.4325 \pm 0.0023$ & $0.3167 \pm 0.0008$ & $0.1904 \pm 0.0004$ & $0.1049 \pm 0.0001$ \\
Gaussian inversion & $0.4609 \pm 0.0171$ & $0.4465 \pm 0.0019$ & $0.3216 \pm 0.0016$ & $0.1948 \pm 0.0007$ & $0.1046 \pm 0.0012$ \\
Mixture rescaling & $0.4950 \pm 0.0057$ & $0.4421 \pm 0.0015$ & $0.3202 \pm 0.0005$ & $0.1914 \pm 0.0004$ & $0.1051 \pm 0.0010$ \\
Mixture inversion & $0.4630 \pm 0.0183$ & $0.4560 \pm 0.0083$ & $0.3228 \pm 0.0012$ & $0.2038 \pm 0.0012$ & $0.1056 \pm 0.0002$ \\
Gaussian tilting & $0.4911 \pm 0.0089$ & $0.4410 \pm 0.0009$ & $0.3158 \pm 0.0002$ & $0.1966 \pm 0.0003$ & $0.1084 \pm 0.0004$ \\
Cubic tilting & $0.4750 \pm 0.0032$ & $0.4638 \pm 0.0019$ & $0.3300 \pm 0.0018$ & $0.1977 \pm 0.0011$ & $0.1135 \pm 0.0007$ \\
\midrule
$W_2$ comparison & $0.5000$ & $0.4381$ & $0.3333$ & $0.2000$ & $0.1000$ \\
\bottomrule
\end{tabular}
\end{table}

Across the seven operators, the mean exponents range over $0.461$--$0.495$ in $d=1$, $0.432$--$0.464$ in $d=2$, $0.316$--$0.354$ in $d=3$, $0.190$--$0.204$ in $d=5$, $0.103$--$0.113$ in $d=10$.
{The main trend is a slower decay as dimension increases: the fitted
exponents are near $1/2$ in one dimension, near the logarithmically
corrected guide in two dimensions, and near $1/d$ in dimensions
$d=3,5,10$.}
This trend appears for compact torus inputs, Gaussian and mixture inputs,
and cubic-tail inputs. In particular, the mixture transformations and
normalized tilting operators have rates comparable to the simpler
single-Gaussian transformations at the same dimension. Thus the observed
particle-count dependence persists when the target task requires
recovering mixture structure or applying a nonlinear normalization.
These comparisons concern the convergence of each trained map to its own
population output; they do not imply equal accuracy on the target tasks.

There are also differences within each dimension. For example, Gaussian
inversion in $d=1$ has a mean exponent $0.4609$, and torus
doubling in $d=3$ gives $0.3543$. Such variation is compatible
with an upper bound and a fit over a finite range of particle counts.
Reference error, distribution-dependent corrections, and entropic
regularization can also affect the slopes. The higher-dimensional results
compare a Sinkhorn surrogate with Wasserstein benchmarks; the
fixed-regularization surrogate is outside the theorem.
Appendices~\ref{app:experimental_theory_scope} and
\ref{app:experimental_sinkhorn_ablation} discuss the input assumptions
and sensitivity to regularization.

\subsection{Convergence measured by sliced Wasserstein}
\label{subsec:experimental_sliced}

Figure~\ref{fig:reference_sliced} and
Table~\ref{tab:experimental_sliced_rates} compare sliced Wasserstein
errors with the dimension-independent benchmark $N^{-1/2}$. This tests
whether the one-dimensional projections used by the sliced metric yield
faster observed convergence than the full transport comparison in
$d=2,3,5,10$.

\begin{figure}[t]
\centering
\includegraphics[width=0.97\textwidth]{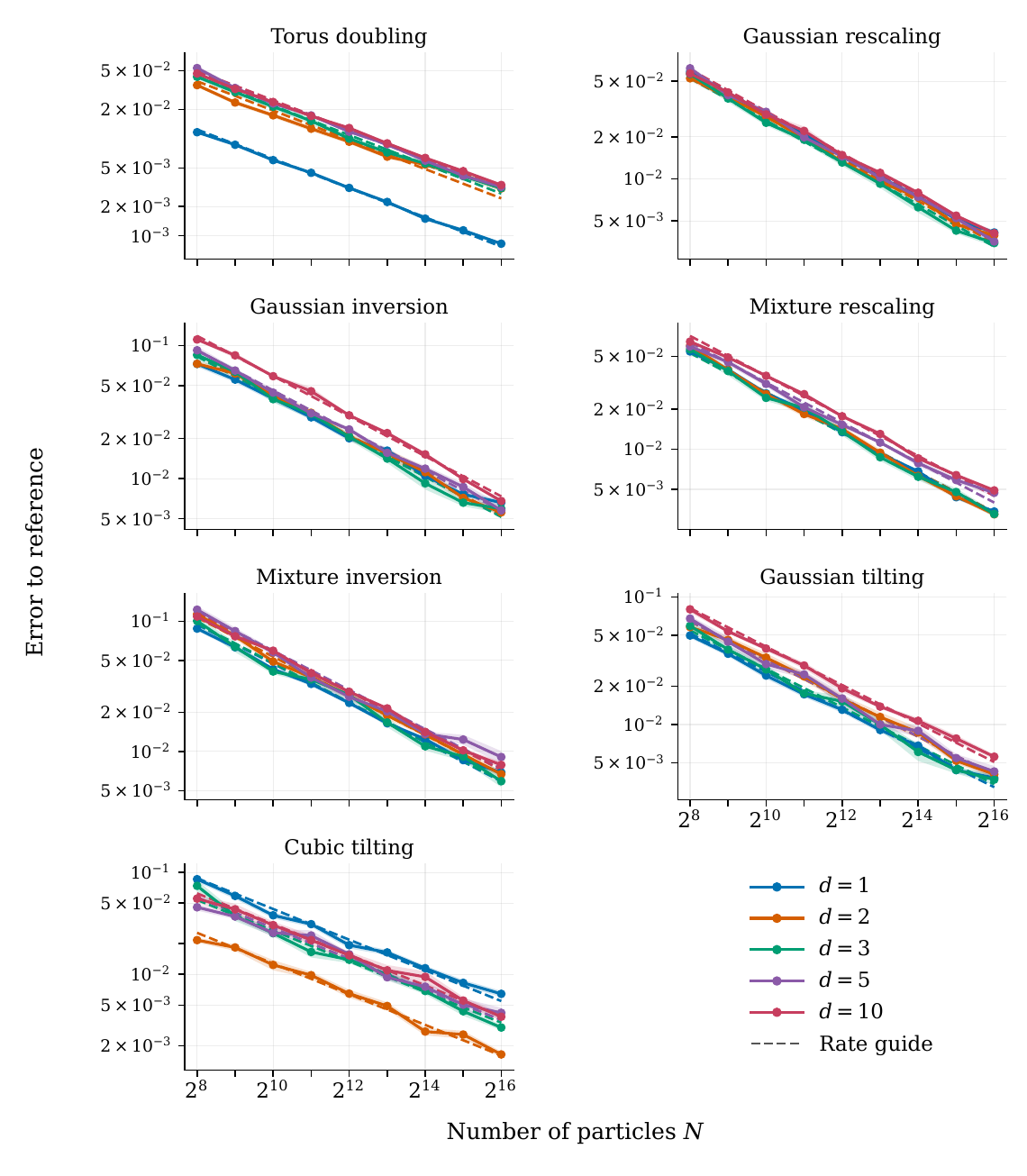}
\caption{Sliced comparison with the learned reference in dimensions
$d=1,2,3,5,10$. The $d=1$ curves
retain $W_2$ (periodic $W_2$ for torus doubling). Higher-dimensional torus
curves use the sine--cosine embedding; the others use Euclidean projections.
Curves show input-measure seed $0$, averaged over $R_d$ sampling
replicates, with one sampling standard error conditional on the shared
reference. Dashed guides have slope $-1/2$.
Each panel has its own vertical limits and ticks.}
\label{fig:reference_sliced}
\end{figure}

\begin{table}[t]
\centering
\caption{Sliced-comparison decay exponents, reported as mean $\pm$ SE
across three measure-level rates, each averaging $R_d$ per-sampling fits.
The $d=1$ entries use $W_2$ and coincide with
Table~\ref{tab:experimental_sinkhorn_rates}.}
\label{tab:experimental_sliced_rates}
\footnotesize
\setlength{\tabcolsep}{3pt}
\begin{tabular}{lccccc}
\toprule
Operator & $d=1$ & $d=2$ & $d=3$ & $d=5$ & $d=10$ \\
\midrule
Torus doubling & $0.4806 \pm 0.0013$ & $0.4806 \pm 0.0233$ & $0.4964 \pm 0.0082$ & $0.4998 \pm 0.0025$ & $0.4766 \pm 0.0039$ \\
Gaussian rescaling & $0.4904 \pm 0.0080$ & $0.4916 \pm 0.0051$ & $0.5084 \pm 0.0012$ & $0.5137 \pm 0.0044$ & $0.4874 \pm 0.0047$ \\
Gaussian inversion & $0.4609 \pm 0.0171$ & $0.4885 \pm 0.0034$ & $0.4985 \pm 0.0057$ & $0.4996 \pm 0.0074$ & $0.4962 \pm 0.0047$ \\
Mixture rescaling & $0.4950 \pm 0.0057$ & $0.5068 \pm 0.0079$ & $0.5029 \pm 0.0130$ & $0.4824 \pm 0.0069$ & $0.4866 \pm 0.0035$ \\
Mixture inversion & $0.4630 \pm 0.0183$ & $0.4925 \pm 0.0128$ & $0.4999 \pm 0.0061$ & $0.4728 \pm 0.0129$ & $0.4986 \pm 0.0120$ \\
Gaussian tilting & $0.4911 \pm 0.0089$ & $0.4936 \pm 0.0056$ & $0.5012 \pm 0.0045$ & $0.4913 \pm 0.0061$ & $0.4901 \pm 0.0081$ \\
Cubic tilting & $0.4750 \pm 0.0032$ & $0.4971 \pm 0.0096$ & $0.5115 \pm 0.0115$ & $0.4821 \pm 0.0117$ & $0.5078 \pm 0.0164$ \\
\midrule
Sliced comparison & $0.5000$ & $0.5000$ & $0.5000$ & $0.5000$ & $0.5000$ \\
\bottomrule
\end{tabular}
\end{table}

Across the seven operators, the mean exponents range over $0.461$--$0.495$ in $d=1$, $0.481$--$0.507$ in $d=2$, $0.496$--$0.512$ in $d=3$, $0.473$--$0.514$ in $d=5$, $0.477$--$0.508$ in $d=10$.
The sliced results show much less dependence on dimension than the
transport results. In $d=3$, all seven mean exponents lie between $0.496$
and $0.512$, compared with $0.316$--$0.354$ for the Sinkhorn comparison.
{The same contrast holds in $d=5,10$: sliced exponents remain
near $1/2$, while Sinkhorn exponents decrease towards $1/5$ and $1/10$.}
The same behavior occurs for Gaussian mixtures and both tilting families,
so it is not confined to a single input family or {a target
operator satisfying measure-linearity}.

{The observed $N^{-1/2}$ sliced rate is not established
by our theorem:} attention couples the output particles, so standard
one-dimensional empirical-measure estimates do not apply directly to
their projections.

\subsection{Approximation to the true operator}
\label{subsec:experimental_approximation}

Figures~\ref{fig:approximation_sinkhorn}
and~\ref{fig:approximation_sliced} compare the learned outputs with
independent samples from the true output law $\sG^\dagger(\mu)$.
For many tasks, the error initially decreases with $N$ and then levels
off. In one-dimensional mixture inversion, for example, the mean $W_2$
error falls from about $0.217$ at $N=256$ to $0.203$ at $N=1024$, but
remains near $0.196$ at the largest particle count. For two-dimensional
cubic tilting, the mean Sinkhorn error is about $0.078$ at $N=256$,
$0.035$ at $N=16{,}384$, and $0.032$ at $N=65{,}536$. The onset of the
plateau varies: one-dimensional cubic tilting remains between about
$0.078$ and $0.103$ across the grid, while several three-dimensional Sinkhorn curves
continue to decrease at the largest particle counts.
The $d=5,10$ Sinkhorn curves also continue to decrease over this grid,
whereas their sliced errors reveal flattening more clearly. For Gaussian
rescaling, the sliced error at $N=65{,}536$ is about $0.016$ in both
dimensions, changing by less than $2\%$ when $N$ increases from
$32{,}768$ to $65{,}536$. For mixture inversion, the corresponding
endpoint errors are about $0.043$ in $d=5$ and $0.034$ in $d=10$,
{again changing by less than $2\%$ over the final doubling.}

This behavior is consistent with a bottleneck in the approximation of
the true operator by the fixed learned map. To see the distinction, write
$a_\theta(\mu)=W_2(\sG_\theta(\mu),\sG^\dagger(\mu))$. The reverse
triangle inequality gives
\[
 \left|W_2(\sG_\theta(\mu^N),\sG^\dagger(\mu))-a_\theta(\mu)\right|
 \leq W_2(\sG_\theta(\mu^N),\sG_\theta(\mu)).
\]
{As the particle-count error vanishes, the error relative to the true
operator approaches $a_\theta(\mu)$, which is independent of $N$.}

The sliced comparisons show the same leveling more clearly in several
three-dimensional tasks. For Gaussian rescaling, the sliced error falls
from about $0.065$ at $N=256$ to $0.027$ at $N=4096$ and is between $0.024$ and $0.025$
at the final three particle counts.
Our measurements also include
finite-target sampling error and, in higher dimensions, errors from
transport regularization or a finite number of projections. The plateau
therefore does not isolate model capacity from optimization error or
these numerical contributions. It does show why particle-count
convergence alone is insufficient to assess accuracy on the target
operator.

\begin{figure}[t]
\centering
\includegraphics[width=0.97\textwidth]{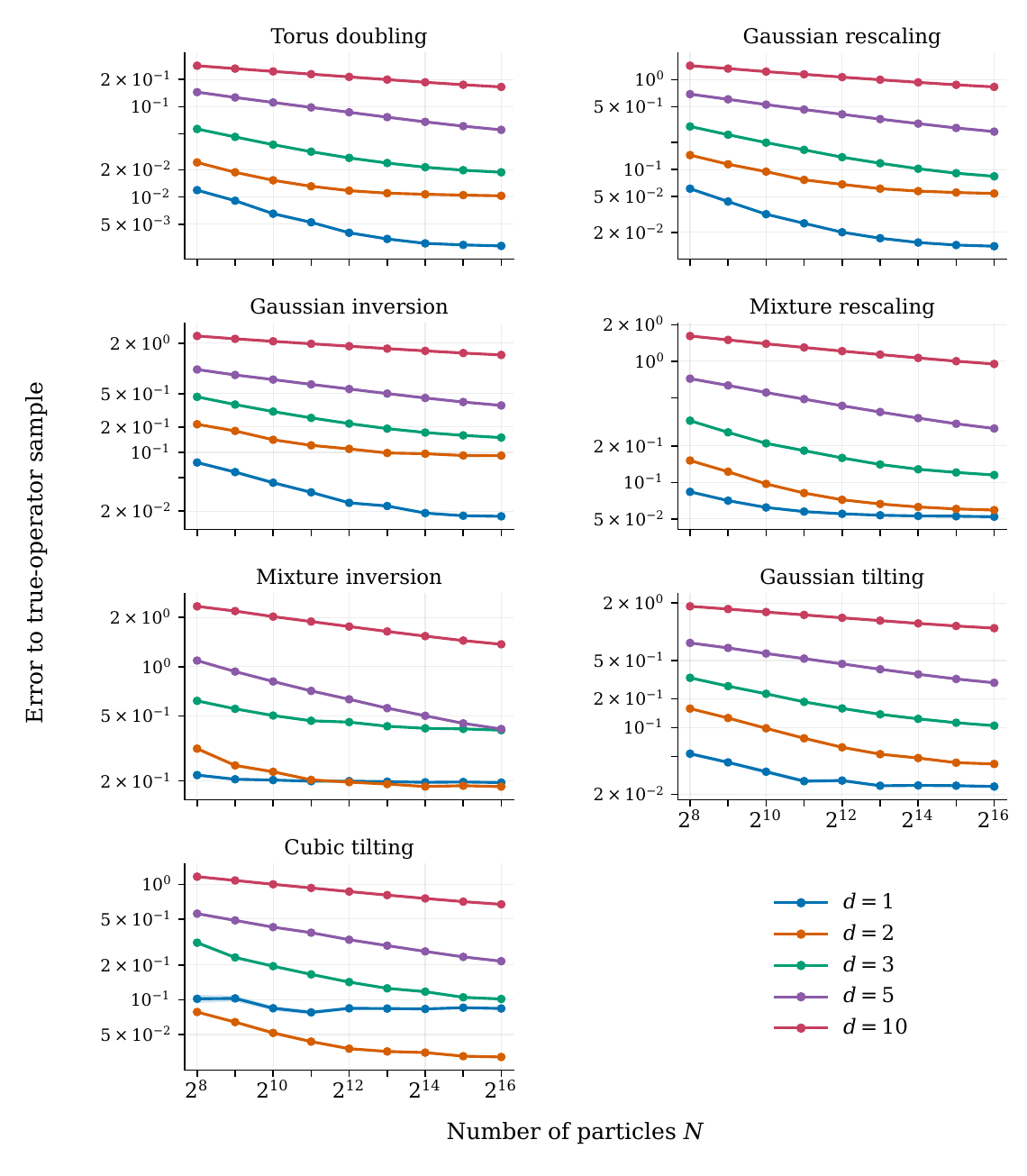}
\caption{Approximation to the true operator using
$W_2$ in $d=1$ and Sinkhorn in $d=2,3,5,10$. All curves use input-measure seed
$0$ and the mean over $R_d$ sampling replicates, with one sampling
standard error. The true-output law is fixed; independent target particles are
drawn in each sampling replicate. Panels have individual vertical scales.}
\label{fig:approximation_sinkhorn}
\end{figure}

\begin{figure}[t]
\centering
\includegraphics[width=0.97\textwidth]{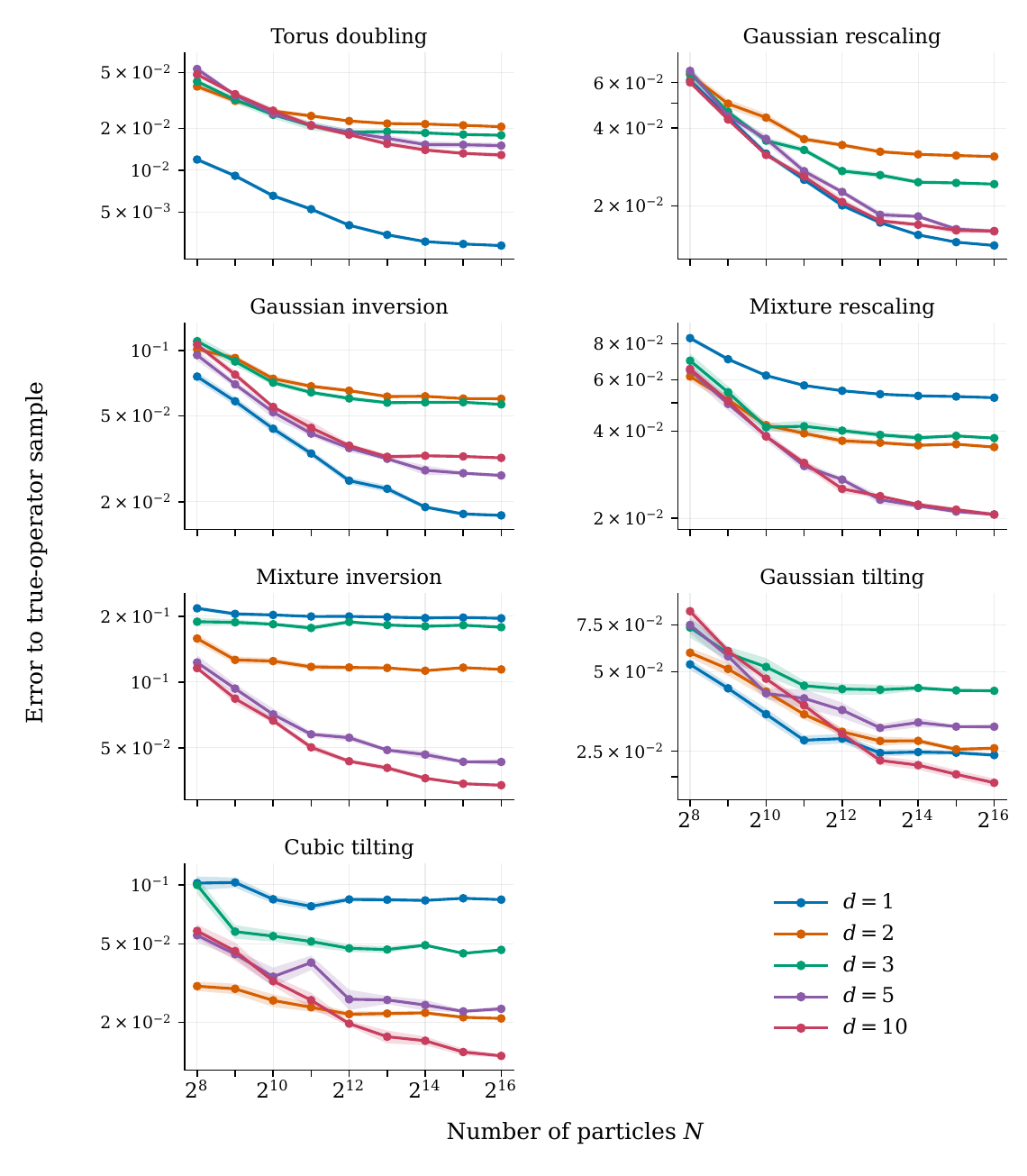}
\caption{Approximation to the true operator using sliced Wasserstein
in $d=2,3,5,10$, retaining $W_2$ in $d=1$. Seed selection, averaging and bands
match Figure~\ref{fig:approximation_sinkhorn}.}
\label{fig:approximation_sliced}
\end{figure}

\section{Conclusions}
\label{sec:conclusion}

We introduced and defined measure neural mappings through measure-dependent
transport and {ensemble consistency}, and established quantitative continuum
limits for attention-based architectures. For a fixed self-attention
transformer $\sT$ and an empirical law $\mu^N$ formed from $N$ independent
samples of $\mu$, the noncompact analysis gives
\begin{equation}\label{eq:conclusion_sampling_rate}
    \bbE W_p\bigl(\sT(\mu^N),\sT(\mu)\bigr)
    \leq C\bbE W_p(\mu^N,\mu)+CN^{-1/2},
\end{equation}
under the architecture and input-law assumptions of
Theorem~\ref{thm:transformer_stack_limit} and its multihead extension in
Corollary~\ref{cor:multihead_extension}, with $C$ independent of $N$.
This is a Wasserstein Lipschitz-type estimate for empirical approximation:
the first term propagates the input sampling error, while the additional
$N^{-1/2}$ term controls the error from replacing population attention
integrals by empirical averages. It does not assert a global Lipschitz
bound between arbitrary noncompact input laws. If the input empirical
error has rate $r(N)$, the output error is therefore
$O(r(N)+N^{-1/2})$. For regular compact input laws in the $W_2$ setting
of Section~\ref{subsec:experimental_protocol}, this retains the rates
$N^{-1/2}$ in dimension one, $(\log N/N)^{1/2}$ in dimension two, and
$N^{-1/d}$ in dimensions $d\geq3$.

Pre-norm attention is covered by compact stability after normalization,
without the additional attention sampling term. The noncompact estimates
extend the result to post-norm stacks, whose first attention layer may act
on an unbounded law. The cross-attention results distinguish sampling of
queries from sampling of keys; prescribed-query blocks admit an
$N^{-1/2}$ bound without the empirical input Wasserstein term.
For grouped self-attention stacks with $G$ groups of size $N_G=N/G$,
the additional term becomes $N_G^{-1/2}=\sqrt{G/N}$, while the empirical
input term still uses all $N$ particles. Thus $G=o(N)$ suffices for
convergence. For Set Transformers with self-attention encoders,
fixed-seed pooling, and Lipschitz readouts, the expected {output error} is
$O(N^{-1/2})$.

The experiments on seven operators in dimensions $d=1,2,3,5,10$ support
the predicted dependence on ensemble size. In the higher-dimensional
tests, Sinkhorn decay exponents remain close to the $W_2$ benchmarks
$1/5$ and $1/10$, while sliced Wasserstein decay stays close to the
dimension-independent rate $N^{-1/2}$. These are finite-reference
comparisons: the fixed-regularization Sinkhorn surrogate is outside the
theorem, and the sharper sliced rate remains an empirical observation.
The experiments also distinguish convergence to the learned operator
from approximation of the true operator: increasing the ensemble size
can reduce sampling error until the error of the fixed learned map
becomes the dominant contribution.

\subsection{Limitations}

The results concern a fixed network as the ensemble size increases. They do
not bound how well the network approximates a target operator, or how that
error depends on model capacity, training data, and optimization.
For attention acting on unbounded inputs before normalization,
the noncompact estimates require strong sub-Gaussianity and additional
tail or integrability conditions, which exclude general heavy-tailed laws.

\subsection{Future Work}

One direction is to replace the exponential in softmax attention by a
positive function $\varphi:\bbR\to(0,\infty)$. Slower growth of $\varphi$
may permit weaker tail assumptions and extend the theory to heavy-tailed
distributions with suitable finite moments \citep{hargenrader2026attention}. The analysis must control
both the normalizing denominator and the stability of the attention map.

A second direction is to study continuum limits for other
permutation-equivariant architectures, including graph neural networks.
For graphs constructed from input particles, the construction of edges,
the normalization of message aggregation, and the scaling of neighborhoods
with ensemble size affect consistency. We aim to identify when these
updates admit a measure-dependent transport representation and a continuum
limit.

\appendix

\section{Supporting Estimates for Attention and Transformers}
\label{app:theoretical_details}

This appendix provides the estimates used to prove the continuum limits in
Section~\ref{sec:transformer_mnm}. In the attention proofs, we set $V=I$
unless a value matrix is explicitly retained in a residual or block
estimate; general value matrices are recovered through
\eqref{eq:value_map_reduction}. The usage notes identify where each
auxiliary result enters the argument.

Subsection~\ref{app:strong_subgaussian_estimates} relates strong
sub-Gaussianity to the Lipschitz dependence of attention on its query.
Subsection~\ref{app:compact_estimates} develops deterministic stability
bounds on compact sets, including the bounds needed after layer
normalization. Subsection~\ref{app:noncompact_key_estimates} controls
empirical key averages for a fixed query on an unbounded domain.
Subsection~\ref{app:noncompact_self_estimates} then treats self-attention,
where the query also appears among the sampled keys, using a leave-one-out
argument and a bound on the diagonal interaction.

\subsection{Strong Sub-Gaussianity and Lipschitz Attention}
\label{app:strong_subgaussian_estimates}

\begin{lemma}[Covariance Bound]\label{lem:sg_cov_bound}
Let $X\sim\mu\in\mathcal{SG}_{a_{\mu}}(\bbR^d)$. Then
\begin{equation}
    \|\mathrm{Cov}(X)\|_2\leq a_{\mu}^2.
\end{equation}
\end{lemma}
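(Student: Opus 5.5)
The plan is to reduce the matrix statement to a scalar variance bound along an arbitrary unit direction, and to obtain that scalar bound by a second-order Taylor expansion of the sub-Gaussian inequality in Definition~\ref{def:subgaussian} at the origin. Since $\mathrm{Cov}(X)$ is symmetric positive semidefinite,
\[
\|\mathrm{Cov}(X)\|_2=\sup_{\|u\|_2=1}u^\top\mathrm{Cov}(X)u=\sup_{\|u\|_2=1}\mathrm{Var}\bigl(\langle u,X\rangle\bigr).
\]
It therefore suffices to show $\mathrm{Var}(\langle u,X\rangle)\le a_\mu^2$ for every unit vector $u$.

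Fix a unit vector $u$ and set $Z=\langle u,X-\bbE X\rangle$, so that $\bbE Z=0$. Taking $\lambda=tu$ in Definition~\ref{def:subgaussian} gives, for every real $t$,
\[
\bbE e^{tZ}\le e^{a_\mu^2t^2/2}.
\]
This bound has two consequences. First, it shows that $Z$ has finite exponential moments of all orders, and hence finite second moment, since $e^{|tZ|}\le e^{tZ}+e^{-tZ}$. Second, by dominated convergence the moment generating function of $Z$ is smooth near $t=0$, with expansion
\[
\bbE e^{tZ}=1+\tfrac{t^2}{2}\bbE Z^2+o(t^2).
\]
The linear term is absent because $\bbE Z=0$. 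The right-hand side expands as $1+\tfrac{t^2}{2}a_\mu^2+o(t^2)$. Subtracting $1$ from both sides, dividing by $t^2/2$, and letting $t\to0$ yields $\bbE Z^2\le a_\mu^2$. This is exactly $u^\top\mathrm{Cov}(X)u\le a_\mu^2$, and taking the supremum over unit vectors $u$ proves the lemma.

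The only point that needs care is justifying the second-order expansion, which requires $\bbE Z^2$ to be finite and the $o(t^2)$ remainder to be controlled. Finiteness of $\bbE Z^2$ follows from the exponential moment bound above. For the remainder, I would avoid differentiating under the integral and use the pointwise inequality $e^{x}\ge 1+x+\tfrac{x^2}{2}-\tfrac{|x|^3}{6}e^{|x|}$. With this, $\bbE e^{tZ}\ge 1+\tfrac{t^2}{2}\bbE Z^2-\tfrac{|t|^3}{6}\bbE\bigl[|Z|^3e^{|Z|}\bigr]$ for $|t|\le1$, where the last expectation is finite by the exponential moment bound. Combining this with $e^{a_\mu^2t^2/2}\le 1+\tfrac{a_\mu^2t^2}{2}+O(t^4)$ and letting $t\to0$ gives the claim directly.
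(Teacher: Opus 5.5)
Your proof is correct and follows the same route as the paper's: reduce to $u^\top\mathrm{Cov}(X)u$ for a unit vector $u$, apply the sub-Gaussian bound with $\lambda=tu$, and compare the second-order expansions of both sides at $t=0$. The paper expands $\log\bbE e^{tZ}$ instead of $\bbE e^{tZ}$ and leaves the $o(t^2)$ remainder unjustified, whereas your inequality $e^{x}\ge 1+x+\tfrac{x^2}{2}-\tfrac{|x|^3}{6}e^{|x|}$ supplies that justification explicitly.
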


\lemusage{Used inside Lemma~\ref{lem:ssg_consequences} to derive the tilted covariance bound from the SSG assumption.}

\begin{proof}
For any unit vector $u$, put $Z=\langle u,X-\bbE X\rangle$. The sub-Gaussian bound gives
\begin{equation}
    \log\bbE e^{tZ}=\frac{t^2}{2}\bbE Z^2+o(t^2)\leq\frac{a_{\mu}^2t^2}{2}.
\end{equation}
Dividing by $t^2/2$ and letting $t\to0$ yields $u^{\top}\mathrm{Cov}(X)u\leq a_{\mu}^2$. Taking the supremum over unit vectors proves the claim.
\end{proof}

\begin{lemma}[SSG Consequences]\label{lem:ssg_consequences}
Let $\mu\in\mathcal{SSG}_{a_{\mu}}(\bbR^d)$. Then:
\begin{enumerate}
    \item $\mu\in\mathcal{SG}_{a_{\mu}}(\bbR^d)$.
    \item For any $s\in\bbR^d$, the tilted measure
    \begin{equation}
        d\mu_s(x)
        =
        \frac{\exp\left(\langle s,x\rangle\right)}{Z_s}d\mu(x),
        Z_s=\int_{\bbR^d}\exp\left(\langle s,y\rangle\right)d\mu(y)
    \end{equation}
    satisfies $\mu_s\in\mathcal{SSG}_{a_{\mu}}(\bbR^d)$.
    \item For $X_s\sim\mu_s$,
    \begin{equation}
        \|\mathrm{Cov}(X_s)\|_2\leq a_{\mu}^2.
    \end{equation}
    \item All polynomial moments of $\mu$ are finite.
\end{enumerate}
\end{lemma}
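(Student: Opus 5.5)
The plan is to work entirely through the cumulant generating function $\psi=\psi_\mu$, which by assumption is finite and $C^2$ on $\bbR^d$ with $\|\nabla^2\psi\|_2\leq a_\mu^2$. I will prove part 2 first, then part 1 by Taylor expansion, and then derive parts 3 and 4 from these.

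\emph{Part 2.} For fixed $s$, the tilted law has moment generating function $\mathcal M_{\mu_s}(\lambda)=\mathcal M_\mu(\lambda+s)/\mathcal M_\mu(s)$. Hence
\[
\psi_{\mu_s}(\lambda)=\psi(\lambda+s)-\psi(s).
\]
This is finite and $C^2$ on all of $\bbR^d$, and $\nabla^2\psi_{\mu_s}(\lambda)=\nabla^2\psi(\lambda+s)$. The Hessian bound therefore transfers directly, and $\mu_s\in\mathcal{SSG}_{a_\mu}$. The normalizer $Z_s=\mathcal M_\mu(s)$ is finite and positive because $\psi$ is finite everywhere.

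\emph{Part 1.} Since $\mathcal M_\mu$ is finite on all of $\bbR^d$, it is analytic there. In particular $\mu\in\cP_1$, and differentiation under the integral gives $\psi(0)=0$ and $\nabla\psi(0)=\bbE X$. Taylor's theorem with integral remainder and the Hessian bound then yield
\[
\psi(\lambda)\leq\langle\lambda,\bbE X\rangle+\frac{a_\mu^2\|\lambda\|_2^2}{2}.
\]
Exponentiating and subtracting the mean term gives exactly the inequality in Definition~\ref{def:subgaussian}.

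\emph{Parts 3 and 4.} Combining Part 2 with Part 1 shows $\mu_s\in\mathcal{SG}_{a_\mu}$. Lemma~\ref{lem:sg_cov_bound} then gives $\|\mathrm{Cov}(X_s)\|_2\leq a_\mu^2$. Alternatively, one can note directly that $\mathrm{Cov}(X_s)=\nabla^2\psi(s)$. For the moments, finiteness of $\mathcal M_\mu(\pm t e_k)$ for each coordinate direction $e_k$ gives $\bbE e^{t|X_k|}<\infty$. Since $\|x\|_2\leq\sum_k|x_k|$ and $e^{t|y|}$ dominates every power of $|y|$, all polynomial moments are finite.

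The only step needing real care is justifying that $\nabla\psi(0)=\bbE X$ and that the Hessian is the tilted covariance. I expect this to be the main, though mild, obstacle, and I would handle it by dominated convergence. On a neighborhood of any $\lambda$, the integrands $\|x\|^k e^{\langle\lambda',x\rangle}$ are dominated by a finite sum of the exponentials $e^{\langle\lambda\pm\epsilon e_k,x\rangle}$, each of which is integrable.
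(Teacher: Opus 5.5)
Your proposal is correct and follows essentially the same route as the paper: the tilted CGF identity $\psi_{\mu_s}(\lambda)=\psi_\mu(\lambda+s)-\psi_\mu(s)$ gives part 2, Taylor's formula with integral remainder gives part 1, and applying part 1 to $\mu_s$ together with Lemma~\ref{lem:sg_cov_bound} gives part 3. The differences are minor. You obtain the polynomial moments directly from finiteness of the MGF in the coordinate directions rather than from sub-Gaussianity, and you supply the dominated-convergence justification for $\nabla\psi_\mu(0)=\bbE X$ (and for $\mu\in\cP_1$) that the paper leaves implicit.
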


\lemusage{Used in Lemma~\ref{lem:attention_lipschitz}, Proposition~\ref{prop:query_lipschitz}, Lemma~\ref{lem:bound_G}, and Lemma~\ref{lem:urbain_lemma_revised}.}

\begin{proof}
Let $X\sim\mu$. By definition,
\begin{equation}
    \psi_{\mu}(\lambda)
    =
    \log\bbE\exp\left(\langle\lambda,X\rangle\right),
    \psi_{\mu}(0)=0,
    \nabla\psi_{\mu}(0)=\bbE X.
\end{equation}
Let $X^{\circ}:=X-\bbE X$ and denote its law by $\mu^{\circ}:=\mathcal L(X^{\circ})$. Then
\begin{equation}
    \psi_{\mu^{\circ}}(\lambda)
    =
    \psi_{\mu}(\lambda)-\langle\lambda,\bbE X\rangle.
\end{equation}
Hence $\psi_{\mu^{\circ}}\in C^2(\bbR^d)$,
\begin{equation}
    \psi_{\mu^{\circ}}(0)=0,
    \nabla\psi_{\mu^{\circ}}(0)=0,
    \nabla^2\psi_{\mu^{\circ}}(\lambda)=\nabla^2\psi_{\mu}(\lambda).
\end{equation}
Taylor's formula gives
\begin{equation}
\begin{split}
    \psi_{\mu^{\circ}}(\lambda)
    &=
    \psi_{\mu^{\circ}}(0)+\langle\nabla\psi_{\mu^{\circ}}(0),\lambda\rangle
    +
    \int_0^1(1-t)\langle\lambda,\nabla^2\psi_{\mu^{\circ}}(t\lambda)\lambda\rangle dt\\
    &=
    \int_0^1(1-t)\langle\lambda,\nabla^2\psi_{\mu}(t\lambda)\lambda\rangle dt\\
    &\leq
    \int_0^1(1-t)a_{\mu}^2\|\lambda\|_2^2dt
    =
    \frac{a_{\mu}^2\|\lambda\|_2^2}{2}.
\end{split}
\end{equation}
This is exactly $\mu\in\mathcal{SG}_{a_{\mu}}(\bbR^d)$.

For the tilted measure, let $X_s\sim\mu_s$. Its CGF is $\psi_{X_s}(\lambda)
=
\log\bbE^{\mu_s}\exp\left(\langle\lambda,X_s\rangle\right).$
Since
\begin{equation}
\begin{split}
    \bbE^{\mu_s}\exp\left(\langle\lambda,X_s\rangle\right)
    =
    \int_{\bbR^d}\exp\left(\langle\lambda,y\rangle\right)
    \frac{\exp\left(\langle s,y\rangle\right)}{Z_s}d\mu(y)
    =
    \frac{\bbE^\mu\exp\left(\langle\lambda+s,X\rangle\right)}
    {\bbE^\mu\exp\left(\langle s,X\rangle\right)},
\end{split}
\end{equation}
we get
\begin{equation}
    \psi_{X_s}(\lambda)=\psi_{\mu}(\lambda+s)-\psi_{\mu}(s).
\end{equation}
Therefore,
\begin{equation}
    \|\nabla^2\psi_{X_s}(\lambda)\|_2
    =
    \|\nabla^2\psi_{\mu}(\lambda+s)\|_2
    \leq
    a_{\mu}^2.
\end{equation}
Thus $\mu_s\in\mathcal{SSG}_{a_{\mu}}(\bbR^d)$. Applying the first claim to $\mu_s$ and then Lemma~\ref{lem:sg_cov_bound} gives
\begin{equation}
    \|\mathrm{Cov}(X_s)\|_2\leq a_{\mu}^2.
\end{equation}
The finite polynomial moments follow from $\mu\in\mathcal{SG}_{a_{\mu}}(\bbR^d)$.
\end{proof}

\begin{lemma}[Lipschitz Attention]\label{lem:attention_lipschitz}
Let $\fA(s;\nu)$ be the attention map with $V=I$, $s\in\bbR^{d_u}$ and $\nu\in\cP(\bbR^{d_v})$. 
The following implications hold:
\begin{enumerate}
    \item If $\nu\in\mathcal{SSG}_{a_{\nu}}(\bbR^{d_v})$, then $s\mapsto\fA(s;\nu)$ is globally Lipschitz with constant $a_{\nu}^2\|\Theta\|_2$.
    \item If $\mu\in\cP(\bbR^d)$ and the pointwise attention map with $Q=K=V=I$ is well defined on all of $\bbR^d$ and globally Lipschitz with a constant $C>0$, then $\mu\in\mathcal{SSG}_{\sqrt{C}}(\bbR^d)$. A constant map also satisfies this statement, since any positive Lipschitz bound may be used.
\end{enumerate}

\lemusage{The first implication is used in Proposition~\ref{prop:query_lipschitz}. The second implication is not used in the main proofs but explains why SSG is the natural condition for global Lipschitz attention.}
\end{lemma}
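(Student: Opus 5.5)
Both parts rest on one identity: the Jacobian of the attention map equals a covariance under a tilted law. For part 1, take $V=I$ and write $\fA(s;\nu)=\nabla_\lambda\psi_\nu(\lambda)$ evaluated at $\lambda=\Theta^\top s$. This holds because
\[
    \int\exp(\langle Qs,Ky\rangle)\nu(dy)=\mathcal M_\nu(\Theta^\top s),
\]
and differentiating $\log\mathcal M_\nu$ gives the tilted mean $\int y\,\pp(dy;s,\nu)$. Under the strong sub-Gaussian assumption, $\psi_\nu$ is $C^2$ on all of $\bbR^{d_v}$, so the chain rule gives
\[
    D_s\fA(s;\nu)=\nabla^2\psi_\nu(\Theta^\top s)\,\Theta^\top .
\]
Equivalently, this is $\mathrm{Cov}(Y_s)\Theta^\top$, where $Y_s$ has the tilted law $\pp(\cdot;s,\nu)=\nu_{\Theta^\top s}$ from Lemma~\ref{lem:ssg_consequences}. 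Its operator norm is at most $a_\nu^2\|\Theta\|_2$, either directly from the Hessian bound in Definition~\ref{def:strong_sub_gaussian} or from part 3 of Lemma~\ref{lem:ssg_consequences}. The mean value inequality along the segment joining $s$ and $t$ then gives
\[
    \|\fA(s;\nu)-\fA(t;\nu)\|_2\leq a_\nu^2\|\Theta\|_2\|s-t\|_2 .
\]
The one technical point is that $\nabla\psi_\nu$ really is the tilted mean. Because $\psi_\nu$ is finite everywhere, $\mathcal M_\nu$ is finite on all of $\bbR^{d_v}$. Differentiation under the integral is then justified by dominated convergence: on a neighbourhood, $\|y\|e^{\langle\lambda,y\rangle}$ is dominated by a finite combination of exponentials $e^{\langle\lambda\pm\epsilon e_i,y\rangle}$.

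For part 2, take $Q=K=V=I$, so $\fA(s;\mu)=\nabla\psi_\mu(s)$ wherever $\psi_\mu$ is finite. I read "well defined on all of $\bbR^d$" as saying that $\mathcal M_\mu(s)$ and $\int\|y\|e^{\langle s,y\rangle}\mu(dy)$ are finite for every $s$. Then $\psi_\mu$ is finite everywhere. It is also real-analytic, hence $C^\infty$: the moment generating function finite on all of $\bbR^d$ is analytic, by the same dominated-convergence argument as above. Its gradient $\nabla\psi_\mu=\fA(\cdot;\mu)$ is $C$-Lipschitz, so $\|\nabla^2\psi_\mu(\lambda)\|_2\leq C$ everywhere. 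That is exactly the condition $\mu\in\mathcal{SSG}_{\sqrt C}(\bbR^d)$.

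The constant-map remark is immediate. If $\fA$ is constant, then $\nabla^2\psi_\mu\equiv0$, so the Hessian is bounded by any positive constant. This is the degenerate Dirac case.

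The main obstacle is regularity rather than the estimates themselves. Part 1 needs $C^1$ regularity of $s\mapsto\fA$ to apply the mean value inequality, and part 2 needs to upgrade "$\nabla\psi_\mu$ is Lipschitz" to the $C^2$ condition demanded by Definition~\ref{def:strong_sub_gaussian}. In both places I would rely on the analyticity of an everywhere-finite moment generating function rather than try to bound Hessians of merely Lipschitz gradients.
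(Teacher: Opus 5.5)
Your proposal is correct and follows the paper's proof. In part 1, both write $\fA(s;\nu)=\nabla\psi_\nu(\Theta^\top s)$, bound the Jacobian $\nabla^2\psi_\nu(\Theta^\top s)\Theta^\top$ and apply the mean value theorem; the paper phrases this Jacobian as a tilted covariance bounded via Lemma~\ref{lem:ssg_consequences}, while you use the Hessian bound directly. In part 2, both read off $\|\nabla^2\psi_\mu\|_2\leq C$ from the Lipschitz gradient, and your justifications of differentiation under the integral and of analyticity of an everywhere-finite moment generating function fill in details the paper asserts without proof.
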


\begin{proof}
For the first implication, consider $\lambda=\Theta^{\top}s$, then $\fA(s;\nu)
=\nabla_{\lambda}\psi_{\nu}(\lambda).$
The Hessian of the cumulant generating function is the covariance under the corresponding exponential tilt, so
\begin{equation}
    \nabla_s\fA(s;\nu)
    =
    \mathrm{Cov}^{Y\sim \pp(\cdot;s,\nu)}(Y)\Theta^{\top}.
\end{equation}
The distribution $\pp(\cdot;s,\nu)$ is an exponential tilt of $\nu$. By Lemma~\ref{lem:ssg_consequences},
\begin{equation}
    \|\mathrm{Cov}^{Y\sim \pp(\cdot;s,\nu)}(Y)\|_2\leq a_{\nu}^2.
\end{equation}
Therefore,
\begin{equation}
    \|\nabla_s\fA(s;\nu)\|_2
    \leq
    a_{\nu}^2\|\Theta\|_2.
\end{equation}
The mean-value theorem gives the claimed Lipschitz estimate.

For the second implication, global well-definedness implies that $\mathcal M_{\mu}(x)$ is finite for every $x\in\bbR^d$. It is therefore smooth, as is its logarithm. With $Q=K=V=I$, we have $\Theta=I$ and $\fA(x;\mu)=\nabla\psi_{\mu}(x)$. If $\fA(\cdot;\mu)$ is $C$-Lipschitz, then
\begin{equation}
    \|\nabla^2\psi_{\mu}(x)\|_2
    =
    \|\nabla_x\fA(x;\mu)\|_2
    \leq
    C
\end{equation}
for all $x\in\bbR^d$. Hence $\mu\in\mathcal{SSG}_{\sqrt{C}}(\bbR^d)$. This converse uses $Q=K=V=I$. For a fixed degenerate $\Theta$, the Jacobian only controls $\nabla^2\psi_{\mu}(\Theta^{\top}s)\Theta^{\top}$, evaluated on the range of $\Theta^{\top}$, and need not imply SSG.
\end{proof}

\begin{proposition}[Stability in the Query Measure]\label{prop:query_lipschitz}
Fix $p\geq1$ and let $\nu\in\mathcal{SSG}_{a_\nu}(\bbR^{d_v})$. With $V=I$, the map $s\mapsto\fA(s;\nu)$ is globally Lipschitz with constant $a_\nu^2\|\Theta\|_2$. For any $\mu,\tilde\mu\in\cP_p(\bbR^{d_u})$,
\begin{equation}
    W_p\left(\fA(\cdot;\nu)_\sharp\mu,\fA(\cdot;\nu)_\sharp\tilde\mu\right)
    \leq a_\nu^2\|\Theta\|_2W_p(\mu,\tilde\mu).
\end{equation}
\end{proposition}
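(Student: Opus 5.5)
The first claim is exactly the first implication of Lemma~\ref{lem:attention_lipschitz}, so I will cite it directly. For completeness I will sketch its mechanism. With $\lambda=\Theta^\top s$, the map satisfies $\fA(s;\nu)=\nabla\psi_\nu(\Theta^\top s)$. Its Jacobian is therefore $\mathrm{Cov}^{Y\sim\pp(\cdot;s,\nu)}(Y)\,\Theta^\top$. The tilted law $\pp(\cdot;s,\nu)$ has covariance of spectral norm at most $a_\nu^2$ by Lemma~\ref{lem:ssg_consequences}. The mean-value inequality, applied along the segment between two queries, then gives the Lipschitz constant $L:=a_\nu^2\|\Theta\|_2$.

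For the Wasserstein bound, I will fix $\mu,\tilde\mu\in\cP_p(\bbR^{d_u})$. I will take an optimal coupling $\pi\in\Pi(\mu,\tilde\mu)$ for $W_p$, which exists because the cost $\|x-y\|_2^p$ is lower semicontinuous and both laws have finite $p$th moments. I will push $\pi$ forward by the product map $(x,y)\mapsto(\fA(x;\nu),\fA(y;\nu))$. This produces a coupling of $\fA(\cdot;\nu)_\sharp\mu$ and $\fA(\cdot;\nu)_\sharp\tilde\mu$, and measurability is automatic since the map is continuous. The pointwise Lipschitz bound then gives
\[
W_p^p\bigl(\fA(\cdot;\nu)_\sharp\mu,\fA(\cdot;\nu)_\sharp\tilde\mu\bigr)\le\int\|\fA(x;\nu)-\fA(y;\nu)\|_2^p\,\pi(dx,dy)\le L^p\,W_p^p(\mu,\tilde\mu).
\]
Taking $p$th roots yields the claim.

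I also need the pushforwards to lie in $\cP_p$ so that $W_p$ between them is finite and meaningful. The Lipschitz bound gives $\|\fA(x;\nu)\|_2\le\|\fA(0;\nu)\|_2+L\|x\|_2$. Here $\fA(0;\nu)$ is the mean of $\nu$, which is finite by Lemma~\ref{lem:ssg_consequences}. So the $p$th moments are preserved.

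The only delicate point is the differentiability underlying the first claim. It requires finiteness and $C^2$ regularity of $\psi_\nu$ on all of $\bbR^{d_v}$, together with the identification of its Hessian with the tilted covariance. Both are supplied by the definition of $\mathcal{SSG}_{a_\nu}$ and were already handled in Lemma~\ref{lem:attention_lipschitz}, so I expect no real obstacle.
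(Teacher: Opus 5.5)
Your proposal is correct and follows the paper's proof: the paper also takes the pointwise Lipschitz bound from Lemma~\ref{lem:attention_lipschitz} and pushes an optimal coupling of $\mu,\tilde\mu$ forward under $(x,y)\mapsto(\fA(x;\nu),\fA(y;\nu))$. Your check that the pushforwards stay in $\cP_p$, using that $\fA(0;\nu)$ is the mean of $\nu$, is a harmless detail the paper leaves implicit.
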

\begin{proof}
The pointwise estimate is Lemma~\ref{lem:attention_lipschitz}. If $\pi$ is an optimal coupling of $\mu$ and $\tilde\mu$, its image under $(x,y)\mapsto(\fA(x;\nu),\fA(y;\nu))$ gives
\begin{equation}
\begin{split}
    W_p^p\left(\fA(\cdot;\nu)_\sharp\mu,\fA(\cdot;\nu)_\sharp\tilde\mu\right)
    &\leq\int\|\fA(x;\nu)-\fA(y;\nu)\|_2^p\pi(dx,dy)\\
    &\leq a_\nu^{2p}\|\Theta\|_2^p W_p^p(\mu,\tilde\mu).
\end{split}
\end{equation}
Taking the $p$th root proves the claim. This single estimate applies to both self-attention and cross-attention.
\end{proof}

\subsection{Compact Estimates}
\label{app:compact_estimates}

\begin{lemma}[Bounded Support]\label{lem:bounded_support_ssg}
Let $\mu\in\cP(B_R(0))$, i.e. $\operatorname{supp}(\mu)\subseteq\{x\in\bbR^d:\|x\|_2\leq R\}.$
Then $\mu\in\mathcal{SSG}_R(\bbR^d)$.
\end{lemma}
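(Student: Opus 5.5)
The plan is to verify the definition of $\mathcal{SSG}_R$ directly. This requires two checks: that the cumulant generating function is finite and $C^2$ on all of $\bbR^d$, and that its Hessian is uniformly bounded by $R^2$ in spectral norm.

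\textbf{Step 1: finiteness and regularity.} Since $\|x\|_2\leq R$ on the support, we have $e^{\langle\lambda,x\rangle}\leq e^{R\|\lambda\|_2}$, so $\mathcal M_\mu(\lambda)\in[e^{-R\|\lambda\|_2},e^{R\|\lambda\|_2}]$ is finite and strictly positive. Derivatives of all orders of the integrand in $\lambda$ are polynomials in $x$ times the exponential. On a neighborhood of any $\lambda_0$ they are dominated by a constant depending only on $R$ and the neighborhood. Differentiation under the integral sign is therefore justified, and $\mathcal M_\mu$ is $C^\infty$. Hence $\psi_\mu=\log\mathcal M_\mu$ is $C^\infty$, and in particular $C^2$.

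\textbf{Step 2: the Hessian bound.} A direct computation gives
\[
\nabla^2\psi_\mu(\lambda)=\mathrm{Cov}^{Y\sim\mu_\lambda}(Y),
\]
where $\mu_\lambda(dx)\propto e^{\langle\lambda,x\rangle}\mu(dx)$ is the exponential tilt. This is the same identity used in the proof of Lemma~\ref{lem:attention_lipschitz}. The tilt $\mu_\lambda$ is absolutely continuous with respect to $\mu$, so it is also supported in $B_R(0)$. For any unit vector $u$,
\[
u^\top\mathrm{Cov}(Y)u=\mathrm{Var}\langle u,Y\rangle\leq\bbE\langle u,Y\rangle^2\leq R^2.
\]
The covariance is positive semidefinite, so its spectral norm equals its largest eigenvalue, which is therefore at most $R^2$. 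This yields $\|\nabla^2\psi_\mu(\lambda)\|_2\leq R^2$ for every $\lambda$, which is the definition of $\mu\in\mathcal{SSG}_R(\bbR^d)$.

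No step is a real obstacle. The only point needing care is the justification of differentiating under the integral in Step 1, and compact support makes the domination trivial. One could sharpen the variance bound using Popoviciu's inequality, but the claimed constant $R$ does not need it.
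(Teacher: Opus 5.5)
Your proposal is correct and follows essentially the same route as the paper: you establish finiteness and smoothness of $\mathcal M_\mu$ from the compact support, identify $\nabla^2\psi_\mu(\lambda)$ with the covariance under the exponential tilt $\mu_\lambda$, and bound that covariance by $R^2$ via $\mathrm{Var}\langle u,Y\rangle\leq\bbE\langle u,Y\rangle^2\leq R^2$, since $\mu_\lambda$ is still supported in $B_R(0)$. The paper's argument is the same; yours additionally spells out the justification for differentiating under the integral sign.
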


\lemusage{Used in Theorem~\ref{thm:self_convergence}, Theorem~\ref{thm:cross_compact_stability}, and the transformer-stack argument through Lemma~\ref{lem:layer_norm_bounded}.}

\begin{proof}
Since $\mu$ is supported in the ball of radius $R$, for any $\lambda\in\bbR^d$ the moment generating function
\begin{equation}
    \mathcal{M}_{\mu}(\lambda)=\bbE^{X\sim\mu}\exp\left(\langle\lambda,X\rangle\right)
\end{equation}
is finite and smooth in $\lambda$. Hence $\psi_{\mu}(\lambda)=\log \mathcal{M}_{\mu}(\lambda)$ is $C^2(\bbR^d)$. Moreover, by the standard identity for cumulant generating functions,
\begin{equation}
    \nabla^2\psi_{\mu}(\lambda)
    =
    \mathrm{Cov}^{X\sim\mu_{\lambda}}(X),
\end{equation}
where
\begin{equation}
    \mu_{\lambda}(dx)
    =
    \frac{\exp\left(\langle\lambda,x\rangle\right)\mu(dx)}
    {\int_{\bbR^d}\exp\left(\langle\lambda,y\rangle\right)\mu(dy)}.
\end{equation}
The tilted measure $\mu_{\lambda}$ has the same support as $\mu$. Therefore, for every unit vector $u\in\bbR^d$,
\begin{equation}
    \mathrm{Var}^{X\sim\mu_{\lambda}}(\langle u,X\rangle)
    \leq
    \bbE^{X\sim\mu_{\lambda}}\langle u,X\rangle^2
    \leq
    R^2
    \Longrightarrow
    \|\nabla^2\psi_{\mu}(\lambda)\|_2
    \leq 
    R^2.
\end{equation}
Thus $\mu\in\mathcal{SSG}_R(\bbR^d)$.
\end{proof}

\begin{lemma}[Compact Ratio Bound]\label{lem:nonIID_W1_bound}
Let $\nu,\tilde{\nu}\in\cP(B_R(0))$ on $\bbR^{d_v}$. 
Then there exists $C=C(\Theta,R)=e^{2\|\Theta\|_2R^2}(1+\|\Theta\|_2R^2)+e^{4\|\Theta\|_2R^2}\|\Theta\|_2R^2>0$ such that, for any $s\in B_R(0)\subset\bbR^{d_u}$,
\begin{equation}
    \|\fA(s;\nu)-\fA(s;\tilde{\nu})\|_2
    \leq
    C W_1(\nu,\tilde{\nu}).
\end{equation}
\end{lemma}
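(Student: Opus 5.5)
We write $w(y)=\exp(\langle s,\Theta y\rangle)$, so that $\fA(s;\nu)=\int y\,w\,d\nu\big/\int w\,d\nu$. The plan is to express the difference of the two ratios through numerators and denominators and to control each difference by Kantorovich--Rubinstein duality. For $s\in B_R(0)$ and $y,y'\in B_R(0)$ we have $|\langle s,\Theta y\rangle|\le\|\Theta\|_2R^2$. Hence $w$ lies in $[e^{-\|\Theta\|_2R^2},e^{\|\Theta\|_2R^2}]$ on the ball. Moreover $w$ is Lipschitz on $B_R(0)$ with constant $e^{\|\Theta\|_2R^2}\|\Theta\|_2R$, since $\|\nabla_y w\|_2=w\,\|\Theta^\top s\|_2$.

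Write $Z=\int w\,d\nu$, $\tilde Z=\int w\,d\tilde\nu$, $n=\int yw\,d\nu$ and $\tilde n=\int yw\,d\tilde\nu$. We split
\[
\frac{n}{Z}-\frac{\tilde n}{\tilde Z}=\frac{n-\tilde n}{Z}+\tilde n\,\frac{\tilde Z-Z}{Z\tilde Z}.
\]
For the numerator difference, fix a unit vector $u$. Then $\langle u,n-\tilde n\rangle=\int\langle u,y\rangle w\,d(\nu-\tilde\nu)$. The function $y\mapsto\langle u,y\rangle w(y)$ is Lipschitz on $B_R(0)$ with constant at most $e^{\|\Theta\|_2R^2}(1+\|\Theta\|_2R^2)$. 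Duality then gives $\|n-\tilde n\|_2\le e^{\|\Theta\|_2R^2}(1+\|\Theta\|_2R^2)W_1(\nu,\tilde\nu)$. Since $1/Z\le e^{\|\Theta\|_2R^2}$, the first term is bounded by $e^{2\|\Theta\|_2R^2}(1+\|\Theta\|_2R^2)W_1$.

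For the second term, $|Z-\tilde Z|\le e^{\|\Theta\|_2R^2}\|\Theta\|_2R\,W_1$, and $\|\tilde n\|_2/\tilde Z\le R$ because it is a tilted mean of a law on the ball. With $1/Z\le e^{\|\Theta\|_2R^2}$ this yields the bound $e^{2\|\Theta\|_2R^2}\|\Theta\|_2R^2W_1$, which is at most the stated $e^{4\|\Theta\|_2R^2}\|\Theta\|_2R^2W_1$. Alternatively, bounding $\|\tilde n\|_2\le Re^{\|\Theta\|_2R^2}$ and $1/(Z\tilde Z)\le e^{2\|\Theta\|_2R^2}$ reproduces exactly the $e^{4\|\Theta\|_2R^2}$ form written in the statement. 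Summing the two terms gives $C$.

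Two points need care. Duality is applied on the compact set $B_R(0)$. Both measures are supported there, so any Lipschitz extension of the test function, for instance by McShane's construction, leaves the integrals unchanged and justifies the step. The only mildly delicate part is the Lipschitz constant of $\langle u,y\rangle w(y)$. Its gradient is $w(u+\langle u,y\rangle\Theta^\top s)$, whose norm is at most $e^{\|\Theta\|_2R^2}(1+R\cdot\|\Theta\|_2R)$. The remaining steps are routine.
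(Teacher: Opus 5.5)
Your proof is correct and follows the paper's argument essentially line for line: it uses the same numerator/denominator split, the same pointwise bounds on $w$ and $y\mapsto y\,w(y)$ over $B_R(0)$, and Kantorovich--Rubinstein duality for each difference, which reproduces the stated constant exactly. Your reduction to unit vectors $u$ justifies the vector-valued duality step that the paper states directly. Your observation that $\|\tilde n\|_2/\tilde Z\le R$ is also valid, and it sharpens the second term to $e^{2\|\Theta\|_2R^2}\|\Theta\|_2R^2$.
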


\lemusage{Used in Proposition~\ref{prop:second_term_noniid}, Theorem~\ref{thm:cross_compact_stability}, Proposition~\ref{prop:compact_residual_lipschitz}, and Proposition~\ref{prop:compact_cross_residual_lipschitz}.}

\begin{proof}
Define
\begin{equation}
\begin{aligned}
    g_0(x) &= \exp\left(\langle s,\Theta x\rangle\right), &
    g_1(x) &= x\exp\left(\langle s,\Theta x\rangle\right).
\end{aligned}
\end{equation}
For $x\in B_R(0)\subset\bbR^{d_v}$ and $s\in B_R(0)\subset\bbR^{d_u}$, $|\langle s,\Theta x\rangle|\leq\|\Theta\|_2R^2.$

Set $b_R=\|\Theta\|_2R^2$. Then
\begin{subequations}
\begin{align}
    &e^{-b_R} \leq g_0(x)\leq e^{b_R}, \quad \|g_1(x)\|_2 \leq Re^{b_R},\\
    &\mathrm{Lip}(g_0,x) \leq e^{b_R}\|\Theta\|_2R, \quad \mathrm{Lip}(g_1,x) \leq e^{b_R}(1+b_R).
\end{align}
\end{subequations}

For any $\rho\in\cP(B_R(0))$, define
\begin{equation}
\begin{aligned}
    N_s(\rho) &= \int g_1(x)\rho(dx), &
    D_s(\rho) &= \int g_0(x)\rho(dx).
\end{aligned}
\end{equation}
Then $\fA(s;\rho)=N_s(\rho)/D_s(\rho)$, $D_s(\rho)\geq e^{-b_R}$, and $\|N_s(\rho)\|_2\leq Re^{b_R}$. Thus
\begin{equation}
\begin{split}
    \|\fA(s;\nu)-\fA(s;\tilde{\nu})\|_2
    &\leq
    \frac{\|N_s(\nu)-N_s(\tilde{\nu})\|_2}{D_s(\nu)}
    +
    \frac{\|N_s(\tilde{\nu})\|_2|D_s(\nu)-D_s(\tilde{\nu})|}{D_s(\nu)D_s(\tilde{\nu})}.
\end{split}
\end{equation}
By Kantorovich-Rubinstein duality,
\begin{subequations}
\begin{align}
    \|N_s(\nu)-N_s(\tilde{\nu})\|_2
    &\leq
    \mathrm{Lip}(g_1,x)W_1(\nu,\tilde{\nu}),\\
    |D_s(\nu)-D_s(\tilde{\nu})|
    &\leq
    \mathrm{Lip}(g_0,x)W_1(\nu,\tilde{\nu}).
\end{align}
\end{subequations}
Combining these estimates gives
\begin{equation}
    \|\fA(s;\nu)-\fA(s;\tilde{\nu})\|_2
    \leq
    C(\Theta,R)W_1(\nu,\tilde{\nu}).
\end{equation}
\end{proof}

\begin{proposition}[Compact Second Term]\label{prop:second_term_noniid}
{Fix $p\geq1$.} Let $\mu,\nu\in\cP(B_R(0))$ on $\bbR^d$. Then there exists $C=C(\Theta,R)>0$ such that
\begin{equation}
    W_p\left(\fA(\cdot;\mu)_{\sharp}\nu,\fA(\cdot;\nu)_{\sharp}\nu\right)
    \leq
    C W_p(\mu,\nu).
\end{equation}
\end{proposition}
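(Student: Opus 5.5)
We compare the two pushforwards of the same query law $\nu$ under the two maps $\fA(\cdot;\mu)$ and $\fA(\cdot;\nu)$, using the diagonal coupling induced by the common query. That is, we take the coupling given by the image of $\nu$ under $s\mapsto(\fA(s;\mu),\fA(s;\nu))$. This gives
\begin{equation*}
    W_p^p\left(\fA(\cdot;\mu)_{\sharp}\nu,\fA(\cdot;\nu)_{\sharp}\nu\right)
    \leq
    \int_{\bbR^d}\|\fA(s;\mu)-\fA(s;\nu)\|_2^p\,\nu(ds).
\end{equation*}
The key observation is that the integrand is a difference of attention outputs at a fixed query $s$ for two different key measures. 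This is precisely the situation handled by Lemma~\ref{lem:nonIID_W1_bound}.

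Since $\nu$ is supported in $B_R(0)$, every query $s$ in the support of the integrating measure lies in $B_R(0)$. Both key measures $\mu,\nu$ are also in $\cP(B_R(0))$. Lemma~\ref{lem:nonIID_W1_bound} therefore applies uniformly in $s\in B_R(0)$ and yields
\begin{equation*}
    \|\fA(s;\mu)-\fA(s;\nu)\|_2\leq C(\Theta,R)\,W_1(\mu,\nu),
\end{equation*}
with $C(\Theta,R)$ the explicit constant from that lemma. Substituting into the integral and taking the $p$th root gives the bound $C(\Theta,R)W_1(\mu,\nu)$. Hölder's (Lyapunov's) inequality gives $W_1\leq W_p$, which completes the estimate with the same constant.

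There is essentially no obstacle here. The only point requiring care is that the uniform bound in Lemma~\ref{lem:nonIID_W1_bound} is stated only for queries in $B_R(0)$, and the queries here are drawn from $\nu$, whose support lies in $B_R(0)$. Independence of particles is never used, which is what makes this estimate usable after the first layer normalization in the stack argument.
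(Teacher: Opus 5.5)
Your proposal is correct and follows essentially the same route as the paper's proof. Both use the common-query coupling $s\sim\nu$, apply Lemma~\ref{lem:nonIID_W1_bound} uniformly over $s\in B_R(0)$, and conclude with $W_1\leq W_p$.
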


\begin{proof}
Lemma~\ref{lem:nonIID_W1_bound} gives $C=C(\Theta,R)>0$ such that, for any $s\in B_R(0)$,
\begin{equation}
    \|\fA(s;\mu)-\fA(s;\nu)\|_2
    \leq
    CW_1(\mu,\nu).
\end{equation}
Using the coupling induced by the common variable $s\sim\nu$,
\begin{equation}
\begin{split}
    W_p\left(\fA(\cdot;\mu)_{\sharp}\nu,\fA(\cdot;\nu)_{\sharp}\nu\right)
    &\leq
    \left(
    \int_{B_R(0)}
    \|\fA(s;\mu)-\fA(s;\nu)\|_2^p
    \nu(ds)
    \right)^{1/p}\\
    &\leq
    CW_1(\mu,\nu)
    \leq
    CW_p(\mu,\nu).
\end{split}
\end{equation}
\end{proof}

\begin{lemma}[Layer Normalization]\label{lem:layer_norm_bounded}
Let $f_{\gamma,\beta,\epsilon}^{\mathrm{LN}}$ be as in Definition~\ref{def:layer_norm}, where $\epsilon>0$. Then for every $x\in\bbR^d$,
\begin{equation}
    \|f_{\gamma,\beta,\epsilon}^{\mathrm{LN}}(x)\|_2
    \leq
    \sqrt{d}\|\gamma\|_{\infty}+\|\beta\|_2.
\end{equation}
Consequently, for every $\rho\in\cP(\bbR^d)$,
\begin{equation}
\operatorname{supp}\left((f_{\gamma,\beta,\epsilon}^{\mathrm{LN}})_{\sharp}\rho\right)\subseteq B_{R_0}(0), \qquad R_0=\sqrt{d}\|\gamma\|_{\infty}+\|\beta\|_2.
\end{equation}
Moreover, $f_{\gamma,\beta,\epsilon}^{\mathrm{LN}}$ is globally Lipschitz. More precisely, for every $x,y\in\bbR^d$,
\begin{equation}
    \|f_{\gamma,\beta,\epsilon}^{\mathrm{LN}}(x)-f_{\gamma,\beta,\epsilon}^{\mathrm{LN}}(y)\|_2
    \leq
    \frac{\|\gamma\|_{\infty}}{\sqrt{\epsilon}}\|x-y\|_2.
\end{equation}
\end{lemma}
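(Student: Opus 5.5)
The plan is to prove the two claims of Lemma~\ref{lem:layer_norm_bounded} separately by writing layer normalization as a composition of simple maps. Let $P=I-\frac1d\mathbf 1\mathbf 1^\top$ be the orthogonal projection onto the mean-zero subspace, so that $Px=x-\bar x\mathbf 1$ and $\sigma_x^2=\|Px\|_2^2/d$. Then
\[
    f^{\mathrm{LN}}_{\gamma,\beta,\epsilon}(x)=\operatorname{diag}(\gamma)\,\phi(Px)+\beta,
    \qquad
    \phi(z)=\frac{z}{\sqrt{\|z\|_2^2/d+\epsilon}} .
\]

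\textbf{Boundedness.} We have $\|\phi(z)\|_2\le \|z\|_2/\sqrt{\|z\|_2^2/d}=\sqrt d$ for $z\neq0$, and $\phi(0)=0$. Using $\|\operatorname{diag}(\gamma)\|_2=\|\gamma\|_\infty$ and the triangle inequality gives $\|f^{\mathrm{LN}}(x)\|_2\le\sqrt d\|\gamma\|_\infty+\|\beta\|_2=R_0$. Since the image of $f^{\mathrm{LN}}$ lies in the closed ball $B_{R_0}(0)$, the pushforward $(f^{\mathrm{LN}})_\sharp\rho$ assigns full mass to $B_{R_0}(0)$. Because the ball is closed, its support is contained in it.

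\textbf{Lipschitz bound.} Since $\|P\|_2\le1$ and $\|\operatorname{diag}(\gamma)\|_2=\|\gamma\|_\infty$, it suffices to show that $\phi$ is $\epsilon^{-1/2}$-Lipschitz on $\mathbb R^d$. This is the main computational step. The map $\phi$ is smooth, with Jacobian
\[
    D\phi(z)=\frac{1}{s}\Bigl(I-\frac{zz^\top}{d\,s^2}\Bigr),
    \qquad s=\sqrt{\|z\|_2^2/d+\epsilon}.
\]
The matrix in parentheses is symmetric. It has eigenvalue $1$ on $z^\perp$ and eigenvalue $1-\|z\|_2^2/(d s^2)=\epsilon/s^2\in(0,1]$ along $z$. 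Hence $\|D\phi(z)\|_2\le 1/s\le\epsilon^{-1/2}$. The mean-value inequality along the segment from $Py$ to $Px$, which lies in the convex set $\mathbb R^d$, gives $\|\phi(Px)-\phi(Py)\|_2\le\epsilon^{-1/2}\|P(x-y)\|_2\le\epsilon^{-1/2}\|x-y\|_2$. Composing with $\operatorname{diag}(\gamma)$ and the translation by $\beta$ yields the stated constant $\|\gamma\|_\infty/\sqrt\epsilon$.

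The only real care needed is in this Jacobian eigenvalue computation. The bounds themselves are crude, since the constant $\sqrt d$ could be sharpened to $\sqrt{d-1}$, but the lemma does not require that.
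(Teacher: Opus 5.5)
Your proof is correct and follows the paper's argument essentially step for step. Both use the centering projection $P$, the bound $\|z\|_2^2=d\sigma_x^2/(\sigma_x^2+\epsilon)\leq d$, the Jacobian eigenvalues $1/s$ and $\epsilon/s^3$, and the mean-value theorem.

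One correction to your closing aside: $\sqrt d$ cannot be sharpened to $\sqrt{d-1}$ for the Euclidean norm. Take $d=2$, $\gamma=(1,1)$, $\beta=0$, and $x=(t,-t)$; then $\|f^{\mathrm{LN}}_{\gamma,\beta,\epsilon}(x)\|_2^2=2t^2/(t^2+\epsilon)\to2$. The value $\sqrt{d-1}$ bounds only the individual coordinates of the normalized vector.
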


\lemusage{Used in Theorem~\ref{thm:transformer_stack_limit} to show that after layer normalization the empirical and continuum measures have deterministic compact support, also used in Proposition~\ref{prop:block_compact_lipschitz}.}

\begin{proof}
Here $\odot$ denotes componentwise multiplication, so that $(a\odot b)_k=a_kb_k$ for $a,b\in\bbR^d$. Let
\begin{equation}
    z_k=\frac{x_k-\bar{x}}{\sqrt{\sigma_x^2+\epsilon}}.
\end{equation}
Then
\begin{equation}
    \|z\|_2^2
    =
    \frac{\sum_{k=1}^{d}(x_k-\bar{x})^2}{\sigma_x^2+\epsilon}
    =
    \frac{d\sigma_x^2}{\sigma_x^2+\epsilon}
    \leq
    d.
\end{equation}
Therefore,
\begin{equation}
    \|f_{\gamma,\beta,\epsilon}^{\mathrm{LN}}(x)\|_2
    =
    \|\gamma\odot z+\beta\|_2
    \leq
    \|\gamma\|_{\infty}\|z\|_2+\|\beta\|_2
    \leq
    \sqrt{d}\|\gamma\|_{\infty}+\|\beta\|_2.
\end{equation}
This proves the boundedness claim, and the support inclusion follows immediately.

It remains to prove the Lipschitz estimate. Let $\mathbf e_d=(1,\ldots,1)^{\top}$ and $P=I-d^{-1}\mathbf e_d\mathbf e_d^{\top}$. Then $Px=x-\bar{x}\mathbf e_d$ and $d^{-1}\|Px\|_2^2=\sigma_x^2$, so
\begin{equation}
    f_{\gamma,\beta,\epsilon}^{\mathrm{LN}}(x)
    =
    \gamma\odot\frac{Px}{\sqrt{d^{-1}\|Px\|_2^2+\epsilon}}+\beta.
\end{equation}
For the map $u\mapsto u/\sqrt{d^{-1}\|u\|_2^2+\epsilon}$, its Jacobian at $u$ is $\frac{1}{h(u)}I-\frac{1}{d h(u)^3}uu^{\top},$
where $h(u)=\sqrt{d^{-1}\|u\|_2^2+\epsilon}.$

The eigenvalues of this matrix are $1/h(u)$ on directions orthogonal to $u$ and $\epsilon/h(u)^3$ in the direction of $u$. Hence its operator norm is at most $1/h(u)\leq 1/\sqrt{\epsilon}$. Since $\|P\|_2\leq 1$ and the operator norm of $v\mapsto\gamma\odot v$ is $\|\gamma\|_{\infty}$, we have
\begin{equation}
    \|Df_{\gamma,\beta,\epsilon}^{\mathrm{LN}}(x)\|_2
    \leq
    \frac{\|\gamma\|_{\infty}}{\sqrt{\epsilon}}.
\end{equation}

The claimed Lipschitz estimate follows from the mean value theorem.
\end{proof}

\begin{proposition}[Compact Self Residual]\label{prop:compact_residual_lipschitz}
Fix $p\geq1$. Let $\rho,\tilde{\rho}\in\cP(B_R(0))$ on $\bbR^d$, and let $\fA=\fA_{Q,K,V}$ with $V:\bbR^d\to\bbR^d$. Define
\begin{equation}
    \mathcal R^{\mathsf{S}}(\rho)
    =
    \left(\mathrm{Id}(\cdot)+\fA(\cdot;\rho)\right)_{\sharp}\rho.
\end{equation}
Then there exists $C=C(\Theta,V,R)>0$ such that
\begin{equation}
    W_p\left(\mathcal R^{\mathsf{S}}(\rho),\mathcal R^{\mathsf{S}}(\tilde{\rho})\right)
    \leq
    C W_p(\rho,\tilde{\rho}).
\end{equation}
\end{proposition}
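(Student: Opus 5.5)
We follow the same triangle-inequality decomposition used for Theorem~\ref{thm:self_convergence}, now keeping the identity part of the residual map and the value matrix. Write $T_\rho(x)=x+\fA(x;\rho)$, so that $\mathcal R^{\mathsf S}(\rho)=(T_\rho)_\sharp\rho$. We split
\begin{equation*}
    W_p\bigl(\mathcal R^{\mathsf S}(\rho),\mathcal R^{\mathsf S}(\tilde\rho)\bigr)
    \leq W_p\bigl((T_\rho)_\sharp\rho,(T_\rho)_\sharp\tilde\rho\bigr)
    +W_p\bigl((T_\rho)_\sharp\tilde\rho,(T_{\tilde\rho})_\sharp\tilde\rho\bigr).
\end{equation*}
The first term transports two different query laws with the same map. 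The second term compares two different maps on the same query law.

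For the first term, Lemma~\ref{lem:bounded_support_ssg} gives $\rho\in\mathcal{SSG}_R(\bbR^d)$. Lemma~\ref{lem:attention_lipschitz}, combined with \eqref{eq:value_map_reduction}, then shows that $x\mapsto\fA(x;\rho)$ is globally Lipschitz with constant $R^2\|V\|_2\|\Theta\|_2$. Hence $T_\rho$ is Lipschitz with constant $1+R^2\|V\|_2\|\Theta\|_2$. Pushing an optimal coupling of $\rho$ and $\tilde\rho$ forward under $(x,y)\mapsto(T_\rho(x),T_\rho(y))$, exactly as in the proof of Proposition~\ref{prop:query_lipschitz}, bounds the first term by $(1+R^2\|V\|_2\|\Theta\|_2)W_p(\rho,\tilde\rho)$.

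For the second term, we use the coupling induced by a common query $s\sim\tilde\rho$. The identity contributions cancel, leaving
\begin{equation*}
    \Bigl(\int_{B_R(0)}\|V\|_2^p\,\|\fA_{Q,K,I}(s;\rho)-\fA_{Q,K,I}(s;\tilde\rho)\|_2^p\,\tilde\rho(ds)\Bigr)^{1/p}.
\end{equation*}
Here $s$ ranges over $B_R(0)$ because $\tilde\rho$ is supported there. Lemma~\ref{lem:nonIID_W1_bound} therefore applies uniformly in $s$, and bounds the integrand by $\|V\|_2^p C(\Theta,R)^pW_1(\rho,\tilde\rho)^p$. Since $W_1\leq W_p$, the second term is at most $\|V\|_2C(\Theta,R)W_p(\rho,\tilde\rho)$.

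Adding the two bounds gives the claim with
\begin{equation*}
    C=1+R^2\|V\|_2\|\Theta\|_2+\|V\|_2C(\Theta,R).
\end{equation*}
This argument is essentially Proposition~\ref{prop:second_term_noniid} plus Proposition~\ref{prop:query_lipschitz}, so I do not expect a genuine obstacle. The one point to check is that Lemma~\ref{lem:nonIID_W1_bound} requires the query $s$ to lie in $B_R(0)$, and this holds precisely because the common-query coupling draws $s$ from $\tilde\rho\in\cP(B_R(0))$. The result does not need the output supports to remain in $B_R(0)$, since no estimate is applied after the residual map.
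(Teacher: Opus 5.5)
Your proposal is correct and is essentially the paper's proof. The paper works with a single optimal coupling of $\rho$ and $\tilde\rho$ and adds and subtracts $\fA(y;\rho)$ pointwise, rather than passing through the intermediate law $(T_\rho)_\sharp\tilde\rho$. Otherwise it uses the same ingredients: Lemma~\ref{lem:bounded_support_ssg} with Proposition~\ref{prop:query_lipschitz}, Lemma~\ref{lem:nonIID_W1_bound} at a query in $B_R(0)$, and \eqref{eq:value_map_reduction}. It reaches the identical constant $1+\|V\|_2\bigl(R^2\|\Theta\|_2+C(\Theta,R)\bigr)$.
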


\begin{proof}
Let $\pi$ be an optimal coupling of $\rho$ and $\tilde{\rho}$. Then
\begin{equation}
\begin{split}
    W_p\left(\mathcal R^{\mathsf{S}}(\rho),\mathcal R^{\mathsf{S}}(\tilde{\rho})\right)
    &\leq
    W_p(\rho,\tilde{\rho})
    +
    \left(
    \int
    \|\fA(x;\rho)-\fA(y;\tilde{\rho})\|_2^p
    \pi(dx,dy)
    \right)^{1/p}.
\end{split}
\end{equation}
After adding and subtracting $\fA(y;\rho)$, apply Proposition~\ref{prop:query_lipschitz}, Lemma~\ref{lem:bounded_support_ssg}, Lemma~\ref{lem:nonIID_W1_bound}, and \eqref{eq:value_map_reduction} to obtain a constant $C_1=C_1(\Theta,R)>0$ such that
\begin{equation}
    \|\fA(x;\rho)-\fA(y;\tilde{\rho})\|_2
    \leq
    \|V\|_2\left(R^2\|\Theta\|_2\|x-y\|_2+C_1W_p(\rho,\tilde{\rho})\right).
\end{equation}
Thus the claim holds with $C=1+\|V\|_2\left(R^2\|\Theta\|_2+C_1\right)$.
\end{proof}

\begin{proposition}[Compact Cross Residuals]\label{prop:compact_cross_residual_lipschitz}
Fix $p\geq1$. Let $\rho,\tilde{\rho}\in\cP(B_R(0))$ on the active state space and let $\nu,\tilde{\nu}\in\cP(B_R(0))$ on the auxiliary space. For each of the following block types, take $\fA=\fA_{Q,K,V}$ with matrices compatible with its query and key spaces and with value output dimension equal to query dimension. Define
\begin{subequations}
\begin{align}
    \mathcal R^{\mathsf{L}}(\rho;\nu)
    &=
    \left(\mathrm{Id}(\cdot)+\fA(\cdot;\nu)\right)_{\sharp}\rho,\\
    \mathcal R^{\mathsf{R}}(\rho;\nu)
    &=
    \left(\mathrm{Id}(\cdot)+\fA(\cdot;\rho)\right)_{\sharp}\nu.
\end{align}
\end{subequations}
Then there exists $C=C(\Theta,V,R)>0$ such that, for $\kappa\in\{\mathsf{L},\mathsf{R}\}$,
\begin{equation}
    W_p\left(\mathcal R^{\kappa}(\rho;\nu),\mathcal R^{\kappa}(\tilde{\rho};\tilde{\nu})\right)
    \leq
    C\left(W_p(\rho,\tilde{\rho})+W_p(\nu,\tilde{\nu})\right).
\end{equation}
\end{proposition}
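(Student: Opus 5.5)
We treat the two block types separately, following the pattern of Proposition~\ref{prop:compact_residual_lipschitz}. Throughout, Lemma~\ref{lem:bounded_support_ssg} makes every law supported in $B_R(0)$ strong sub-Gaussian with parameter $R$. Proposition~\ref{prop:query_lipschitz} then gives query-Lipschitz constant $R^2\|\Theta\|_2$, and Lemma~\ref{lem:nonIID_W1_bound} gives key stability with constant $C_1=C_1(\Theta,R)$ uniformly for queries in $B_R(0)$. Identity \eqref{eq:value_map_reduction} multiplies all attention differences by at most $\|V\|_2$.

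\emph{Left-input block.} Let $\pi$ be an optimal $W_p$ coupling of $\rho$ and $\tilde\rho$, and push it forward under $(x,y)\mapsto(x+\fA(x;\nu),\,y+\fA(y;\tilde\nu))$. Minkowski's inequality bounds the cost by $W_p(\rho,\tilde\rho)$ plus the $L^p(\pi)$ norm of $\fA(x;\nu)-\fA(y;\tilde\nu)$. We add and subtract $\fA(y;\nu)$. The query difference is at most $\|V\|_2R^2\|\Theta\|_2\|x-y\|_2$, using that $\nu\in\mathcal{SSG}_R$. The key difference is at most $\|V\|_2C_1W_1(\nu,\tilde\nu)\le\|V\|_2C_1W_p(\nu,\tilde\nu)$, since $y\in B_R(0)$ $\tilde\rho$-almost surely. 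Integrating over $\pi$ gives the bound with $C=1+\|V\|_2(R^2\|\Theta\|_2+C_1)$.

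\emph{Right-input block.} Here the transported law is the auxiliary $\nu$, and the active state $\rho$ enters as the key. We take an optimal coupling $\gamma$ of $\nu$ and $\tilde\nu$ and push it forward under $(s,t)\mapsto(s+\fA(s;\rho),\,t+\fA(t;\tilde\rho))$. The identity part contributes $W_p(\nu,\tilde\nu)$. We add and subtract $\fA(t;\rho)$. The query term is controlled by Proposition~\ref{prop:query_lipschitz} with key law $\rho\in\mathcal{SSG}_R$, giving $\|V\|_2R^2\|\Theta\|_2\|s-t\|_2$. The key term is controlled by Lemma~\ref{lem:nonIID_W1_bound} with $t\in B_R(0)$, giving $\|V\|_2C_1W_p(\rho,\tilde\rho)$. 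Minkowski's inequality then yields the same constant as in the left-input case.

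The only point needing care is that Lemma~\ref{lem:nonIID_W1_bound} is stated with queries and keys in balls of the same radius $R$ but possibly different dimensions. In the cross setting this is fine: the lemma's proof uses only $\|s\|_2,\|x\|_2\le R$, and the bound $|\langle s,\Theta x\rangle|\le\|\Theta\|_2R^2$ is dimension-free. No genuine obstacle arises. The main bookkeeping is to state the constant once as $C=1+\|V\|_2(R^2\|\Theta\|_2+C_1(\Theta,R))$, valid for both $\kappa$.
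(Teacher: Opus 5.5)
Your proposal is correct and follows the paper's proof essentially step for step. The paper also uses an optimal coupling of the transported laws ($\rho,\tilde\rho$ for $\mathsf L$, $\nu,\tilde\nu$ for $\mathsf R$) and adds and subtracts $\fA(y;\nu)$ (resp.\ $\fA(t;\rho)$). It bounds the query term by Lemma~\ref{lem:bounded_support_ssg} with Proposition~\ref{prop:query_lipschitz}, and the key term by Lemma~\ref{lem:nonIID_W1_bound} with $W_1\le W_p$, arriving at the same constant $C=1+\|V\|_2(R^2\|\Theta\|_2+C_1)$; your dimension caveat is harmless but unnecessary, since Lemma~\ref{lem:nonIID_W1_bound} already takes queries in $B_R(0)\subset\bbR^{d_u}$ and keys on $\bbR^{d_v}$.
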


\begin{proof}
Lemma~\ref{lem:nonIID_W1_bound} gives a constant $C_1=C_1(\Theta,R)>0$. For the $\mathsf{L}$ block, where the active measure is the left input, couple $x\sim\rho$ and $y\sim\tilde{\rho}$ optimally, then add and subtract $\fA(y;\nu)$. Proposition~\ref{prop:query_lipschitz}, Lemma~\ref{lem:bounded_support_ssg}, Lemma~\ref{lem:nonIID_W1_bound}, and \eqref{eq:value_map_reduction} give
\begin{equation}
    \|x-y+\fA(x;\nu)-\fA(y;\tilde{\nu})\|_2
    \leq
    (1+\|V\|_2R^2\|\Theta\|_2)\|x-y\|_2+\|V\|_2C_1W_p(\nu,\tilde{\nu}).
\end{equation}
Taking the $L^p$ norm gives the desired $\mathsf{L}$ estimate. For the $\mathsf{R}$ block, where the active measure is the right input, couple $x\sim\nu$ and $y\sim\tilde{\nu}$ optimally and add and subtract $\fA(y;\rho)$; the same bounds give
\begin{equation}
    \|x-y+\fA(x;\rho)-\fA(y;\tilde{\rho})\|_2
    \leq
    (1+\|V\|_2R^2\|\Theta\|_2)\|x-y\|_2+\|V\|_2C_1W_p(\rho,\tilde{\rho}).
\end{equation}
Thus both estimates hold with $C=1+\|V\|_2\left(R^2\|\Theta\|_2+C_1\right)$.
\end{proof}

\begin{proposition}[Compact Block]\label{prop:block_compact_lipschitz}
Fix $p\geq 1$ and let a block indexed by $\ell\in[L]$ satisfy Assumption~\ref{ass:finite_transformer_architecture}. Assume that all active and auxiliary probability measures entering the block are supported in $B_R(0)$. Then there exists $C>0$, depending only on $p$, $\Theta_{\ell}$, $V_{\ell}$, $R$, and the Lipschitz parameters of the feed-forward and layer-normalization maps, such that the corresponding block map is Lipschitz in $W_p$. More precisely, for a self-attention block,
\begin{equation}
    W_p\left(\sB_{\ell}^{\mathsf{S}}(\rho),\sB_{\ell}^{\mathsf{S}}(\tilde{\rho})\right)
    \leq
    C W_p(\rho,\tilde{\rho}),
\end{equation}
and for cross-attention blocks, with $\kappa\in\{\mathsf{L},\mathsf{R}\}$,
\begin{equation}
    W_p\left(\sB_{\ell}^{\kappa}(\rho;\nu),\sB_{\ell}^{\kappa}(\tilde{\rho};\tilde{\nu})\right)
    \leq
    C\left(W_p(\rho,\tilde{\rho})+W_p(\nu,\tilde{\nu})\right).
\end{equation}
\end{proposition}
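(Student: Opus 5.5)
The plan is to decompose each block as a composition of a residual attention map with the pointwise post-attention map $\Phi_\ell=f_{\ell,2}^{\mathrm{LN}}\circ(\mathrm{Id}+f_\ell^{\mathrm{NN}})\circ f_{\ell,1}^{\mathrm{LN}}$. Then $\sB_\ell^{\mathsf S}(\rho)=(\Phi_\ell)_\sharp\mathcal R_\ell^{\mathsf S}(\rho)$ and $\sB_\ell^{\kappa}(\rho;\nu)=(\Phi_\ell)_\sharp\mathcal R_\ell^{\kappa}(\rho;\nu)$. The $W_p$-Lipschitz constant of each block will be the product of the constant for $(\Phi_\ell)_\sharp$ and the constant for the residual attention map.

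First, we show that $\Phi_\ell$ is globally Lipschitz. By Lemma~\ref{lem:layer_norm_bounded}, both layer normalizations are Lipschitz with constants $\|\gamma_{\ell,i}\|_\infty/\sqrt{\epsilon_{\ell,i}}$. By Assumption~\ref{ass:finite_transformer_architecture}, $\mathrm{Id}+f_\ell^{\mathrm{NN}}$ is Lipschitz with constant $1+\mathrm{Lip}(f_\ell^{\mathrm{NN}})$. Hence $\mathrm{Lip}(\Phi_\ell)\le C_\Phi$, the product of these three constants. Exactly as in the proof of Proposition~\ref{prop:query_lipschitz}, pushing an optimal coupling of two laws forward by $(x,y)\mapsto(\Phi_\ell(x),\Phi_\ell(y))$ gives
\[
W_p\bigl((\Phi_\ell)_\sharp\alpha,(\Phi_\ell)_\sharp\beta\bigr)\le C_\Phi W_p(\alpha,\beta)
\]
for all $\alpha,\beta\in\cP_p$. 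The residual outputs are compactly supported pushforwards, so they lie in $\cP_p$.

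Second, we bound the residual step using the compact residual estimates. For the self-attention block, Proposition~\ref{prop:compact_residual_lipschitz} with $\rho,\tilde\rho\in\cP(B_R(0))$ gives $W_p(\mathcal R_\ell^{\mathsf S}(\rho),\mathcal R_\ell^{\mathsf S}(\tilde\rho))\le C(\Theta_\ell,V_\ell,R)W_p(\rho,\tilde\rho)$. For $\kappa\in\{\mathsf L,\mathsf R\}$, all active and auxiliary laws are supported in $B_R(0)$ by hypothesis. Proposition~\ref{prop:compact_cross_residual_lipschitz} then gives the two-input bound $C(W_p(\rho,\tilde\rho)+W_p(\nu,\tilde\nu))$. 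The query and key spaces may have different dimensions. This causes no difficulty, because $R$ bounds every input norm and Lemma~\ref{lem:nonIID_W1_bound} only requires the query and the keys to lie in balls of radius $R$.

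Combining the two steps yields the claim, with constant $C_\Phi\cdot(1+\|V_\ell\|_2(R^2\|\Theta_\ell\|_2+C_1(\Theta_\ell,R)))$. This constant depends only on the quantities listed in the statement.

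The only point requiring care is checking that the hypotheses of the compact residual propositions hold. For an $\mathsf L$ block, the key law $\nu_\ell$ need only be strong sub-Gaussian in general. Here it is assumed to lie in $B_R(0)$, so Lemma~\ref{lem:bounded_support_ssg} makes it $\mathcal{SSG}_R$, and the query-Lipschitz constant $R^2\|\Theta_\ell\|_2$ applies. One must also check that the value output dimension matches the residual dimension; this is part of Definition~\ref{def:transformer_block}. Beyond this bookkeeping, I expect no real obstacle, since all the analytic work is already contained in Lemma~\ref{lem:nonIID_W1_bound} and the residual propositions.
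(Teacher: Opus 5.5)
Your proposal is correct and follows the paper's proof. You factor each block as the pushforward by the globally Lipschitz post-attention map $\Phi_\ell$ composed with the residual attention map. You control the residual part by Propositions~\ref{prop:compact_residual_lipschitz} and~\ref{prop:compact_cross_residual_lipschitz} and the post-attention part by Lemma~\ref{lem:layer_norm_bounded} and the Lipschitz feed-forward map, and you multiply the two constants, exactly as the paper does.
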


\begin{proof}
The residual self-attention part is controlled by Proposition~\ref{prop:compact_residual_lipschitz}, and the residual cross-attention parts are controlled by Proposition~\ref{prop:compact_cross_residual_lipschitz}. These give constants $C_1=C_1(\Theta_{\ell},V_{\ell},R)>0$ for the residual part. The two layer-normalization maps are Lipschitz by Lemma~\ref{lem:layer_norm_bounded}, and $\mathrm{Id}+f_{\ell}^{\mathrm{NN}}$ is Lipschitz because $f_{\ell}^{\mathrm{NN}}$ is Lipschitz. These post-attention maps give a constant $C_2=C_2(f_{\ell}^{\mathrm{NN}},f_{\ell,1}^{\mathrm{LN}},f_{\ell,2}^{\mathrm{LN}})>0$. Applying the pushforward Lipschitz inequality to each component of the block proves the proposition with $C=C_1C_2$.
\end{proof}

\subsection{Noncompact Empirical Key Estimates}
\label{app:noncompact_key_estimates}

This subsection treats empirical approximation of the key/value measure for a fixed query $s$. It is used directly for cross-attention and, after the leave-one-out reduction in Subsection~\ref{app:noncompact_self_estimates}, for noncompact self-attention.

Let $Y_1,\ldots,Y_M$ be i.i.d. samples from $\nu\in\cP(\bbR^{d_v})$, and let
\begin{equation}
    \nu^M=\frac{1}{M}\sum_{j=1}^{M}\delta_{Y_j}
\end{equation}
be the corresponding empirical key/value measure. For a fixed query $s\in\bbR^{d_u}$, {recall from Definition~\ref{def:attention_on_measures} that} the empirical and population attention maps are
\begin{subequations}\label{eq:weighted_attention_notation}
\begin{align}
    \fA(s;\nu^M)
    &=
    \frac{\sum_{j=1}^{M}Y_j\exp\left(\langle s,\Theta Y_j\rangle\right)}
    {\sum_{j=1}^{M}\exp\left(\langle s,\Theta Y_j\rangle\right)},\\
    \fA(s;\nu)
    &=
    \frac{\int_{\bbR^{d_v}}x\exp\left(\langle s,\Theta x\rangle\right)\nu(dx)}
    {\int_{\bbR^{d_v}}\exp\left(\langle s,\Theta x\rangle\right)\nu(dx)}.
\end{align}
\end{subequations}
Recall the MGF (Definition~\ref{def:MGF_CGF}):
\begin{equation}
    \int_{\bbR^{d_v}}\exp\left(\langle s,\Theta x\rangle\right)\nu(dx)
    =
    \mathcal{M}_{\nu}(\Theta^{\top}s).
\end{equation}
We assume throughout this subsection that the numerator and denominator in the attention maps \eqref{eq:weighted_attention_notation} are finite. Define the lower-tail event
\begin{equation}
    \Omega_s
    =
    \left\{
    \frac{1}{M}\sum_{j=1}^{M}\exp\left(\langle s,\Theta Y_j\rangle\right)
    \leq
    \frac{\mathcal{M}_{\nu}(\Theta^{\top}s)}{2}
    \right\}.
\end{equation}

\begin{lemma}[Bound $\Omega_s$]\label{lem:untruncated_concentration}
Assume $q\geq 2$ and
\begin{equation}
    \mathcal{M}_{\nu}(q\Theta^{\top}s)<\infty.
\end{equation}
Then there exists $C=C(q)>0$ such that
\begin{equation}
    \bbP(\Omega_s)
    \leq
    \frac{2^{2q-1}C}{M^{q/2}}
    \left(
    1+
    \frac{\mathcal{M}_{\nu}(q\Theta^{\top}s)}
    {\mathcal{M}_{\nu}(\Theta^{\top}s)^q}
    \right).
\end{equation}

\lemusage{Used in Lemma~\ref{lem:stronger_Urbain_lemma}. Through Lemma~\ref{lem:stronger_Urbain_lemma}, it is used in Lemma~\ref{lem:urbain_lemma_revised}, Theorem~\ref{thm:cross_noncompact}, and Proposition~\ref{prop:second_term}.}
\end{lemma}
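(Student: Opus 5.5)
The plan is to center and apply Markov's inequality to a $q$th moment, so that a Marcinkiewicz--Zygmund (Rosenthal-type) bound supplies the $M^{-q/2}$ rate. Write $Z_j=\exp(\langle s,\Theta Y_j\rangle)$, so that $Z_1,\ldots,Z_M$ are i.i.d. and positive with mean $m:=\mathcal M_\nu(\Theta^\top s)\in(0,\infty)$. Put $\bar Z=M^{-1}\sum_j Z_j$. On $\Omega_s$ we have $\bar Z\le m/2$, hence $|\bar Z-m|\ge m/2$. Markov's inequality at order $q$ then gives
\[
\bbP(\Omega_s)\le \frac{2^q}{m^q}\,\bbE|\bar Z-m|^q .
\]

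Next I would control the centered moment. Since $q\ge2$, the Marcinkiewicz--Zygmund inequality for i.i.d. centered summands gives a constant $C(q)$ with
\[
\bbE\Big|\sum_{j=1}^M (Z_j-m)\Big|^q\le C(q)\,M^{q/2}\,\bbE|Z_1-m|^q .
\]
One could use Rosenthal's inequality instead, which produces a term $M\,\bbE|Z_1-m|^q$ together with $M^{q/2}(\mathrm{Var}\,Z_1)^{q/2}$. That route also works, since $M\le M^{q/2}$, and Lyapunov bounds the variance term by $(\bbE|Z_1-m|^q)$. Dividing by $M^q$ leaves $\bbE|\bar Z-m|^q\le C(q)M^{-q/2}\bbE|Z_1-m|^q$.

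It then remains to bound the single-summand moment:
\[
\bbE|Z_1-m|^q\le 2^{q-1}\bigl(\bbE Z_1^q+m^q\bigr)=2^{q-1}\bigl(\mathcal M_\nu(q\Theta^\top s)+m^q\bigr),
\]
using $|a-b|^q\le 2^{q-1}(|a|^q+|b|^q)$ and $Z_1^q=\exp(\langle q\Theta^\top s,Y_1\rangle)$. This moment is finite by hypothesis. Combining the steps,
\[
\bbP(\Omega_s)\le \frac{2^q\cdot 2^{q-1}C(q)}{M^{q/2}}\Big(1+\frac{\mathcal M_\nu(q\Theta^\top s)}{m^q}\Big),
\]
which is exactly the claimed constant $2^{2q-1}C$.

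There is no real obstacle here. The only point needing care is to invoke a moment inequality whose constant depends only on $q$, not on the law or on $M$. I would cite Marcinkiewicz--Zygmund directly, and note that positivity of $m$ follows because $Z_1>0$.
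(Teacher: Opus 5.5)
Your proposal is correct and follows the paper's proof step for step. You apply Markov's inequality at order $q$ to the centered mean, then the Marcinkiewicz--Zygmund bound $C(q)M^{-q/2}\bbE|Z_1-m|^q$, then the elementary estimate $\bbE|Z_1-m|^q\le 2^{q-1}\bigl(m^q+\mathcal M_\nu(q\Theta^\top s)\bigr)$, which gives exactly the constant $2^{2q-1}C$. The finiteness of $m$ that you assert follows from the hypothesis by Jensen, $m\le\mathcal M_\nu(q\Theta^\top s)^{1/q}$, or from the subsection's standing assumption.
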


\begin{proof}
Let
\begin{equation}
    Z_j=\mathcal{M}_{\nu}(\Theta^{\top}s)-\exp\left(\langle s,\Theta Y_j\rangle\right).
\end{equation}
Then $\bbE Z_j=0$. Moreover,
\begin{equation}
    \Omega_s
    \subseteq
    \left\{
    \left|\frac{1}{M}\sum_{j=1}^{M}Z_j\right|
    \geq
    \frac{\mathcal{M}_{\nu}(\Theta^{\top}s)}{2}
    \right\}.
\end{equation}
By Markov's inequality and the Marcinkiewicz-Zygmund inequality, there exists $C_1=C_1(q)>0$ such that
\begin{equation}
\begin{split}
    \bbP(\Omega_s)
    &\leq
    \left(
    \frac{2}{\mathcal{M}_{\nu}(\Theta^{\top}s)}
    \right)^q
    \bbE\left|\frac{1}{M}\sum_{j=1}^{M}Z_j\right|^q \quad \text{(Markov)}\\
    &\leq
    \left(
    \frac{2}{\mathcal{M}_{\nu}(\Theta^{\top}s)}
    \right)^q
    C_1M^{-q/2}\bbE|Z_1|^q. \quad \text{(Marcinkiewicz-Zygmund)}
\end{split}
\end{equation}
Moreover,
\begin{equation}
\begin{split}
    \bbE|Z_1|^q
    &=
    \bbE\left|
    \mathcal{M}_{\nu}(\Theta^{\top}s)
    -
    \exp\left(\langle s,\Theta Y_1\rangle\right)
    \right|^q\\
    &\leq
    2^{q-1}
    \left(
    \mathcal{M}_{\nu}(\Theta^{\top}s)^q
    +
    \bbE\exp\left(q\langle s,\Theta Y_1\rangle\right)
    \right)\\
    &=
    2^{q-1}
    \left(
    \mathcal{M}_{\nu}(\Theta^{\top}s)^q
    +
    \mathcal{M}_{\nu}(q\Theta^{\top}s)
    \right).
\end{split}
\end{equation}
Combining the estimates yields
\begin{equation}
\begin{split}
    \bbP(\Omega_s)
    &\leq
    \frac{2^{2q-1}C_1}{M^{q/2}}
    \left(
    1+
    \frac{\mathcal{M}_{\nu}(q\Theta^{\top}s)}
    {\mathcal{M}_{\nu}(\Theta^{\top}s)^q}
    \right).
\end{split}
\end{equation}
The claim follows with $C=C_1$.
\end{proof}

\begin{lemma}[{Auxiliary Empirical Key Estimate}]\label{lem:stronger_Urbain_lemma}
Let $p\geq 2$, $\epsilon>0$, and
\begin{equation}
    q=\frac{p^2+2p+p\epsilon}{\epsilon}.
\end{equation}
Assume
\begin{subequations}
\begin{align}
    \bbE\left\|
    \left(Y_1-\fA(s;\nu)\right)
    \exp\left(\langle s,\Theta Y_1\rangle\right)
    \right\|_2^p
    &<\infty,\\
    \bbE\|Y_1\|_2^{p+\epsilon}
    &<\infty,\\
    \mathcal{M}_{\nu}(q\Theta^{\top}s)
    &<\infty.
\end{align}
\end{subequations}
Then there exists $C=C(p,\epsilon)>0$ such that
\begin{equation}
    \bbE\|\fA(s;\nu^M)-\fA(s;\nu)\|_2^p
    \leq
    C M^{-p/2}\mathcal G_{p,\epsilon}(\nu,s),
\end{equation}
where
\begin{equation}\label{eq:G_attention_specific}
\begin{split}
    \mathcal G_{p,\epsilon}(\nu,s)
    &=
    \frac{
    \bbE\left\|
    \left(Y_1-\fA(s;\nu)\right)
    \exp\left(\langle s,\Theta Y_1\rangle\right)
    \right\|_2^p
    }
    {\mathcal{M}_{\nu}(\Theta^{\top}s)^p}\\
    &+
    \left(
    1+
    \frac{\mathcal{M}_{\nu}(q\Theta^{\top}s)}
    {\mathcal{M}_{\nu}(\Theta^{\top}s)^q}
    \right)^{\epsilon/(p+\epsilon)}
    \left(
    \bbE\|Y_1\|_2^{p+\epsilon}
    +
    \|\fA(s;\nu)\|_2^{p+\epsilon}
    \right)^{p/(p+\epsilon)}.
\end{split}
\end{equation}

\lemusage{Used in Lemma~\ref{lem:urbain_lemma_revised}.}
\end{lemma}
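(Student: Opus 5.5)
The plan is to write the empirical attention map as a ratio and linearize around the population ratio. Set
\[
D_M=\frac1M\sum_{j=1}^M e^{\langle s,\Theta Y_j\rangle},\qquad
U_M=\frac1M\sum_{j=1}^M\bigl(Y_j-\fA(s;\nu)\bigr)e^{\langle s,\Theta Y_j\rangle},\qquad
\mathcal M=\mathcal M_\nu(\Theta^\top s).
\]
Then
\[
\fA(s;\nu^M)-\fA(s;\nu)=\frac{U_M}{D_M},
\]
and $\bbE U_M=0$, because $\fA(s;\nu)$ is exactly the tilted mean. Split the expectation according to the lower-tail event $\Omega_s$ and its complement.

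On $\Omega_s^c$, where $D_M>\mathcal M/2$, I bound
\[
\bbE\bigl[\|U_M/D_M\|_2^p\mathbf 1_{\Omega_s^c}\bigr]\leq 2^p\mathcal M^{-p}\,\bbE\|U_M\|_2^p .
\]
The summands of $U_M$ are i.i.d., centered and vector valued. I apply the Marcinkiewicz--Zygmund (or Rosenthal/Burkholder) inequality, coordinatewise or in Hilbert-space form, with $p\geq2$. This gives $\bbE\|U_M\|_2^p\leq C(p)M^{-p/2}\,\bbE\|(Y_1-\fA(s;\nu))e^{\langle s,\Theta Y_1\rangle}\|_2^p$, which is the first term of $\mathcal G_{p,\epsilon}$.

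On $\Omega_s$, I do not use the ratio bound, since $D_M$ can be tiny. Instead I use convexity. The value $\fA(s;\nu^M)$ is a convex combination of the $Y_j$, so
\[
\|\fA(s;\nu^M)-\fA(s;\nu)\|_2\leq \max_j\|Y_j\|_2+\|\fA(s;\nu)\|_2 .
\]
Hölder's inequality with exponents $(p+\epsilon)/p$ and $(p+\epsilon)/\epsilon$ then gives
\[
\bbE\bigl[\cdots\mathbf 1_{\Omega_s}\bigr]\leq\Bigl(\bbE\bigl(\max_j\|Y_j\|_2+\|\fA\|_2\bigr)^{p+\epsilon}\Bigr)^{p/(p+\epsilon)}\,\bbP(\Omega_s)^{\epsilon/(p+\epsilon)}.
\]

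The main obstacle is the maximum. Bounding $\bbE\max_j\|Y_j\|_2^{p+\epsilon}\leq M\,\bbE\|Y_1\|_2^{p+\epsilon}$ costs a factor $M^{p/(p+\epsilon)}$. This must be absorbed by the decay of $\bbP(\Omega_s)$, which is why $q$ is chosen large. By Lemma~\ref{lem:untruncated_concentration},
\[
\bbP(\Omega_s)^{\epsilon/(p+\epsilon)}\lesssim M^{-q\epsilon/(2(p+\epsilon))}\bigl(1+\mathcal M_\nu(q\Theta^\top s)/\mathcal M^q\bigr)^{\epsilon/(p+\epsilon)}.
\]
With $q=(p^2+2p+p\epsilon)/\epsilon$ the exponent is
\[
\frac{q\epsilon}{2(p+\epsilon)}=\frac{p(p+2+\epsilon)}{2(p+\epsilon)}=\frac p2+\frac{p}{p+\epsilon}.
\]
The total power of $M$ is therefore $M^{p/(p+\epsilon)}\cdot M^{-p/2-p/(p+\epsilon)}=M^{-p/2}$, which is exactly the required rate; the choice of $q$ is engineered for this. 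The remaining factor is bounded, via $(a+b)^{p+\epsilon}\leq 2^{p+\epsilon-1}(a^{p+\epsilon}+b^{p+\epsilon})$ and $M\geq1$, by $(\bbE\|Y_1\|_2^{p+\epsilon}+\|\fA(s;\nu)\|_2^{p+\epsilon})^{p/(p+\epsilon)}$ times constants. Summing the two contributions yields $C(p,\epsilon)M^{-p/2}\mathcal G_{p,\epsilon}(\nu,s)$.

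The finiteness hypotheses guarantee that every expectation above is finite. Lemma~\ref{lem:untruncated_concentration} applies because $q\geq2$.
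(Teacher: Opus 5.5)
Your proposal is correct and follows the paper's proof essentially step for step. It uses the same split on $\Omega_s$ and Marcinkiewicz--Zygmund for the centered weighted sum on $\Omega_s^c$; on $\Omega_s$ it uses the same H\"older pairing with exponents $(p+\epsilon)/p$ and $(p+\epsilon)/\epsilon$, the convex-combination bound by $\max_j\|Y_j\|_2$ followed by $\bbE\max_j\|Y_j\|_2^{p+\epsilon}\leq M\,\bbE\|Y_1\|_2^{p+\epsilon}$, and Lemma~\ref{lem:untruncated_concentration}, with $q$ chosen so that the powers of $M$ combine to $M^{-p/2}$. One small point: use the Hilbert-space form of Marcinkiewicz--Zygmund rather than the coordinatewise one, since the latter brings in a factor depending on $d_v$ while the statement asks for $C=C(p,\epsilon)$.
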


\begin{proof}
First write
\begin{equation}
    \fA(s;\nu^M)-\fA(s;\nu)
    =
    \frac{
    \sum_{j=1}^{M}
    \left(Y_j-\fA(s;\nu)\right)
    \exp\left(\langle s,\Theta Y_j\rangle\right)
    }
    {
    \sum_{j=1}^{M}
    \exp\left(\langle s,\Theta Y_j\rangle\right)
    }.
\end{equation}
We split the expectation into the good event $\Omega\setminus\Omega_s$ and the bad event $\Omega_s$:
\begin{equation}
\begin{split}
    \bbE\|\fA(s;\nu^M)-\fA(s;\nu)\|_2^p
    &=
    \bbE\left[
    1_{\Omega\setminus\Omega_s}
    \left\|
    \frac{
    \sum_{j=1}^{M}
    \left(Y_j-\fA(s;\nu)\right)
    \exp\left(\langle s,\Theta Y_j\rangle\right)
    }
    {
    \sum_{j=1}^{M}
    \exp\left(\langle s,\Theta Y_j\rangle\right)
    }
    \right\|_2^p
    \right]\\
    &+
    \bbE\left[
    1_{\Omega_s}
    \left\|
    \frac{
    \sum_{j=1}^{M}
    \left(Y_j-\fA(s;\nu)\right)
    \exp\left(\langle s,\Theta Y_j\rangle\right)
    }
    {
    \sum_{j=1}^{M}
    \exp\left(\langle s,\Theta Y_j\rangle\right)
    }
    \right\|_2^p
    \right].
\end{split}
\end{equation}

\noindent\textit{First term.}
On $\Omega\setminus\Omega_s$,
\begin{equation}
    \frac{1}{M}\sum_{j=1}^{M}\exp\left(\langle s,\Theta Y_j\rangle\right)
    >
    \frac{\mathcal{M}_{\nu}(\Theta^{\top}s)}{2}.
\end{equation}
Therefore,
\begin{equation}
\begin{split}
    &\bbE\left[
    1_{\Omega\setminus\Omega_s}
    \left\|
    \frac{
    \sum_{j=1}^{M}
    \left(Y_j-\fA(s;\nu)\right)
    \exp\left(\langle s,\Theta Y_j\rangle\right)
    }
    {
    \sum_{j=1}^{M}
    \exp\left(\langle s,\Theta Y_j\rangle\right)
    }
    \right\|_2^p
    \right]\\
    &\leq
    \left(
    \frac{2}{\mathcal{M}_{\nu}(\Theta^{\top}s)}
    \right)^p
    \bbE\left\|
    \frac{1}{M}\sum_{j=1}^{M}
    \left(Y_j-\fA(s;\nu)\right)
    \exp\left(\langle s,\Theta Y_j\rangle\right)
    \right\|_2^p.
\end{split}
\end{equation}
The summands are independent and mean zero because
\begin{equation}
\begin{split}
    &\bbE\left[
    \left(Y_j-\fA(s;\nu)\right)
    \exp\left(\langle s,\Theta Y_j\rangle\right)
    \right]\\
    =&
    \int_{\bbR^{d_v}}x\exp\left(\langle s,\Theta x\rangle\right)\nu(dx)-
    \fA(s;\nu)
    \int_{\bbR^{d_v}}\exp\left(\langle s,\Theta x\rangle\right)\nu(dx)
    =
    0.
\end{split}
\end{equation}
Hence the Marcinkiewicz-Zygmund inequality gives a constant $C_1=C_1(p)>0$ such that
\begin{equation}
\begin{split}
    &\bbE\left[
    1_{\Omega\setminus\Omega_s}
    \|\fA(s;\nu^M)-\fA(s;\nu)\|_2^p
    \right]\\
    &\leq
    \left(
    \frac{2}{\mathcal{M}_{\nu}(\Theta^{\top}s)}
    \right)^p
    C_1M^{-p/2}
    \bbE\left\|
    \left(Y_1-\fA(s;\nu)\right)
    \exp\left(\langle s,\Theta Y_1\rangle\right)
    \right\|_2^p.
\end{split}
\end{equation}

\noindent\textit{Second term.}
Using {Hölder's inequality} with exponents $(p+\epsilon)/\epsilon$ and $(p+\epsilon)/p$, we obtain
\begin{equation}
\begin{split}
    &\bbE\left[
    1_{\Omega_s}
    \|\fA(s;\nu^M)-\fA(s;\nu)\|_2^p
    \right]\\
    &\leq
    \bbP(\Omega_s)^{\epsilon/(p+\epsilon)}
    \left(
    \bbE\|\fA(s;\nu^M)-\fA(s;\nu)\|_2^{p+\epsilon}
    \right)^{p/(p+\epsilon)}.
\end{split}
\end{equation}
By Lemma~\ref{lem:untruncated_concentration}, there exists $C_2=C_2(p,\epsilon)>0$ such that
\begin{equation}
    \bbP(\Omega_s)^{\epsilon/(p+\epsilon)}
    \leq
    C_2M^{-q\epsilon/(2(p+\epsilon))}
    \left(
    1+
    \frac{\mathcal{M}_{\nu}(q\Theta^{\top}s)}
    {\mathcal{M}_{\nu}(\Theta^{\top}s)^q}
    \right)^{\epsilon/(p+\epsilon)}.
\end{equation}
It remains to bound the $(p+\epsilon)$-moment. Since $\fA(s;\nu^M)$ is a convex combination of $Y_1,\ldots,Y_M$,
\begin{equation}
    \|\fA(s;\nu^M)\|_2
    \leq
    \max_{j\in[M]}\|Y_j\|_2.
\end{equation}
Thus
\begin{equation}
\begin{split}
    \bbE\|\fA(s;\nu^M)-\fA(s;\nu)\|_2^{p+\epsilon}
    &\leq
    2^{p+\epsilon-1}
    \left(
    \bbE\|\fA(s;\nu^M)\|_2^{p+\epsilon}
    +
    \|\fA(s;\nu)\|_2^{p+\epsilon}
    \right)\\
    &\leq
    2^{p+\epsilon-1}
    \left(
    \bbE\max_{j\in[M]}\|Y_j\|_2^{p+\epsilon}
    +
    \|\fA(s;\nu)\|_2^{p+\epsilon}
    \right)\\
    &\leq
    2^{p+\epsilon-1}
    \left(
    \bbE\sum_{j=1}^{M}\|Y_j\|_2^{p+\epsilon}
    +
    \|\fA(s;\nu)\|_2^{p+\epsilon}
    \right)\\
    &=
    2^{p+\epsilon-1}
    \left(
    M\bbE\|Y_1\|_2^{p+\epsilon}
    +
    \|\fA(s;\nu)\|_2^{p+\epsilon}
    \right).
\end{split}
\end{equation}
Combining the last three displays gives a constant $C_3=C_3(p,\epsilon)>0$ such that
\begin{equation}
\begin{split}
    &\bbE\left[
    1_{\Omega_s}
    \|\fA(s;\nu^M)-\fA(s;\nu)\|_2^p
    \right]\\
    &\leq
    C_3
    \left(
    1+
    \frac{\mathcal{M}_{\nu}(q\Theta^{\top}s)}
    {\mathcal{M}_{\nu}(\Theta^{\top}s)^q}
    \right)^{\epsilon/(p+\epsilon)}
    M^{-q\epsilon/(2(p+\epsilon))}
    \left(
    M\bbE\|Y_1\|_2^{p+\epsilon}
    +
    \|\fA(s;\nu)\|_2^{p+\epsilon}
    \right)^{p/(p+\epsilon)}\\
    &\leq
    C_3
    \left(
    1+
    \frac{\mathcal{M}_{\nu}(q\Theta^{\top}s)}
    {\mathcal{M}_{\nu}(\Theta^{\top}s)^q}
    \right)^{\epsilon/(p+\epsilon)}
    M^{(-q\epsilon+2p)/(2(p+\epsilon))}
    \left(
    \bbE\|Y_1\|_2^{p+\epsilon}
    +
    \|\fA(s;\nu)\|_2^{p+\epsilon}
    \right)^{p/(p+\epsilon)}.
\end{split}
\end{equation}
Note that
\begin{equation}
    \frac{-q\epsilon+2p}{2(p+\epsilon)}=-\frac{p}{2}\Longrightarrow q=\frac{p^2+2p+p\epsilon}{\epsilon}.
\end{equation}
Therefore, both the good-event and bad-event terms are bounded by $C_4M^{-p/2}\mathcal G_{p,\epsilon}(\nu,s)$, where $C_4=C_4(p,\epsilon)>0$.
\end{proof}

\begin{lemma}[MGF Ratio]\label{lem:mgf_ratio_ssg}
Let $X\sim\mu\in\mathcal{SSG}_{a_{\mu}}(\bbR^{d_v})$. Then for every $t\geq 1$ and every $s\in\bbR^{d_u}$,
\begin{equation}
    \frac{\mathcal{M}_{\mu}(t\Theta^{\top}s)}{\mathcal{M}_{\mu}(\Theta^{\top}s)^t}
    \leq \exp\left(\frac{t(t-1)a_{\mu}^2\|\Theta^\top s\|_2^2}{2}\right).
\end{equation}
Moreover, for any $m\in\bbR^{d_v}$ and $\mu=\normal(m, a_{\mu}^2I)$, the equality holds.

\lemusage{Used in Lemma~\ref{lem:bound_G} to control exponential reweighting uniformly in the query.}
\end{lemma}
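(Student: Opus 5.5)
Set $\lambda=\Theta^\top s$ and rewrite the claim in terms of the cumulant generating function. The inequality is equivalent to
\[
    \psi_\mu(t\lambda)-t\psi_\mu(\lambda)\leq \frac{t(t-1)a_\mu^2\|\lambda\|_2^2}{2}.
\]
Only the direction $\lambda$ enters, so we restrict to the line through $\lambda$. Define the scalar function $\phi(r)=\psi_\mu(r\lambda)$ for $r\in\bbR$. Strong sub-Gaussianity gives $\phi\in C^2(\bbR)$ with $\phi(0)=0$ and
\[
    \phi''(r)=\langle\lambda,\nabla^2\psi_\mu(r\lambda)\lambda\rangle\leq a_\mu^2\|\lambda\|_2^2=:\kappa .
\]
The task is now a one-dimensional statement: show $\phi(t)-t\phi(1)\leq \kappa t(t-1)/2$ for $t\geq1$.

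Introduce the auxiliary function $h(r)=\phi(r)-\kappa r^2/2$. It satisfies $h''\leq0$, so $h$ is concave, and $h(0)=0$. For a concave function vanishing at the origin, $r\mapsto h(r)/r$ is nonincreasing on $(0,\infty)$. Hence $h(t)\leq t\,h(1)$ for $t\geq1$, which reads
\[
    \phi(t)-\frac{\kappa t^2}{2}\leq t\phi(1)-\frac{\kappa t}{2}.
\]
Rearranging gives $\phi(t)-t\phi(1)\leq \kappa t(t-1)/2$. Exponentiating yields the stated ratio bound, where $\mathcal M_\mu(t\lambda)=e^{\phi(t)}$ and $\mathcal M_\mu(\lambda)^t=e^{t\phi(1)}$. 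Finiteness of both moment generating functions is automatic, since $\psi_\mu$ is finite on $\bbR^d$ by Definition~\ref{def:strong_sub_gaussian}.

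For the equality case, take $\mu=\normal(m,a_\mu^2I)$. Then $\psi_\mu(\lambda)=\langle m,\lambda\rangle+a_\mu^2\|\lambda\|_2^2/2$, so $\phi$ is exactly quadratic with $\phi''\equiv\kappa$. In this case $h$ is linear, the inequality $h(t)\leq t\,h(1)$ holds with equality, and the linear term $\langle m,\lambda\rangle$ cancels in $\phi(t)-t\phi(1)$. A direct computation confirms that both sides equal $\exp\bigl(t(t-1)a_\mu^2\|\Theta^\top s\|_2^2/2\bigr)$.

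No step is a real obstacle. The only points needing care are the monotonicity of $h(r)/r$, which follows from concavity together with $h(0)=0$ via the chord inequality $h(1)\geq (1-1/t)h(0)+(1/t)h(t)$, and the bookkeeping that keeps the mean term from appearing in the bound.
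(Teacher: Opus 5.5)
Your proof is correct and follows the paper's route: restrict $\psi_\mu$ to the line through $\Theta^\top s$ and use $\phi''\le a_\mu^2\|\Theta^\top s\|_2^2$. You finish with concavity of $\phi(r)-\kappa r^2/2$ and the chord inequality, whereas the paper writes $f(t)-tf(1)=(t-1)\int_0^1 rf''(r)\,dr+\int_1^t(t-r)f''(r)\,dr$ and bounds these two nonnegatively weighted integrals, which is the same estimate made explicit; you also verify the Gaussian equality case directly, which the paper's proof states but does not check.
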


\begin{proof}
Set $u=\Theta^{\top}s$ and $f(r)=\psi_{\mu}(ru)$. Since $\psi_{\mu}(0)=0$ and $\mu\in\mathcal{SSG}_{a_{\mu}}(\bbR^{d_v})$,
\begin{equation}
    f''(r)=\langle u,\nabla^2\psi_{\mu}(ru)u\rangle\leq a_{\mu}^2\|u\|_2^2.
\end{equation}
Taylor's formula with integral remainder gives
\begin{subequations}
\begin{align}
    f(t)&=tf'(0)+\int_0^t(t-r)f''(r)dr,\\
    f(1)&=f'(0)+\int_0^1(1-r)f''(r)dr.
\end{align}
\end{subequations}
Therefore,
\begin{equation}
\begin{aligned}
    f(t)-tf(1)
    &=(t-1)\int_0^1rf''(r)dr+\int_1^t(t-r)f''(r)dr\\
    &\leq (t-1)a_{\mu}^2\|u\|_2^2\int_0^1rdr+a_{\mu}^2\|u\|_2^2\int_1^t(t-r)dr\\
    &=\frac{t(t-1)a_{\mu}^2\|u\|_2^2}{2}.
\end{aligned}
\end{equation}
Exponentiating proves the claim.
\end{proof}

\begin{lemma}[Bound on $\mathcal G$]\label{lem:bound_G}
Let $X\sim\mu\in\mathcal{SSG}_{a_{\mu}}(\bbR^{d_v})$, let $p\geq 2$ and $\epsilon>0$, and define
\begin{equation}
    c(p,\epsilon)=\frac{p}{2(p+\epsilon-1)}+\frac{p(p+\epsilon+2)}{2(p+\epsilon)}\left(\frac{p(p+\epsilon+2)}{\epsilon}-1\right).
\end{equation}
Then there exists $C=C(p,\epsilon)>0$ such that for every $s\in\bbR^{d_u}$,
\begin{equation}
    \mathcal G_{p,\epsilon}(\mu,s)\leq C\exp\left(c(p,\epsilon)a_{\mu}^2\|\Theta^\top s\|_2^2\right)\left(\bbE\|X\|_2^{p+\epsilon}\right)^{p/(p+\epsilon)}.
\end{equation}

\lemusage{Used in Lemma~\ref{lem:urbain_lemma_revised}.}
\end{lemma}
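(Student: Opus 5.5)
We bound the two summands of $\mathcal G_{p,\epsilon}(\mu,s)$ in \eqref{eq:G_attention_specific} separately, with $u=\Theta^\top s$ and $q=p(p+\epsilon+2)/\epsilon$ as in Lemma~\ref{lem:stronger_Urbain_lemma}. Two facts do most of the work. The first is Lemma~\ref{lem:mgf_ratio_ssg}, which converts ratios $\mathcal M_\mu(tu)/\mathcal M_\mu(u)^t$ into $\exp(t(t-1)a_\mu^2\|u\|_2^2/2)$. The second is a bound on $\|\fA(s;\mu)\|_2$ in terms of moments of $X$.

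\emph{Second summand.} Lemma~\ref{lem:mgf_ratio_ssg} with $t=q$ gives
\[
\bigl(1+\mathcal M_\mu(qu)/\mathcal M_\mu(u)^q\bigr)^{\epsilon/(p+\epsilon)}\leq 2\exp\Bigl(\tfrac{\epsilon q(q-1)}{2(p+\epsilon)}a_\mu^2\|u\|_2^2\Bigr).
\]
Since $\epsilon q=p(p+\epsilon+2)$, the exponent coefficient is exactly the second term of $c(p,\epsilon)$.

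For the tilted mean $\fA(s;\mu)=\bbE[Xe^{\langle u,X\rangle}]/\mathcal M_\mu(u)$, apply H\"older with exponents $r=p+\epsilon$ and $r'=r/(r-1)$:
\[
\|\fA(s;\mu)\|_2\leq (\bbE\|X\|_2^{r})^{1/r}\,\mathcal M_\mu(r'u)^{1/r'}/\mathcal M_\mu(u).
\]
Lemma~\ref{lem:mgf_ratio_ssg} with $t=r'$ bounds the last factor by $\exp((r'-1)a_\mu^2\|u\|_2^2/2)$. Raising to the power $p$ gives the factor $\exp\bigl(\tfrac{p}{2(p+\epsilon-1)}a_\mu^2\|u\|_2^2\bigr)$, because $r'-1=1/(p+\epsilon-1)$. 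This is the first term of $c(p,\epsilon)$.

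Hence $(\bbE\|X\|^{r}+\|\fA\|^{r})^{p/r}\leq C\exp\bigl(\tfrac{p}{2(p+\epsilon-1)}a_\mu^2\|u\|^2\bigr)(\bbE\|X\|^r)^{p/r}$. Multiplying the two factors gives the claimed bound for the second summand.

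\emph{First summand.} Expand
\[
\bbE\|(X-\fA)e^{\langle u,X\rangle}\|^p\leq 2^{p-1}\bigl(\bbE\|X\|^pe^{p\langle u,X\rangle}+\|\fA\|^p\mathcal M_\mu(pu)\bigr),
\]
and divide by $\mathcal M_\mu(u)^p$. For the first piece, H\"older with exponents $r/p$ and $r/\epsilon$ gives $(\bbE\|X\|^r)^{p/r}\mathcal M_\mu(pru/\epsilon)^{\epsilon/r}$. Lemma~\ref{lem:mgf_ratio_ssg} then produces the exponent $\tfrac{p}{2}(pr/\epsilon-1)a_\mu^2\|u\|^2$. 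Since $pr/\epsilon\leq q$ and $r\leq p+\epsilon+2$, this is dominated by the second term of $c$.

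For the second piece, Lemma~\ref{lem:mgf_ratio_ssg} bounds $\mathcal M_\mu(pu)/\mathcal M_\mu(u)^p$ by $\exp(p(p-1)a_\mu^2\|u\|^2/2)$. The bound on $\|\fA\|^p$ from above supplies the first term of $c$. It remains to check that $p(p-1)/2$ is at most the second term of $c$. This holds because $\tfrac{p(p+\epsilon+2)}{2(p+\epsilon)}\geq p/2$ and $q-1\geq p-1$.

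\emph{Main obstacle.} The main obstacle is the bookkeeping of exponents, especially the choice of H\"older exponents so that every piece is dominated by $c(p,\epsilon)$ while the moment factor stays $(\bbE\|X\|^{p+\epsilon})^{p/(p+\epsilon)}$. Since $a_\mu^2\|u\|^2\geq0$, any smaller exponent can be absorbed into $\exp(c\,a_\mu^2\|u\|^2)$. Adding the two bounds finishes the proof. Finiteness of all moments used comes from Lemma~\ref{lem:ssg_consequences}.
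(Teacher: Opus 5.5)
Your proposal is correct and follows essentially the paper's proof: the same H\"older bound on the tilted mean with conjugate exponent $r/(r-1)$, Lemma~\ref{lem:mgf_ratio_ssg} at $t=q$ for the second summand, and H\"older with exponents $r/p$ and $r/\epsilon$ for the first summand. The only cosmetic difference is in the first summand. You split $\|X-\fA(s;\mu)\|_2^p$ before applying H\"older, and treat the piece $\|\fA(s;\mu)\|_2^p\mathcal M_\mu(pu)/\mathcal M_\mu(u)^p$ via the MGF ratio at $t=p$. The paper instead applies H\"older to the whole product and then bounds $(\bbE\|X-\fA(s;\mu)\|_2^r)^{p/r}$. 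In both versions the resulting exponents are dominated by $c(p,\epsilon)$.
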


\begin{proof}
Set $u=\Theta^{\top}s$ and
\begin{equation}
\begin{aligned}
    r&=p+\epsilon, & \hat p&=\frac{pr}{\epsilon}, & q&=\frac{p(r+2)}{\epsilon}.
\end{aligned}
\end{equation}
$c(p,\epsilon)=\frac{p}{2(r-1)}+\frac{\epsilon q(q-1)}{2r}$.
The moment $\bbE\|X\|_2^r$ is finite by Lemma~\ref{lem:ssg_consequences}. Constants $C=C(p,\epsilon)>0$ may change from line to line.

By Hölder's inequality and Lemma~\ref{lem:mgf_ratio_ssg},
\begin{equation}
\begin{aligned}
    \|\fA(s;\mu)\|_2
    &\leq \frac{\bbE\left[\|X\|_2\exp\left(\langle u,X\rangle\right)\right]}{\mathcal{M}_{\mu}(u)}\\
    &\leq \left(\bbE\|X\|_2^r\right)^{1/r}\left(\frac{\mathcal{M}_{\mu}\left(\frac{r}{r-1}u\right)}{\mathcal{M}_{\mu}(u)^{r/(r-1)}}\right)^{(r-1)/r}\\
    &\leq \left(\bbE\|X\|_2^r\right)^{1/r}\exp\left(\frac{a_{\mu}^2\|u\|_2^2}{2(r-1)}\right).
\end{aligned}
\end{equation}
Consequently,
\begin{equation}\label{eq:bound_attention_by_moment}
    \|\fA(s;\mu)\|_2^p\leq \left(\bbE\|X\|_2^r\right)^{p/r}\exp\left(\frac{pa_{\mu}^2\|u\|_2^2}{2(r-1)}\right).
\end{equation}

Write
\begin{equation}
    \mathcal G_{p,\epsilon}(\mu,s)=G_1+G_2,
\end{equation}
where
\begin{subequations}
\begin{align}
    G_1&=\frac{\bbE\left[\|X-\fA(s;\mu)\|_2^p\exp\left(p\langle u,X\rangle\right)\right]}{\mathcal{M}_{\mu}(u)^p},\\
    G_2&=\left(1+\frac{\mathcal{M}_{\mu}(qu)}{\mathcal{M}_{\mu}(u)^q}\right)^{\epsilon/r}\left(\bbE\|X\|_2^r+\|\fA(s;\mu)\|_2^r\right)^{p/r}.
\end{align}
\end{subequations}

For $G_1$, Hölder's inequality with exponents $r/p$ and $r/\epsilon$ gives
\begin{equation}
\begin{aligned}
    G_1
    &\leq \left(\bbE\|X-\fA(s;\mu)\|_2^r\right)^{p/r}\frac{\mathcal{M}_{\mu}(\hat pu)^{\epsilon/r}}{\mathcal{M}_{\mu}(u)^p}\\
    &=\left(\bbE\|X-\fA(s;\mu)\|_2^r\right)^{p/r}\left(\frac{\mathcal{M}_{\mu}(\hat pu)}{\mathcal{M}_{\mu}(u)^{\hat p}}\right)^{\epsilon/r},
\end{aligned}
\end{equation}
where $\epsilon\hat p/r=p$. By \eqref{eq:bound_attention_by_moment},
\begin{equation}
\begin{aligned}
    \left(\bbE\|X-\fA(s;\mu)\|_2^r\right)^{p/r}
    &\leq C\left(\left(\bbE\|X\|_2^r\right)^{p/r}+\|\fA(s;\mu)\|_2^p\right)\\
    &\leq C\left(\bbE\|X\|_2^r\right)^{p/r}\exp\left(\frac{pa_{\mu}^2\|u\|_2^2}{2(r-1)}\right).
\end{aligned}
\end{equation}
Moreover, Lemma~\ref{lem:mgf_ratio_ssg} yields
\begin{equation}
    \left(\frac{\mathcal{M}_{\mu}(\hat pu)}{\mathcal{M}_{\mu}(u)^{\hat p}}\right)^{\epsilon/r}\leq \exp\left(\frac{\epsilon\hat p(\hat p-1)a_{\mu}^2\|u\|_2^2}{2r}\right).
\end{equation}
Since $q\geq\hat p>1$,
\begin{equation}
    \frac{p}{2(r-1)}+\frac{\epsilon\hat p(\hat p-1)}{2r}\leq\frac{p}{2(r-1)}+\frac{\epsilon q(q-1)}{2r}=c(p,\epsilon).
\end{equation}
Therefore,
\begin{equation}
    G_1\leq C\left(\bbE\|X\|_2^r\right)^{p/r}\exp\left(c(p,\epsilon)a_{\mu}^2\|u\|_2^2\right).
\end{equation}

For $G_2$, Lemma~\ref{lem:mgf_ratio_ssg} gives
\begin{equation}
\begin{aligned}
    \left(1+\frac{\mathcal{M}_{\mu}(qu)}{\mathcal{M}_{\mu}(u)^q}\right)^{\epsilon/r}
    &\leq \left(1+\exp\left(\frac{q(q-1)a_{\mu}^2\|u\|_2^2}{2}\right)\right)^{\epsilon/r}\\
    &\leq 2^{\epsilon/r}\exp\left(\frac{\epsilon q(q-1)a_{\mu}^2\|u\|_2^2}{2r}\right).
\end{aligned}
\end{equation}
Using \eqref{eq:bound_attention_by_moment} and $p/r<1$,
\begin{equation}
\begin{aligned}
    \left(\bbE\|X\|_2^r+\|\fA(s;\mu)\|_2^r\right)^{p/r}
    &\leq \left(\bbE\|X\|_2^r\right)^{p/r}+\|\fA(s;\mu)\|_2^p\\
    &\leq 2\left(\bbE\|X\|_2^r\right)^{p/r}\exp\left(\frac{pa_{\mu}^2\|u\|_2^2}{2(r-1)}\right).
\end{aligned}
\end{equation}
By the definition of $c(p,\epsilon)$,
\begin{equation}
    \frac{p}{2(r-1)}+\frac{\epsilon q(q-1)}{2r}=c(p,\epsilon).
\end{equation}
Thus,
\begin{equation}
    G_2\leq C\left(\bbE\|X\|_2^r\right)^{p/r}\exp\left(c(p,\epsilon)a_{\mu}^2\|u\|_2^2\right).
\end{equation}
Combining the estimates for $G_1$ and $G_2$ proves the claim.
\end{proof}

\begin{lemma}[Empirical Key Bound]\label{lem:urbain_lemma_revised}
Let $Y_1,\ldots,Y_M$ be i.i.d. samples from $\nu\in\mathcal{SSG}_{a_{\nu}}(\bbR^{d_v})$, and let $\nu^M$ be the empirical key/value measure. Then for every $s\in\bbR^{d_u}$ and every $p\geq 2$, there exists $C=C(p,\nu)>0$ such that
\begin{equation}
    \bbE\|\fA(s;\nu)-\fA(s;\nu^M)\|_2^p
    \leq CM^{-p/2}\exp\left(c(p)a_{\nu}^2\|\Theta^\top s\|_2^2\right),
\end{equation}
where $c(p)$ is defined in \eqref{eq:empirical_key_exponent}.

\lemusage{Used in Theorem~\ref{thm:cross_noncompact} and Proposition~\ref{prop:second_term}.}
\end{lemma}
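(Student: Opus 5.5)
The plan is to combine Lemma~\ref{lem:stronger_Urbain_lemma} with Lemma~\ref{lem:bound_G} and then choose $\epsilon$ so that the resulting exponent becomes $c(p)$ from \eqref{eq:empirical_key_exponent}. The sampled measure is $\nu$ rather than $\mu$, so Lemma~\ref{lem:bound_G} is applied with $\mu=\nu$ and $a_\mu=a_\nu$.

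The first step is to verify the three hypotheses of Lemma~\ref{lem:stronger_Urbain_lemma} for a fixed $s$ and some $\epsilon>0$. Since $\nu\in\mathcal{SSG}_{a_\nu}$, the cumulant generating function is finite on all of $\bbR^{d_v}$. Hence $\mathcal M_\nu(q\Theta^\top s)<\infty$ for every $q$. By Lemma~\ref{lem:ssg_consequences}(4), all polynomial moments are finite, so $\bbE\|Y_1\|_2^{p+\epsilon}<\infty$. For the weighted moment $\bbE\|(Y_1-\fA(s;\nu))e^{\langle s,\Theta Y_1\rangle}\|_2^p$, I would apply Hölder's inequality with exponents $r/p$ and $r/\epsilon$, where $r=p+\epsilon$. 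This bounds it by a finite polynomial moment times $\mathcal M_\nu(\hat p\,\Theta^\top s)^{\epsilon/r}$, which is finite; this is exactly the computation carried out in the proof of Lemma~\ref{lem:bound_G}.

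Once the hypotheses hold, Lemma~\ref{lem:stronger_Urbain_lemma} gives
\[
\bbE\|\fA(s;\nu^M)-\fA(s;\nu)\|_2^p\le C(p,\epsilon)M^{-p/2}\mathcal G_{p,\epsilon}(\nu,s),
\]
and Lemma~\ref{lem:bound_G} bounds $\mathcal G_{p,\epsilon}(\nu,s)$ by
\[
C\exp\bigl(c(p,\epsilon)a_\nu^2\|\Theta^\top s\|_2^2\bigr)\bigl(\bbE\|Y_1\|_2^{p+\epsilon}\bigr)^{p/(p+\epsilon)}.
\]
The moment factor depends only on $p$ and $\nu$, so it is absorbed into $C=C(p,\nu)$.

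The remaining step, and the only real obstacle, is to match $c(p,\epsilon)$ with $c(p)$. Comparing the first summands, $\frac{p}{2(p+\epsilon-1)}$ against $\frac{p}{2(p(p+3)-1)}$, suggests taking $\epsilon=p(p+2)$, so that $p+\epsilon=p(p+3)$. With this choice, $\frac{p(p+\epsilon+2)}{\epsilon}=\frac{p(p^2+3p+2)}{p(p+2)}=p+1$. The second summand of $c(p,\epsilon)$ then equals
\[
\frac{p(p+\epsilon+2)}{2(p+\epsilon)}\cdot p=\frac{p(p+1)(p+2)}{2(p+3)},
\]
which is the second summand of $c(p)$. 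So $c(p,p(p+2))=c(p)$ exactly. If some factor in this algebra turned out not to match, it would still suffice to show $c(p,\epsilon)\le c(p)$ for this $\epsilon$, because increasing the exponent only weakens the bound. Once $\epsilon$ is fixed as a function of $p$, all constants depend only on $p$ and $\nu$, which proves the lemma.
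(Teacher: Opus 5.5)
Your proposal is correct and follows the paper's proof: both apply Lemma~\ref{lem:stronger_Urbain_lemma} and Lemma~\ref{lem:bound_G} with $\epsilon=p(p+2)$, which gives $q=p+1$ and $c(p,\epsilon)=c(p)$ exactly, and both absorb the moment factor $\bigl(\bbE\|Y_1\|_2^{p(p+3)}\bigr)^{1/(p+3)}$ into $C(p,\nu)$. Your explicit H\"older check of the weighted-moment hypothesis only spells out what the paper summarizes as finiteness of all polynomial and linear exponential moments under strong sub-Gaussianity.
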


\begin{proof}
Take $\epsilon=p(p+2)$. Then

\begin{equation}
    p+\epsilon=p(p+3),\qquad
    q=\frac{p(p+\epsilon+2)}{\epsilon}=p+1.
\end{equation}

Since $\nu\in\mathcal{SSG}_{a_{\nu}}(\bbR^{d_v})$, all polynomial and linear exponential moments of $\nu$ are finite. In particular, the assumptions of Lemma~\ref{lem:stronger_Urbain_lemma} are satisfied. Hence there exists $C_1=C_1(p)>0$ such that
\begin{equation}
    \bbE\|\fA(s;\nu)-\fA(s;\nu^M)\|_2^p\leq C_1M^{-p/2}\mathcal G_{p,\epsilon}(\nu,s).
\end{equation}
By Lemma~\ref{lem:bound_G}, there exists $C_2=C_2(p)>0$ such that
\begin{equation}
    \mathcal G_{p,\epsilon}(\nu,s)
    \leq C_2\exp\left(c(p)a_{\nu}^2\|\Theta^\top s\|_2^2\right)
    \left(\bbE\|Y_1\|_2^{p(p+3)}\right)^{1/(p+3)},
\end{equation}
where
\begin{equation}
    \begin{aligned}
        c(p)
        &=\frac{p}{2(p+\epsilon-1)}+\frac{\epsilon q(q-1)}{2(p+\epsilon)}\\
        &=\frac{p}{2(p(p+3)-1)}+\frac{p(p+1)(p+2)}{2(p+3)} = \frac{p^2}{2} + O(1).
    \end{aligned}
\end{equation}
Since $\bbE\|Y_1\|_2^{p(p+3)}<\infty$, its contribution can be absorbed into a constant $C=C(p,\nu)$. Combining the two estimates proves the claim.
\end{proof}

\subsection{Noncompact Self-Attention Estimates}
\label{app:noncompact_self_estimates}

The remaining noncompact estimates handle the feature that is unique to self-attention: each query particle also belongs to its own empirical key/value set. The leave-one-out step below separates the independent empirical key error from the diagonal self-interaction term.

\begin{lemma}[Diagonal Term]\label{lem:remove_XX_term}
Let $X_1,\ldots,X_N\sim\mu$ be i.i.d., let $p\geq 1$, and suppose that $\mu$ satisfies Assumption~\ref{ass:self_noncompact_input}. Define the diagonal weight
\begin{equation}
    D_N=\frac{\exp\left(\langle X_N,\Theta X_N\rangle\right)}{\sum_{n=1}^{N}\exp\left(\langle X_N,\Theta X_n\rangle\right)}.
\end{equation}
Then there exists $C=C(p,\Theta,\mu)>0$ such that for every $N\geq 2$,
\begin{equation}
    \bbE D_N^{2p}\leq CN^{-2p}.
\end{equation}

\lemusage{Used in Proposition~\ref{prop:second_term}. It controls the diagonal term caused by the query point also appearing in its own empirical key set.}
\end{lemma}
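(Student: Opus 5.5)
Writing $X=X_N$, I will bound $D_N$ by dropping all but the off-diagonal terms from the denominator:
\[
D_N\leq \frac{e^{\langle X,\Theta X\rangle}}{\sum_{n=1}^{N-1}e^{\langle X,\Theta X_n\rangle}}.
\]
Conditionally on $X=x$, the summands $e^{\langle x,\Theta X_n\rangle}$ for $n<N$ are i.i.d. with mean $\mathcal M_\mu(\Theta^\top x)$. The plan is to compare the empirical denominator with its mean on a good event. I will not estimate the denominator from below directly. Instead I will use the convexity bound (Jensen / AM--GM applied to the harmonic mean):
\[
\Bigl(\tfrac{1}{N-1}\sum_{n<N}e^{\langle x,\Theta X_n\rangle}\Bigr)^{-1}\leq \tfrac{1}{N-1}\sum_{n<N}e^{-\langle x,\Theta X_n\rangle}.
\]
Raising to the power $2p$ and applying Jensen once more gives
\[
\bbE\bigl[D_N^{2p}\mid X=x\bigr]\leq (N-1)^{-2p}\,e^{2p\langle x,\Theta x\rangle}\,\mathcal M_\mu(-2p\Theta^\top x).
\]
Integrating over $x\sim\mu$ then gives exactly $(N-1)^{-2p}\mathcal D_{p,\Theta}(\mu)$. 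Since $N\geq 2$ implies $(N-1)^{-2p}\leq 2^{2p}N^{-2p}$, the lemma follows with $C=2^{2p}\mathcal D_{p,\Theta}(\mu)$. This is presumably why $\mathcal D_{p,\Theta}$ appears in \eqref{eq:self_diagonal_integrability}.

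The remaining, and main, step is to verify that $\mathcal D_{p,\Theta}(\mu)<\infty$ under Assumption~\ref{ass:self_noncompact_input}. By Lemma~\ref{lem:ssg_consequences}, $\mu\in\mathcal{SG}_{a_\mu}$, so
\[
\mathcal M_\mu(-2p\Theta^\top x)\leq \exp\bigl(-2p\langle\Theta^\top x,m\rangle+2p^2a_\mu^2\|\Theta^\top x\|_2^2\bigr),
\]
where $m=\bbE X$. The integrand is then at most
\[
\exp\bigl(2p\|\Theta\|_2\|x\|_2^2+2p\|\Theta\|_2\|m\|_2\|x\|_2+2p^2a_\mu^2\|\Theta\|_2^2\|x\|_2^2\bigr),
\]
which is at most $\exp(\kappa\|x\|_2^2+b\|x\|_2)$ with $\kappa=2p\|\Theta\|_2+2p^2a_\mu^2\|\Theta\|_2^2$. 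Integrability against $\mu$ follows from the layer-cake formula and the tail bound of Definition~\ref{def:alpha_exp_tail_decay}.

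\textbf{Case 2(a).} Here $\mu(\|x\|_2\geq t)\leq 2e^{-t^{2+\delta}/\tau}$. This tail beats any $e^{\kappa t^2+bt}$, so the integral is finite for every $\Theta$.

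\textbf{Case 2(b).} The tail is $2e^{-t^2/\tau}$, and the second inequality in \eqref{eq:self_small_theta_sufficient} gives $\kappa<1/\tau$. The strict gap absorbs the linear term $bt$, since
\[
\int_0^\infty (2\kappa t+b)\,e^{\kappa t^2+bt-t^2/\tau}\,dt<\infty.
\]

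The only delicate point is this last integrability check, in particular handling the mean term in the quadratic case. A cruder route, using Lemma~\ref{lem:untruncated_concentration} on a bad event, would need higher exponential moments; the harmonic-mean inequality avoids this entirely.
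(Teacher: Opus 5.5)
Your proposal is correct and is essentially the paper's argument: the paper applies Cauchy--Schwarz, $(\sum_n a_n)(\sum_n a_n^{-1})\geq N^2$, to all $N$ terms, so the diagonal $i=N$ term contributes exactly $1$; it then uses the power-mean inequality and conditions on $X_N$ to arrive at $\mathcal D_{p,\Theta}(\mu)$, whereas you drop the diagonal first and absorb the factor via $(N-1)^{-2p}\leq 2^{2p}N^{-2p}$. Your explicit sub-Gaussian MGF bound and layer-cake check of $\mathcal D_{p,\Theta}(\mu)<\infty$, including the strict inequality absorbing the linear mean term, supplies a step that the paper's proof only asserts.
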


\begin{proof}
The Cauchy--Schwarz inequality gives
\begin{equation}
    D_N\leq\frac{1}{N^2}\sum_{i=1}^{N}\exp\left(\langle X_N,\Theta(X_N-X_i)\rangle\right).
\end{equation}
Since $\left(\sum_{i=1}^{N}y_i\right)^{2p}\leq N^{2p-1}\sum_{i=1}^{N}y_i^{2p}$ for $y_i\geq0$,
\begin{equation}
    \bbE D_N^{2p}\leq\frac{1}{N^{2p+1}}\sum_{i=1}^{N}\bbE\exp\left(2p\langle X_N,\Theta(X_N-X_i)\rangle\right).
\end{equation}
The term corresponding to $i=N$ equals one. For $i\neq N$, conditioning on $X_N$ and using independence gives
\begin{equation}
\begin{aligned}
    \bbE\exp\left(2p\langle X_N,\Theta(X_N-X_i)\rangle\right)
    &=\bbE\left[\exp\left(2p\langle X_N,\Theta X_N\rangle\right)\mathcal{M}_{\mu}\left(-2p\Theta^{\top}X_N\right)\right].
\end{aligned}
\end{equation}
The tail condition in Assumption~\ref{ass:self_noncompact_input}, together with the corresponding smallness condition in the $2$-exponential case, implies that
\begin{equation}
    \bbE\left[\exp\left(2p\langle X,\Theta X\rangle\right)\mathcal{M}_{\mu}\left(-2p\Theta^{\top}X\right)\right]<\infty.
\end{equation}
Consequently, there exists $C_1=C_1(p,\Theta,\mu)>0$ such that
\begin{equation}
    \bbE D_N^{2p}\leq\frac{C_1}{N^{2p}}.
\end{equation}
\end{proof}

\begin{proposition}[Noncompact Self Second Term]\label{prop:second_term}
Fix $p\geq 1$ and let $\mu$ satisfy Assumption~\ref{ass:self_noncompact_input}. Let
\begin{equation}
    \mu^N=\frac{1}{N}\sum_{j=1}^{N}\delta_{X_j}
\end{equation}
be the empirical measure of i.i.d. samples $X_1,\ldots,X_N$ from $\mu$. Then there exists $C=C(p,\Theta,\mu)>0$ such that, for every $N\geq 2$,

\begin{equation}\label{eq:self_key_particle_bound}
\begin{aligned}
    &\left(\bbE W_p^p\left(\fA(\cdot;\mu)_{\sharp}\mu^N,\fA(\cdot;\mu^N)_{\sharp}\mu^N\right)\right)^{1/p}\\
    &\leq\left(\frac{1}{N}\sum_{j=1}^{N}\bbE\|\fA(X_j;\mu)-\fA(X_j;\mu^N)\|_2^p\right)^{1/p}\\
    &\leq CN^{-1/2}.
\end{aligned}
\end{equation}
In particular, the same rate holds for the expected Wasserstein distance. The middle expression also controls residual maps: adding $X_j$ to both images leaves their difference unchanged, and applying a value matrix or a Lipschitz post-attention map multiplies the bound by its Lipschitz constant.

\end{proposition}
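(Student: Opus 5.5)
The first inequality in \eqref{eq:self_key_particle_bound} is just the common-particle coupling. Pair the atom $\fA(X_j;\mu)$ with $\fA(X_j;\mu^N)$ for each $j$. This gives
\[
W_p^p\bigl(\fA(\cdot;\mu)_\sharp\mu^N,\fA(\cdot;\mu^N)_\sharp\mu^N\bigr)\le \frac1N\sum_{j=1}^N\|\fA(X_j;\mu)-\fA(X_j;\mu^N)\|_2^p .
\]
We then take expectations and $p$th roots. By exchangeability it suffices to bound $\bbE\|\fA(X_N;\mu)-\fA(X_N;\mu^N)\|_2^p$ by $CN^{-p/2}$. The remaining claims (expected $W_p$ via Lyapunov, and residual or Lipschitz post-maps) are immediate from this middle expression.

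The difficulty is that the query $X_N$ also appears among the keys, so Lemma~\ref{lem:urbain_lemma_revised} does not apply directly. We therefore use a leave-one-out decomposition. Let $\mu^{N,-}=\frac{1}{N-1}\sum_{n<N}\delta_{X_n}$, which is independent of $X_N$. Write $\fA(X_N;\mu^N)$ as the convex combination
\[
\fA(X_N;\mu^N)=(1-D_N)\,\fA(X_N;\mu^{N,-})+D_N X_N,
\]
with $D_N$ the diagonal weight of Lemma~\ref{lem:remove_XX_term}. Consequently
\[
\fA(X_N;\mu^N)-\fA(X_N;\mu)=(1-D_N)\bigl(\fA(X_N;\mu^{N,-})-\fA(X_N;\mu)\bigr)+D_N\bigl(X_N-\fA(X_N;\mu)\bigr).
\]
We bound the $L^p$ norms of the two terms separately.

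\textbf{First term.} Since $0\le 1-D_N\le1$, the factor $1-D_N$ can be dropped. Condition on $X_N=s$; the remaining $N-1$ keys are i.i.d.\ with law $\mu$. Lemma~\ref{lem:urbain_lemma_revised} with exponent $r_p=\max\{p,2\}$ gives
\[
\bbE\|\cdot\|_2^{r_p}\le C(N-1)^{-r_p/2}e^{c(r_p)a_\mu^2\|\Theta^\top s\|_2^2}.
\]
Lyapunov's inequality lowers the exponent to $p$. Integrating over $s\sim\mu$ then gives a bound $CN^{-p/2}\mathcal I_{p,\Theta}(\mu,\mu)$, which is finite by \eqref{eq:self_leave_one_out_integrability}.

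\textbf{Second term.} By Cauchy--Schwarz,
\[
\bbE\bigl[D_N^p\|X_N-\fA(X_N;\mu)\|_2^p\bigr]\le (\bbE D_N^{2p})^{1/2}\,(\bbE\|X_N-\fA(X_N;\mu)\|_2^{2p})^{1/2}.
\]
Lemma~\ref{lem:remove_XX_term} gives $(\bbE D_N^{2p})^{1/2}\le CN^{-p}$. For the second factor, Proposition~\ref{prop:query_lipschitz} shows that $\fA(\cdot;\mu)$ is globally Lipschitz, so $\|\fA(x;\mu)\|_2\le C(1+\|x\|_2)$. The polynomial moments of $\mu$ are finite by Lemma~\ref{lem:ssg_consequences}, so this factor is finite. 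Hence the second term is $O(N^{-p})\le O(N^{-p/2})$.

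Combining the two terms with Minkowski's inequality yields the required $CN^{-1/2}$ bound. We expect the main obstacle to be the first term, specifically making the fixed-query constant grow only like $\exp(c\|\Theta^\top s\|^2)$ so that it is integrable against $\mu$. That step is exactly what Lemma~\ref{lem:urbain_lemma_revised} and the integrability quantity were designed to supply. The switch from $N$ to $N-1$ only costs the constant $\sqrt2$ since $N\ge2$.
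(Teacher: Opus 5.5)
Your proposal is correct and follows the paper's proof essentially step for step. It uses the common-particle coupling with exchangeability, the leave-one-out identity $\fA(X_N;\mu^N)=(1-D_N)\fA(X_N;\mu^{N,-})+D_NX_N$, Lemma~\ref{lem:urbain_lemma_revised} at exponent $r_p$ with conditional Lyapunov integrated against $\mathcal I_{p,\Theta}(\mu,\mu)$, Cauchy--Schwarz with Lemma~\ref{lem:remove_XX_term} and finite polynomial moments for the diagonal term, and Minkowski to combine them, differing from the paper only in notation ($\mu^{N,-}$ for $\mu_{-N}^{N-1}$).
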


\begin{proof}
For each $j\in[N]$, define the leave-one-out empirical measure
\begin{equation}
    \mu_{-j}^{N-1}=\frac{1}{N-1}\sum_{\substack{k=1\\k\neq j}}^{N}\delta_{X_k}.
\end{equation}
The empirical coupling that pairs the two images of each $X_j$ gives
\begin{equation}
    W_p\left(\fA(\cdot;\mu)_{\sharp}\mu^N,\fA(\cdot;\mu^N)_{\sharp}\mu^N\right)^p
    \leq\frac{1}{N}\sum_{j=1}^{N}\|\fA(X_j;\mu)-\fA(X_j;\mu^N)\|_2^p.
\end{equation}
By exchangeability, it suffices to estimate the error at $X_N$. Define
\begin{equation}
\begin{aligned}
    A_N&=\frac{1}{N}\sum_{j=1}^{N}\bbE\|\fA(X_j;\mu)-\fA(X_j;\mu_{-j}^{N-1})\|_2^p\\
    &=\bbE\|\fA(X_N;\mu)-\fA(X_N;\mu_{-N}^{N-1})\|_2^p.
\end{aligned}
\end{equation}

We first estimate $A_N$. Set $r_p=\max\{p,2\}$. Since $X_j$ is independent of $\mu_{-j}^{N-1}$ and $\mu_{-j}^{N-1}$ is the empirical measure of $N-1$ i.i.d. samples from $\mu$, Lemma~\ref{lem:urbain_lemma_revised}, applied conditionally on $X_j$, gives
\begin{equation}
    \bbE\left[\|\fA(X_j;\mu)-\fA(X_j;\mu_{-j}^{N-1})\|_2^{r_p}\mid X_j\right]
    \leq C_0(N-1)^{-r_p/2}\exp\left(c(r_p)a_{\mu}^2\|\Theta^{\top}X_j\|_2^2\right),
\end{equation}
where $C_0=C_0(r_p,\mu)>0$. Since $p\leq r_p$, conditional Lyapunov's inequality yields
\begin{equation}
\begin{aligned}
    &\bbE\left[\|\fA(X_j;\mu)-\fA(X_j;\mu_{-j}^{N-1})\|_2^p\mid X_j\right]\\
    &\leq\left(\bbE\left[\|\fA(X_j;\mu)-\fA(X_j;\mu_{-j}^{N-1})\|_2^{r_p}\mid X_j\right]\right)^{p/r_p}\\
    &\leq C_0^{p/r_p}(N-1)^{-p/2}
    \exp\left(\frac{p}{r_p}c(r_p)a_{\mu}^2\|\Theta^{\top}X_j\|_2^2\right).
\end{aligned}
\end{equation}
Taking expectations and using exchangeability, we obtain
\begin{equation}
    A_N
    \leq C_0^{p/r_p}(N-1)^{-p/2}\mathcal I_{p,\Theta}(\mu,\mu)
    \leq C_1(N-1)^{-p/2},
\end{equation}
where $C_1=C_1(p,\Theta,\mu)>0$. The finiteness of $\mathcal I_{p,\Theta}(\mu,\mu)$ follows from \eqref{eq:self_leave_one_out_integrability}.

Recall the diagonal weight from Lemma~\ref{lem:remove_XX_term},
\begin{equation}
    D_N=\frac{\exp\left(\langle X_N,\Theta X_N\rangle\right)}{\sum_{n=1}^{N}\exp\left(\langle X_N,\Theta X_n\rangle\right)}\in(0,1].
\end{equation}
The definition of attention gives the identity
\begin{equation}
\begin{aligned}
    \fA(X_N;\mu^N)-\fA(X_N;\mu)
    &=(1-D_N)\left(\fA(X_N;\mu_{-N}^{N-1})-\fA(X_N;\mu)\right)\\
    &\quad+D_N\left(X_N-\fA(X_N;\mu)\right).
\end{aligned}
\end{equation}
By Proposition~\ref{prop:query_lipschitz} and $\fA(0;\mu)=\bbE X_1$,
\begin{equation}
    \|X_N-\fA(X_N;\mu)\|_2
    \leq\left(1+a_{\mu}^2\|\Theta\|_2\right)\|X_N\|_2+\|\bbE X_1\|_2.
\end{equation}
All polynomial moments of $\mu$ are finite by Lemma~\ref{lem:ssg_consequences}. Hence Cauchy--Schwarz and Lemma~\ref{lem:remove_XX_term} give
\begin{equation}
\begin{aligned}
    \bbE\left[D_N^p\|X_N-\fA(X_N;\mu)\|_2^p\right]
    &\leq\left(\bbE D_N^{2p}\right)^{1/2}
    \left(\bbE\|X_N-\fA(X_N;\mu)\|_2^{2p}\right)^{1/2}\\
    &\leq C_2N^{-p},
\end{aligned}
\end{equation}
where $C_2=C_2(p,\Theta,\mu)>0$.

Since $0\leq1-D_N\leq1$, Minkowski's inequality gives
\begin{equation}\label{eq:first_bound_EWp_prop2_revised}
\begin{aligned}
    &\left(\bbE W_p^p\left(\fA(\cdot;\mu)_{\sharp}\mu^N,\fA(\cdot;\mu^N)_{\sharp}\mu^N\right)\right)^{1/p}\\
    &\leq\left(\frac{1}{N}\sum_{j=1}^{N}\bbE\|\fA(X_j;\mu)-\fA(X_j;\mu^N)\|_2^p\right)^{1/p}\\
    &\leq A_N^{1/p}+\frac{C_2^{1/p}}{N}\\
    &\leq C_1^{1/p}(N-1)^{-1/2}+\frac{C_2^{1/p}}{N}\\
    &\leq CN^{-1/2},
\end{aligned}
\end{equation}

where we used $(N-1)^{-1/2}\leq\sqrt{2}N^{-1/2}$ for $N\geq2$, and one may take
\begin{equation}
    C=\sqrt{2}C_1^{1/p}+C_2^{1/p}.
\end{equation}
\end{proof}

\section{Experimental Details and Additional Results}
\label{app:experimental_details}

This appendix specifies the experiments in Section~\ref{sec:experiments}.
Subsections~\ref{app:experimental_generators} and
\ref{app:experimental_training} describe the random input and target laws,
network architectures, and training procedure.
Subsections~\ref{app:experimental_metrics} and
\ref{app:experimental_rate_fits} define the numerical distances, sampling
protocol, and rate estimators. Subsection~\ref{app:experimental_theory_scope}
explains how the chosen input families relate to the theorem assumptions,
and Subsections~\ref{app:experimental_sinkhorn_ablation} and
\ref{app:experimental_sampling_repeats} compare Sinkhorn settings and
sampling counts.

\subsection{Input Families and Target Measures}
\label{app:experimental_generators}

The families below are used for both training and testing the operators in
Section~\ref{sec:experiments}, with identical parameter distributions and
ranges. At every training step, each ensemble in the batch receives newly
drawn input-measure parameters. For testing, three new input measures per
model are drawn with seeds $s_\mu\in\{0,1,2\}$ and then held fixed across
particle counts and sampling replicates. Thus testing uses new laws from
the training family, as well as new particles. The true output law is
determined analytically for each chosen input law. Input and target
particles are sampled independently conditional on these law parameters;
the target law is not estimated from the input particles.

\paragraph{Torus doubling.}
We identify $\mathbb T^d$ with $[0,1)^d$ and draw product truncated Gaussian
densities
\begin{equation}
    \rho_{m,s}(x)=\prod_{j=1}^d
    \frac{\exp(-(x_j-m_j)^2/(2s_j^2))}
    {\sqrt{2\pi}s_j[\Phi((1-m_j)/s_j)-\Phi(-m_j/s_j)]},
    \qquad x\in[0,1)^d,
\end{equation}
where $\Phi$ is the standard normal distribution function.
The coordinates $m_j$ are uniform on $[0.2,0.8]$, and $\log s_j$ are uniform
on $[\log 0.04,\log 0.15]$, independently. The target is obtained by applying
$x\mapsto 2x\pmod 1$ to an independent sample from this density.

\paragraph{Gaussian and Gaussian-mixture operators.}
For $\normal(m,\Sigma)$, the mean coordinates are uniform on $[-1,1]$.
We write $\Sigma=O\operatorname{diag}(\lambda_1,\ldots,\lambda_d)O^\top$,
where the $\log\lambda_j$ are uniform on $[\log 0.5,\log 2]$ and $O$ is a
random orthogonal basis obtained by QR factorization of an independent
matrix of standard Gaussian entries (with $O=1$ in dimension one).
The rescaling and covariance-inversion targets are
$\normal(m/2,\Sigma/2)$ and $\normal(m,\Sigma^{-1})$, respectively.
For mixtures, the number of components is uniform on $\{1,2,3\}$, their
weights have a $\operatorname{Dirichlet}(1,\ldots,1)$ law, and component
parameters are drawn independently as above. The two operators act on each
component and preserve its weight.

\paragraph{Tilting.}
For each dimension, one base source--target pair defines the fixed
tilting potential. The actual input and output laws vary through the
common perturbation described below. In the Gaussian case, let
\begin{equation}
    V_{m,\Sigma}(x)=\tfrac12(x-m)^\top\Sigma^{-1}(x-m),\qquad
    u(x)=V_{m_1,\Sigma_1}(x)-V_{m_0,\Sigma_0}(x).
\end{equation}
Then the tilting operator in \eqref{eq:experimental_tilting} sends
$\normal(m_0,\Sigma_0)$ to $\normal(m_1,\Sigma_1)$, with normalizing constant
$Z=(\det\Sigma_1/\det\Sigma_0)^{1/2}$. Both parameter pairs are drawn from
the Gaussian family above.

The faster-tailed example uses the cubic generalized-normal density
\begin{equation}\label{eq:experimental_cubic_density}
    \rho_{m,C}(x)=\frac{1}{|\det C|}
       \left(\frac{3}{2\Gamma(1/3)}\right)^d
       \exp\bigl(-\|C^{-1}(x-m)\|_3^3\bigr).
\end{equation}
Here $m_j$ is uniform on $[-1,1]$ and
$C=O\operatorname{diag}(s_1,\ldots,s_d)$, with independent
$\log s_j$ uniform on $[\log 0.5,\log 2]$ and an independent random
orthogonal basis $O$. With
$V_{m,C}(x)=\|C^{-1}(x-m)\|_3^3$ and
$u=V_{m_1,C_1}-V_{m_0,C_0}$, the tilted source is exactly
$\rho_{m_1,C_1}$ and $Z=|\det C_1|/|\det C_0|$.
Sampling is direct: $X=m+C\xi$, where the coordinates of $\xi$ have independent
uniform signs and $|\xi_j|^3\sim\operatorname{Gamma}(1/3,1)$.

\paragraph{Seeds defining the tilting operators.}
For $d\in\{1,2,3,5,10\}$, the Gaussian base pair is generated with seed
$s_u=8001+100d$ and the cubic base pair with seed $s_u=7001+100d$.
Using the respective parameter generator above, we initialize a separate
PyTorch CPU random stream with $s_u$ and draw a batch of two parameter
sets: the first defines $V_s$ and the second defines $V_t$.
Within this draw, means are generated first, then log eigenvalues or
log scales, followed for $d>1$ by the Gaussian matrices
used for QR factorization. The resulting $u=V_t-V_s$ is fixed throughout
training and testing. These operator seeds are separate from the training,
test-measure, and particle-sampling seeds. The saved configurations retain
the generated base parameters at full precision.

\paragraph{Random input laws for a fixed tilt.}
\label{app:experimental_random_tilting}
To train an operator over a family of input measures while retaining the
same $u=V_t-V_s$, introduce a common random potential
\[
    h_{a,b}(x)=\tfrac a2\|x\|_2^2+b^\top x,\qquad
    a\sim\operatorname{Unif}[0,0.5],\quad
    b_j\sim\operatorname{Unif}[-0.5,0.5],
\]
with independent coefficients. The input and target densities are
\[
    \rho_{s,a,b}(x)\propto e^{-V_s(x)-h_{a,b}(x)},\qquad
    \rho_{t,a,b}(x)\propto e^{-V_t(x)-h_{a,b}(x)}.
\]
Their potential difference is always $u$. Each ensemble in a training
batch receives fresh $(a,b)$, then independent source and target particles;
test laws use separately drawn coefficients. For a Gaussian base,
completing the square gives covariance $(\Sigma^{-1}+aI)^{-1}$ and mean
$(\Sigma^{-1}+aI)^{-1}(\Sigma^{-1}m-b)$, so both vary across input laws.
For the cubic base, $C$ has orthogonal columns. In coordinates $x=m+Cz$,
the density factorizes into terms proportional to
\[
    \exp(-|z_j|^3-A_jz_j^2/2-B_jz_j),\quad
    A_j=a(C^\top C)_{jj},\quad B=C^\top(am+b).
\]
Independent rejection sampling uses the global bound
$|z|^3\geq cz^2-4c^3/27$, $c>0$, with a Gaussian proposal of precision
$A_j+2c$ and mean $-B_j/(A_j+2c)$. This supplies {i.i.d.} samples from the
specified law. All ten tilting models are trained on this random-input family.

\subsection{Architecture and Training}
\label{app:experimental_training}

All models consist of an affine input map, a stack of post-norm self-attention
blocks, and a pointwise output map. The blocks use {multihead} attention,
residual connections, GELU feed-forward layers of width four times the model
width, and {layer normalization with $\epsilon=10^{-5}$}. We use no positional
encoding or dropout. Euclidean inputs and outputs are represented directly
in $\bbR^d$. On the torus, the input representation is
\begin{equation}\label{eq:experimental_torus_embedding}
    \phi(x)=(\sin(2\pi x_1),\ldots,\sin(2\pi x_d),
             \cos(2\pi x_1),\ldots,\cos(2\pi x_d))\in\bbR^{2d}.
\end{equation}
The output layer produces a pair of coordinates for each circle, which is
converted to an angle in $[0,1)$. Table~\ref{tab:experimental_architecture}
gives the network sizes and training budgets for $d=1,2,3$;
Table~\ref{tab:experimental_h100_architecture} gives those for $d=5,10$.
Each operator and dimension has its own trained model, whose weights are then fixed across all $N$.

\begin{table}[htbp]
\centering
\caption{Architecture and training parameters for $d=1,2,3$. The Euclidean rows apply to
all six Euclidean operators. Batch size counts independent particle ensembles.}
\label{tab:experimental_architecture}
\begin{tabular}{lrrrrrr}
\hline
Family & $d$ & Width & Blocks & Heads & Steps & Batch size\\
\hline
Torus & 1 & 256 & 6 & 8 & 10,000 & 8\\
Torus & 2, 3 & 384 & 8 & 8 & 20,000 & 4\\
Euclidean & 1 & 192 & 6 & 6 & 15,000 & 8\\
Euclidean & 2 & 256 & 8 & 8 & 25,000 & 6\\
Euclidean & 3 & 384 & 10 & 12 & 35,000 & 4\\
\hline
\end{tabular}
\end{table}

\begin{table}[htbp]
\centering
\caption{Architecture and training parameters for the H100 experiments.
Every paired entry lists $d=5$ / $d=10$. All models use eight attention
heads, batch size four, and feed-forward width four times the model width.}
\label{tab:experimental_h100_architecture}
\small
\setlength{\tabcolsep}{4pt}
\begin{tabular}{lrrrr}
\toprule
Operator & Width & Blocks & Steps & Peak learning rate\\
\midrule
Torus doubling & 384 / 512 & 8 / 8 & 30,000 / 50,000 & $2\!\times\!10^{-4}$ / $1.5\!\times\!10^{-4}$\\
Gaussian rescaling & 512 / 512 & 10 / 12 & 45,000 / 65,000 & $1.5\!\times\!10^{-4}$ / $10^{-4}$\\
Gaussian inversion & 512 / 512 & 10 / 12 & 50,000 / 80,000 & $10^{-4}$ / $7.5\!\times\!10^{-5}$\\
Mixture rescaling & 512 / 512 & 10 / 12 & 55,000 / 85,000 & $10^{-4}$ / $7.5\!\times\!10^{-5}$\\
Mixture inversion & 512 / 512 & 10 / 12 & 70,000 / 110,000 & $10^{-4}$ / $5\!\times\!10^{-5}$\\
Gaussian tilting & 512 / 512 & 10 / 12 & 50,000 / 80,000 & $5\!\times\!10^{-5}$ / $3\!\times\!10^{-5}$\\
Cubic tilting & 512 / 512 & 10 / 12 & 50,000 / 80,000 & $5\!\times\!10^{-5}$ / $3\!\times\!10^{-5}$\\
\bottomrule
\end{tabular}
\end{table}

At every optimization step, we draw $B$ independent input laws
$\mu_1,\ldots,\mu_B$ from the appropriate family in
Appendix~\ref{app:experimental_generators}, where $B$ is the batch size in
Tables~\ref{tab:experimental_architecture} and
\ref{tab:experimental_h100_architecture}. For each law, we draw $n$ input
particles and $n$ independent target particles from its analytically
specified image under $\sG^\dagger$. The transformer processes each input
ensemble separately. For predicted particles $z_1,\ldots,z_n$ and target
particles $y_1,\ldots,y_n$, the per-ensemble loss is the empirical energy
distance
\citep{szekely2013energy}
\begin{equation}\label{eq:experimental_training_loss}
    \widehat{\mathcal E}(z,y)
    =\frac{2}{n^2}\sum_{i,j}\|z_i-y_j\|_2
     -\frac{1}{n^2}\sum_{i,j}\|z_i-z_j\|_2
     -\frac{1}{n^2}\sum_{i,j}\|y_i-y_j\|_2.
\end{equation}
The optimization loss is $B^{-1}\sum_{b=1}^B\widehat{\mathcal E}(z^{(b)},y^{(b)})$.
For torus data, $z_i$ and $y_j$ in this formula are replaced by their
embeddings under $\phi$. A common particle count for the batch is sampled
uniformly from
$\{128,256\}$ during the first $60\%$ of training, from
$\{128,256,512\}$ during the next $20\%$, and from $\{512,1024\}$ during
the final $20\%$ for $d\leq5$. For $d=10$, the same three particle-count
sets occupy the first $30\%$, next $30\%$, and final $40\%$ of training.
Thus the largest evaluation ensembles are substantially
larger than those used for training.

We use AdamW with weight decay $10^{-4}$. For $d=1,2,3$, the peak learning rate is
$3\times10^{-4}$ for torus doubling and $2\times10^{-4}$ for the Euclidean
problems, except that both three-dimensional tilting models use
$5\times10^{-5}$. A linear warm-up occupies $5\%$ of training, followed by
cosine decay to one tenth of the peak rate; gradients are clipped at norm
$1$. The cubic tilt in $d\in\{3,5,10\}$ instead uses $10\%$ warm-up and
clipping threshold $0.25$. The higher-dimensional peak rates are listed in
Table~\ref{tab:experimental_h100_architecture}; the other optimizer settings
are unchanged. Training uses single-precision parameters and
particles with native attention and no automatic mixed precision.

All 35 checkpoints use training seed $7$. This seed controls parameter
initialization, training measure and particle draws, and curriculum draws.
The final checkpoint at the stated training budget is retained per
operator and dimension; training randomness is not replicated in the
reported uncertainty. The Gaussian-tilting models in $d\in\{5,10\}$ use query--key
normalization: within each head, $q$ and $k$ are divided by
$\max\{\|q\|_2,10^{-6}\}$ and $\max\{\|k\|_2,10^{-6}\}$, respectively,
and the attention score is $g_h\langle\widehat q,\widehat k\rangle$.
The learned head gain is $g_h=8\operatorname{sigmoid}(a_h)$, initialized
at $g_h=4$. The remaining models use ordinary scaled dot-product attention.
Evaluation uses full attention over each ensemble,
including the $500{,}000$-particle reference, in single precision without
autocast or TF32; particles are not split into separate attention groups.

\paragraph{Hardware, software, and measured runtimes.}
The $d=1,2,3$ models were trained and evaluated on an NVIDIA GeForce
RTX 4080 SUPER. Their saved evaluation records specify Python 3.12.2,
PyTorch 2.4.0, CUDA 12.4, GeomLoss 0.3.1, and PyKeOps 2.3.
All fourteen $d=5,10$ models were trained and evaluated on NVIDIA H100
80GB HBM3 GPUs, with one GPU per model task. Their saved runtime records
specify Python 3.12.2, PyTorch 2.5.1, CUDA 12.1, GeomLoss 0.3.1, and
PyKeOps 2.3. These runs retain FP32 parameters, particles, and native
attention, with autocast and TF32 disabled.

Tables~\ref{tab:experimental_rtx_runtime} and
\ref{tab:experimental_h100_runtime} report RTX 4080 SUPER and H100
training and model-forward times. Training time is the elapsed time from
entry into the optimization loop to its final progress record; it includes
data generation, logging, and any intermediate validation and
recovery-checkpoint writes, but excludes model construction and final
model export. All fourteen H100 training attempts started without resuming.
For the fifteen older RTX models without individual training timers, we
measured independent full-budget training reruns on the same GPU type,
using the original architecture, hyperparameters, and seed $7$. These
reruns measure the training cost of the configurations; their separately
saved checkpoints do not replace the originals used for any numerical
result or forward timing. The reruns used Python 3.12.2,
PyTorch 2.4.0, and CUDA 12.4. The six RTX tilting models retain
their original optimization-loop timers. Each training duration represents
one run.

Forward times are the saved model-forward durations with CUDA
synchronization at completion. We report the median of fifteen calls at
$N=65{,}536$ (five realizations for each of the three input laws) and of
three reference calls at $M=500{,}000$ per model. These times exclude
the subsequent distance calculations. These measurements describe
the stated runs; the different model sizes and training budgets do not
constitute a comparison between GPU types.

\begin{table}[t]
\centering
\caption{RTX 4080 SUPER training and inference times for $d=1,2,3$.
Training times are minutes; median forward times are seconds. For the first
five operators, training times are measured in independent full-budget
reruns with the original configurations and seed $7$; the two tilting
operators retain their original training timers. All inference times use
the original evaluated checkpoints. References are computed once per input
law and reused across all sampling replicates.}
\label{tab:experimental_rtx_runtime}
\small
\setlength{\tabcolsep}{3pt}
\begin{tabular}{lrrrrrrrrr}
\toprule
& \multicolumn{3}{c}{Training (min)}
& \multicolumn{3}{c}{$N=65{,}536$ (s)}
& \multicolumn{3}{c}{$M=500{,}000$ (s)}\\
\cmidrule(lr){2-4}\cmidrule(lr){5-7}\cmidrule(lr){8-10}
Operator & $d=1$ & $d=2$ & $d=3$ & $d=1$ & $d=2$ & $d=3$ & $d=1$ & $d=2$ & $d=3$\\
\midrule
Torus doubling & 2.6 & 6.0 & 6.0 & 2.95 & 4.89 & 4.89 & 170.1 & 283.9 & 283.3\\
Gaussian rescaling & 2.9 & 6.3 & 12.7 & 2.55 & 3.92 & 7.36 & 127.6 & 227.1 & 426.0\\
Gaussian inversion & 2.9 & 6.5 & 12.8 & 2.56 & 3.92 & 7.37 & 127.3 & 227.2 & 425.9\\
Mixture rescaling & 3.2 & 7.0 & 13.5 & 2.53 & 3.92 & 7.36 & 127.5 & 227.1 & 425.4\\
Mixture inversion & 3.2 & 7.0 & 13.6 & 2.53 & 3.91 & 7.36 & 127.4 & 226.4 & 425.3\\
Gaussian tilting & 3.0 & 6.5 & 13.1 & 2.55 & 3.92 & 7.36 & 127.4 & 226.8 & 425.3\\
Cubic tilting & 4.9 & 9.8 & 17.4 & 2.49 & 3.92 & 7.36 & 127.5 & 226.7 & 425.7\\
\bottomrule
\end{tabular}
\end{table}

\begin{table}[htbp]
\centering
\caption{Measured H100 training and inference times. Each pair of columns
lists $d=5$ and $d=10$. Training times are minutes; median forward times
are seconds. References are computed once per input law and reused across
its five sampling replicates.}
\label{tab:experimental_h100_runtime}
\small
\setlength{\tabcolsep}{4pt}
\begin{tabular}{lrrrrrr}
\toprule
& \multicolumn{2}{c}{Training (min)}
& \multicolumn{2}{c}{$N=65{,}536$ (s)}
& \multicolumn{2}{c}{$M=500{,}000$ (s)}\\
Operator & $d=5$ & $d=10$ & $d=5$ & $d=10$ & $d=5$ & $d=10$\\
\midrule
Torus doubling & 8.7 & 16.7 & 2.08 & 2.13 & 127.3 & 128.8\\
Gaussian rescaling & 15.8 & 29.3 & 2.66 & 3.20 & 161.8 & 194.8\\
Gaussian inversion & 17.2 & 36.6 & 2.67 & 3.21 & 162.3 & 193.9\\
Mixture rescaling & 19.5 & 39.6 & 2.65 & 3.19 & 160.0 & 192.3\\
Mixture inversion & 26.2 & 53.3 & 2.68 & 3.17 & 162.6 & 191.4\\
Gaussian tilting & 20.0 & 39.9 & 2.65 & 3.21 & 159.4 & 192.4\\
Cubic tilting & 23.4 & 47.1 & 2.67 & 3.21 & 160.7 & 193.5\\
\bottomrule
\end{tabular}
\end{table}

\subsection{Numerical Distances}
\label{app:experimental_metrics}

\paragraph{One-dimensional Wasserstein distance.}
For measures $\alpha,\beta$ on $\bbR$ with distribution functions
$F_\alpha,F_\beta$, the quantile representation gives
\citep{manole2022minimax}
\begin{equation}\label{eq:experimental_quantile_w2}
    W_2^2(\alpha,\beta)=\int_0^1
       |F_\alpha^{-1}(u)-F_\beta^{-1}(u)|^2\,du.
\end{equation}
This reduces the transport problem to sorting. The Euclidean results use
a fixed quadrature grid $u_j=(j-1/2)/J$, $J=100{,}000$, and
\begin{equation}\label{eq:experimental_quantile_approximation}
    \widehat W_{2,J}(\alpha_n,\beta_m)
    =\left[\frac1J\sum_{j=1}^J
       |\widetilde F_{\alpha_n}^{-1}(u_j)
        -\widetilde F_{\beta_m}^{-1}(u_j)|^2\right]^{1/2}.
\end{equation}
The tildes denote linear interpolation between the sorted observations:
if $t=(n-1)u$, $k=\lfloor t\rfloor$ and $a=t-k$, then
$\widetilde F_{\alpha_n}^{-1}(u)=(1-a)x_{(k+1)}+a x_{(k+2)}$,
with the endpoint value defined by continuity.
All particles enter the order statistics, while the common grid avoids a
pairwise transport calculation for unequal ensemble sizes. This is a
numerical approximation to the empirical quantile integral, rather than its
exact step-function integration.

For $\mathbb T^1$, linear quantile matching would depend on the choice of
origin. We instead minimize over the relative shift of the lifted quantiles,
\begin{equation}\label{eq:experimental_circle_w2}
    W_{2,\mathbb T}^2(\alpha,\beta)
    =\inf_{a\in\bbR}\int_0^1
        |F_\alpha^{-1}(u)-F_\beta^{-1}(u-a)|^2\,du,
\end{equation}
where $F^{-1}(u+1)=F^{-1}(u)+1$. The circular transport algorithm of
\citet{delon2010circle}, implemented in POT \citep{flamary2021pot}, evaluates
the empirical integral using all particles with optimization tolerance
$10^{-7}$. This periodic $W_2$ is also the one-dimensional curve shown in
the sliced-distance figures for torus doubling.

\paragraph{Sliced Wasserstein distance.}
For measures on $\bbR^k$, let $P_\theta(x)=\langle\theta,x\rangle$ and
let $\sigma$ be the uniform probability measure on $\mathbb S^{k-1}$. Then
\citep{bonneel2015sliced}
\begin{equation}\label{eq:experimental_sliced_distance}
    \operatorname{SW}_2^2(\alpha,\beta)
    =\int_{\mathbb S^{k-1}}
        W_2^2((P_\theta)_\sharp\alpha,(P_\theta)_\sharp\beta)
      \,\sigma(d\theta).
\end{equation}
We approximate the angular integral by an average over $256$ independent
uniform directions and use \eqref{eq:experimental_quantile_approximation}
for every projection. Here $k=d$ for Euclidean data and $k=2d$ for torus
data embedded by $\phi$. Thus the torus sliced distance measures differences
between embedded distributions, whereas its Wasserstein and Sinkhorn
distances use the intrinsic torus geometry. In Euclidean dimension one,
both projection directions give the same value, so the sliced
calculation is precisely the one-dimensional quantile approximation above.

\paragraph{Sinkhorn approximation.}
For discrete measures $\alpha=\sum_i a_i\delta_{x_i}$ and
$\beta=\sum_j b_j\delta_{y_j}$, define
\begin{equation}\label{eq:experimental_regularized_ot}
    \operatorname{OT}_\varepsilon(\alpha,\beta)
    =\min_{\pi\in\Pi(a,b)}
       \left\{\sum_{i,j}C_{ij}\pi_{ij}
             +\varepsilon\sum_{i,j}\pi_{ij}
                    \log\frac{\pi_{ij}}{a_i b_j}\right\}.
\end{equation}
Here $\Pi(a,b)$ denotes nonnegative matrices with marginal constraints
$\sum_j\pi_{ij}=a_i$ and $\sum_i\pi_{ij}=b_j$.
The ground cost is $C_{ij}=\tfrac12\|x_i-y_j\|_2^2$ for Euclidean data and
\begin{equation}
    C_{ij}=\tfrac12\sum_{q=1}^d
       \min\{|x_{iq}-y_{jq}|,1-|x_{iq}-y_{jq}|\}^2
\end{equation}
for torus data. The particle weights are uniform. Sinkhorn iteration obtains
the regularized coupling by alternately enforcing its two marginals
\citep{cuturi2013sinkhorn}.
Writing $K_{ij}=\exp(-C_{ij}/\varepsilon)$, its scaling updates are
\begin{equation}
    u\leftarrow a/(Kv),\qquad v\leftarrow b/(K^\top u),\qquad
    \pi=\operatorname{diag}(u)K\operatorname{diag}(v),
\end{equation}
where division is componentwise. The numerical solvers use stabilized updates and anneal the regularization
parameter; the lower-dimensional solver also uses spatial multiscale acceleration.
We remove the self-interaction bias and report
\begin{align}
    S_\varepsilon(\alpha,\beta)
       &=\operatorname{OT}_\varepsilon(\alpha,\beta)
        -\tfrac12\operatorname{OT}_\varepsilon(\alpha,\alpha)
        -\tfrac12\operatorname{OT}_\varepsilon(\beta,\beta),\label{eq:experimental_sinkhorn_divergence}\\
    D_\varepsilon(\alpha,\beta)
       &=\sqrt{2\max\{S_\varepsilon(\alpha,\beta),0\}}.
       \label{eq:experimental_sinkhorn_distance}
\end{align}
The factor $2$ accounts for the half-squared ground cost
\citep{feydy2019sinkhorn}. We use GeomLoss with blur
$0.01$, hence $\varepsilon=10^{-4}$, and successive blur ratio $0.8$.
For $d=2,3$, the multiscale backend uses kernel truncation at five blur
lengths. For $d=5,10$, the online backend evaluates pairwise kernel
reductions through PyKeOps without constructing a dense transport matrix
or using spatial clustering; the multiscale truncation setting does not
apply to this backend. Both use the same ground costs and debiased
objective, and all particles contribute. For fixed discrete measures, $D_\varepsilon$ approaches
$W_2$ as $\varepsilon\downarrow0$, whereas fixed regularization and solver
accuracy can affect the slopes measured as $N$ increases. We therefore
treat $D_\varepsilon$ as a numerical surrogate and apply the finite-reference
triangle inequality only to the Wasserstein distances.

\subsection{Evaluation and Rate Fitting}
\label{app:experimental_rate_fits}

The three seed roles are $s_{\mathrm{train}}=7$,
$s_\mu\in\{0,1,2\}$, and
$s_{\mathrm{sample}}\in\{0,\ldots,R_d-1\}$, with $R_d$ specified in
Section~\ref{subsec:experimental_protocol}.
For each trained model, each input-measure seed selects one law from the
family in Appendix~\ref{app:experimental_generators}; its true output is
specified by the fixed target operator. The law is selected on the CPU
in an isolated random stream and its full-precision parameters are saved.
The particle stream is determined by the pair
$(s_\mu,s_{\mathrm{sample}})$, independently of the law-selection stream.
The implementation uses distinct SHA-256 namespaces for law selection and
particle sampling, so the repeated numeric seed labels identify different
streams. Restart state retains both distribution parameters and random
states. Changing a sampling seed never changes the selected law.

For each of the three input measures, one model reference of
$M=500{,}000$ particles is generated with reference seed
$s_{\mathrm{ref}}=s_\mu\in\{0,1,2\}$, in an isolated random stream.
Its predicted output is cached and reused by all $R_d$ sampling seeds at
all $N_i=2^{i+7}$, $i=1,\ldots,9$. Reference generation does not advance
the sampling stream; loading the cache therefore gives the same subsequent
particle draws as generating it. Each sampling replicate draws $100{,}000$
independent true-output particles, held fixed over $N$ within that replicate. Torus experiments use nested prefixes of one input ensemble;
Euclidean experiments draw a fresh input ensemble at each $N$.
Projection directions are drawn per replicate on the torus and per
replicate and particle count in Euclidean space. Both metrics and both
target comparisons use the same predicted ensembles within a replicate.

Let $e_{kri}$ denote the reference error for input-measure seed $k$,
sampling seed $r$, and particle count $N_i$, and set $x_i=\log N_i$.
We retain a separate ordinary least-squares fit with an intercept for
each sampling replicate:
\begin{equation}\label{eq:experimental_ols_exponent}
    \widehat b_{kr}
    =-\frac{\sum_i(x_i-\bar x)(\log e_{kri}-\overline{\log e_{kr}})}
            {\sum_i(x_i-\bar x)^2}.
\end{equation}
All nine particle counts receive equal weight. First average within each
measure, then summarize across the three measures:
\begin{equation}
    \widehat b_k=\frac1{R_d}\sum_{r=0}^{R_d-1}\widehat b_{kr},\qquad
    \bar b=\frac13\sum_{k=0}^2\widehat b_k,\qquad
    \operatorname{SE}(\bar b)
    =\frac1{\sqrt3}\left[\frac12\sum_{k=0}^2
                         (\widehat b_k-\bar b)^2\right]^{1/2}.
\end{equation}
Tables report $\bar b\pm\operatorname{SE}(\bar b)$, using the three
measure-level rates as the statistical units. This SE includes residual
particle-sampling variation within each rate, but excludes training
variability and systematic effects from finite particle counts, references,
and approximate metrics.

Figures use only the prespecified input-measure seed $k=0$. Their curves
are $\bar e_i=\frac1{R_d}\sum_{r=0}^{R_d-1}e_{0ri}$, with shading
$\bar e_i\pm s_{e,i}/\sqrt{R_d}$, where $s_{e,i}$ is the sample standard
deviation with denominator $R_d-1$. These sampling bands are conditional on
the one shared reference for that law; they do not estimate reference
sampling uncertainty. The same averaging is used for the
secondary true-operator comparison.
Appendix~\ref{app:experimental_sampling_repeats} compares sampling
counts in $d=1,2,3$.

For a prescribed guide $r(N)$, only its vertical position is fitted:
\begin{equation}\label{eq:experimental_guide_normalization}
    \log\widehat C=\frac1{9}\sum_{i=1}^9
       [\log\bar e_i-\log r(N_i)].
\end{equation}
The two-dimensional $W_2$ guide is $r_2(N)=(\log N/N)^{1/2}$.
Its effective power-law exponent over the displayed
grid is obtained by applying the same least-squares calculation:
\begin{equation}\label{eq:experimental_d2_effective_exponent}
    b_{2,\mathrm{grid}}
    =\frac12-\frac12
       \frac{\sum_i(x_i-\bar x)\log x_i}{\sum_i(x_i-\bar x)^2}
    =0.43814\ldots.
\end{equation}
Thus this table entry is a finite-grid summary of the logarithmically
corrected guide. The asymptotic algebraic exponent is $1/2$.
The other Wasserstein guide exponents are $1/2$ for $d=1$ and $1/d$ for
$d\geq3$; the sliced-distance guide has exponent $1/2$ in every dimension.

The fitted exponent can depend on the range of particle counts. The
saved summaries also apply the same per-repeat procedure to $N\leq8192$
and $N\geq8192$, with the same within-measure averaging and across-measure
SE. These sensitivity summaries do not estimate the reference bias itself.

At $N=65{,}536$, the benchmark ratios $r_d(M)/r_d(N)$ are approximately
$0.36$, $0.39$, $0.51$, $0.67$, and $0.82$ in dimensions
$d=1,2,3,5,10$, respectively. Although
these ratios do not measure the actual reference error, they indicate
that the reference contribution may affect the largest-$N$ observations.
The fitted exponents describe the tested finite range; assessing the
remaining reference bias requires larger-reference comparisons. Errors
relative to the analytic output also contain finite-target sampling error
and, in higher dimensions, regularization or projection error.

\subsection{Scope of the Comparisons}
\label{app:experimental_theory_scope}

Theorem~\ref{thm:transformer_stack_limit} transfers an input empirical
Wasserstein rate to the fixed transformer, up to the additional
$N^{-1/2}$ term. It does not supply a distribution-free sharp rate for
$\bbE W_2(\mu^N,\mu)$. The rates in
\eqref{eq:experimental_dimension_rates} are regular-distribution benchmarks;
compact-manifold results such as \citet{borda2023empirical} require density
assumptions, and unbounded laws can introduce further tail corrections.
For the sliced rate, a sufficient regularity condition is
\begin{equation}\label{eq:experimental_sliced_regularity}
    \int_{\mathbb S^{k-1}}\int_{\bbR}
       \frac{F_\theta(t)(1-F_\theta(t))}{f_\theta(t)}\,dt\,
       \sigma(d\theta)<\infty,
\end{equation}
where $F_\theta$ and $f_\theta$ are the distribution function and density
of the projected law. The integrand is defined as zero when both numerator
and denominator vanish, and as $+\infty$ when only the denominator vanishes.
Under this condition, the empirical $\operatorname{SW}_2$ rate is $N^{-1/2}$
\citep{manole2022minimax}. Sharp rates can include logarithmic corrections
when the condition fails, as for unbounded Gaussian projections
\citep{nietert2022statistical}. This condition is not asserted for all
learned output laws. A Wasserstein bound alone does not give the sharper
$N^{-1/2}$ sliced rate in higher dimensions.

The compact torus input avoids tail assumptions. To transfer the compact
stability estimate through its angle readout, the predicted coordinate pairs
must remain away from zero. For ordinary dot-product attention with a Gaussian
law after the input projection, both smallness conditions in
\eqref{eq:self_small_theta_sufficient} are sufficient for the noncompact
theorem. Training does not enforce these inequalities. The cubic family
satisfies both input conditions, as verified below. Strong sub-Gaussianity
still requires separate verification for unequal-covariance Gaussian
mixtures. The Gaussian smallness criterion for bilinear scores does not directly
apply to the query--key-normalized Gaussian-tilting variants described above.
These experiments provide evidence about the trained maps without
certifying every remaining hypothesis for every input family.

\paragraph{Strong sub-Gaussianity of the cubic family.}
\label{app:experimental_cubic_ssg}
For the one-dimensional potential $V(z)=|z|^3$, set
\[
 W(z)=\begin{cases}\tfrac32z^2-\tfrac12,&|z|\leq1,\\
                    |z|^3,&|z|>1.\end{cases}
\]
Then $W$ is continuously differentiable, $W''\geq3$ almost everywhere,
and $0\leq V-W\leq1/2$. Under any linear tilt $t$, the law
$\nu_t(dz)\propto e^{-W(z)+tz}dz$ has variance at most $1/3$:
integration by parts gives $\operatorname{Cov}_{\nu_t}(Z,W'(Z))=1$,
while strong convexity gives
$\operatorname{Cov}_{\nu_t}(Z,W'(Z))\geq3\operatorname{Var}_{\nu_t}(Z)$.
For $\mu_t(dz)\propto e^{-V(z)+tz}dz$, the density ratio
$d\mu_t/d\nu_t\leq e^{1/2}$. Centering at the mean of $\nu_t$ therefore
gives $\operatorname{Var}_{\mu_t}(Z)\leq e^{1/2}/3$, uniformly in $t$.
Since this variance is the second derivative of the log moment-generating
function, the cubic law is strong sub-Gaussian. Independence and the affine
map $X=m+C\xi$ give the bound
\[
    \|\nabla^2\psi_X(s)\|_2\leq(e^{1/2}/3)\|C\|_2^2
    \quad\text{for every }s.
\]
The faster-than-Gaussian tail condition is immediate from the cubic
potential, and both properties are preserved by the affine input lift.
The same proof applies to the random-input construction above: adding
$Az^2/2+Bz$, $A\geq0$, increases the convexity bound to $3+A$ and leaves
the bounded difference $V-W$ unchanged. Thus these particular noncompact
inputs meet the theorem's input hypotheses without matrix smallness.

\subsection{Sensitivity to Sinkhorn Settings}
\label{app:experimental_sinkhorn_ablation}

We perform a paired ablation using the same cubic-tilting model in $d=3$
and input-measure seed $0$ as in the main experiment. Holding the selected law
fixed, we use sampling seed $20260916$ to generate one learned reference
with $M=16{,}384$ particles and three independent input resamplings at each
$N\in\{256,512,1024,2048,4096\}$. All seven metric settings use exactly
the same saved model outputs and the same reference. Inference uses
full native attention in single precision, without autocast or TF32.
The baseline uses blur $0.01$, annealing ratio $0.8$, and truncation at
five blur lengths. Each alternative changes one of these settings.
Pairing the outputs isolates changes due to the distance calculation;
the smaller grid and reference give different finite-range exponents
from those in the main experiment.

\begin{table}[htbp]
\centering
\caption{Paired Sinkhorn ablation for cubic tilting in $d=3$ and
input-measure seed $0$. Exponents are mean $\pm$ sample standard deviation
of three per-realization fits. The last column gives the maximum absolute
relative change from the baseline over all $15$ paired errors. All settings
use the multiscale backend and the same debiased distance.}
\label{tab:experimental_sinkhorn_ablation}
\footnotesize
\begin{tabular}{lrrrrr}
\toprule
Setting & Blur & Scaling & Truncate & Exponent & Max. error change\\
\midrule
Baseline & $0.01$ & $0.8$ & $5$ & $0.3075 \pm 0.0084$ & $0.000\%$\\
Smaller blur & $0.005$ & $0.8$ & $5$ & $0.3069 \pm 0.0083$ & $0.397\%$\\
Larger blur & $0.02$ & $0.8$ & $5$ & $0.3106 \pm 0.0086$ & $1.672\%$\\
Largest blur & $0.05$ & $0.8$ & $5$ & $0.3334 \pm 0.0098$ & $12.080\%$\\
Slower annealing & $0.01$ & $0.9$ & $5$ & $0.3049 \pm 0.0084$ & $1.210\%$\\
Shorter truncation & $0.01$ & $0.8$ & $3$ & $0.3075 \pm 0.0084$ & $<0.001\%$\\
Longer truncation & $0.01$ & $0.8$ & $8$ & $0.3075 \pm 0.0084$ & $<0.001\%$\\
\bottomrule
\end{tabular}
\end{table}

Changing blur from $0.01$ to $0.005$ or $0.02$ gives exponents
$0.3069$ and $0.3106$, compared with the baseline $0.3075$, and changes
individual errors by at most $0.40\%$ and $1.67\%$, respectively.
The larger blur $0.05$ changes errors by up to $12.08\%$ and increases
the exponent to $0.3334$. Increasing the annealing ratio from $0.8$
to $0.9$ gives $0.3049$, with at most $1.21\%$ change in error.
Changing truncation from five blur lengths to three or eight has an
effect below $0.001\%$ in this example. For this model and input law,
halving or doubling the baseline blur therefore has a modest effect on
the measured decay, while stronger regularization changes both the
distances and their fitted exponent. The comparison is conditional on
one learned reference and does not estimate the difference between
Sinkhorn and unregularized $W_2$.

\begingroup
\makeatletter
\setlength{\@fpsep}{8pt}
\makeatother
\begin{figure}[!htbp]
    \centering
    \captionsetup{skip=5pt}
    \includegraphics[width=\textwidth,trim=0bp 12bp 0bp 8bp,clip]{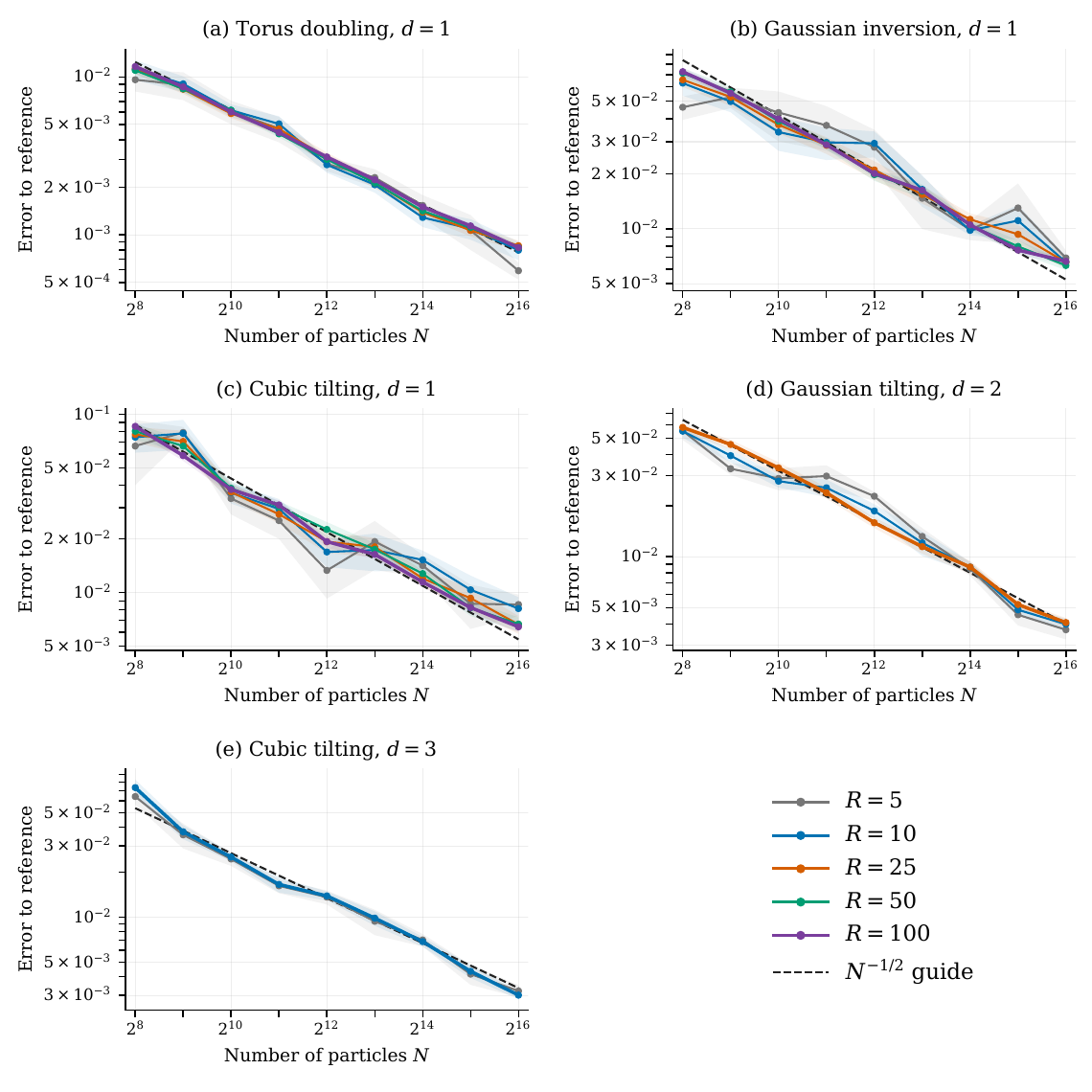}
    \caption{Mean reference errors with one sampling standard error for
    $R=5,10,25,50,100$ in panels (a)--(c), $R=5,10,25$ in panel (d), and
    $R=5,10$ in panel (e). Dashed $N^{-1/2}$ guides are fitted vertically
    to the curve with the largest $R$.}
    \label{fig:experimental_sampling_repeats}
\end{figure}

\subsection{Effect of the Number of Sampling Replicates}
\label{app:experimental_sampling_repeats}

Figure~\ref{fig:experimental_sampling_repeats} compares torus doubling,
Gaussian inversion, and cubic tilting in $d=1$ with
$R\in\{5,10,25,50,100\}$, Gaussian tilting in $d=2$ with
$R\in\{5,10,25\}$, and cubic tilting in $d=3$ with $R\in\{5,10\}$.
Each comparison uses input-measure seed $0$, particle counts
$N=2^8,\ldots,2^{16}$, and a fixed trained model and
$M=500{,}000$-particle learned reference. The metrics are the one-dimensional Wasserstein distances for $d=1$
and $\mathrm{SW}_2$ for $d=2,3$, as specified in
Subsection~\ref{subsec:experimental_protocol}.

For reference error $e_r(N)$ in realization $r$, the first $R$ sampling
seeds give
\[
    \overline e_R(N)=\frac1R\sum_{r=0}^{R-1}e_r(N),\qquad
    \operatorname{SE}_R(N)=
    \frac{1}{\sqrt R}
    \left[\frac1{R-1}\sum_{r=0}^{R-1}
    \bigl(e_r(N)-\overline e_R(N)\bigr)^2\right]^{1/2}.
\]
The average estimates the expected numerical reference error. Its
standard error is conditional on the input law, trained model, and
reference, and includes randomized projections for $\mathrm{SW}_2$.
We summarize relative uncertainty by the median of
$\operatorname{SE}_R(N)/\overline e_R(N)$ over the nine particle counts.

In $d=1$, increasing $R$ from $5$ to $100$ reduces the median relative
standard error from $16.2\%$ to $3.6\%$, from $24.5\%$ to $4.9\%$, and
from $21.0\%$ to $6.6\%$ for doubling, Gaussian inversion, and cubic tilting,
respectively. The corresponding changes are $12.0\%$ to $7.0\%$ for
Gaussian tilting in $d=2$ ($R=5$ to $25$) and $10.7\%$ to $9.8\%$ for
cubic tilting in $d=3$ ($R=5$ to $10$). Five realizations already capture
the decay with $N$; larger sampling budgets reduce fluctuations in the
estimated mean.

The estimates for different $R$ share realizations, and their standard
errors need not decrease at every increment of $R$.
\endgroup

\bibliography{references}

\end{document}